\documentclass{article}

\usepackage{microtype}
\usepackage{graphicx}
\usepackage{subcaption}
\usepackage{multirow}
\usepackage[table,xcdraw]{xcolor}
\usepackage[dvipsnames]{xcolor}
\usepackage[normalem]{ulem}
\useunder{\uline}{\ul}{}
\usepackage{booktabs} % for professional tables
\usepackage{enumitem}

\usepackage{hyperref}
\usepackage{pifont}

\usepackage[accepted]{icml2026}

\usepackage{amsmath}
\usepackage{amssymb}
\usepackage{mathtools}
\usepackage{amsthm}

\usepackage[capitalize,noabbrev]{cleveref}

\theoremstyle{plain}
\newtheorem{theorem}{Theorem}[section]

\newtheorem{lemma}[theorem]{Lemma}

\theoremstyle{definition}

\newtheorem{assumption}[theorem]{Assumption}
\theoremstyle{remark}

\usepackage[dvipsnames]{xcolor}

\usepackage[textsize=tiny]{todonotes}

\icmltitlerunning{Z-Erase: Enabling Concept Erasure in Single-Stream Diffusion Transformers}

\definecolor{darkyellow}{RGB}{255,204,0}

\begin{document}

\twocolumn[
  \icmltitle{Z-Erase: Enabling Concept Erasure in Single-Stream Diffusion Transformers}

  % It is OKAY to include author information, even for blind submissions: the
  % style file will automatically remove it for you unless you've provided
  % the [accepted] option to the icml2026 package.

  % List of affiliations: The first argument should be a (short) identifier you
  % will use later to specify author affiliations Academic affiliations
  % should list Department, University, City, Region, Country Industry
  % affiliations should list Company, City, Region, Country

  % You can specify symbols, otherwise they are numbered in order. Ideally, you
  % should not use this facility. Affiliations will be numbered in order of
  % appearance and this is the preferred way.
  \icmlsetsymbol{equal}{*}

  \begin{icmlauthorlist}
    \icmlauthor{Nanxiang Jiang}{Beihang}
    \icmlauthor{Zhaoxin Fan}{Beihang}
    \icmlauthor{Baisen Wang}{Beihang}
    \icmlauthor{Daiheng Gao}{USTC}
    \icmlauthor{Junhang Cheng}{Beihang}
    \icmlauthor{Jifeng Guo}{Beihang}
    \icmlauthor{Yalan Qin}{Shanghai}
    \icmlauthor{Yeying Jin}{Tencent}
    \icmlauthor{Hongwei Zheng}{BAEC}
    \icmlauthor{Faguo Wu}{Beihang}
    \icmlauthor{Wenjun Wu}{Beihang}
  \end{icmlauthorlist}

  \icmlaffiliation{Beihang}{Beijing Advanced Innovation Center for Future Blockchain and Privacy Computing, School of Artificial Intelligence, Beihang University.}
  \icmlaffiliation{USTC}{University of Science and Technology of China}
  \icmlaffiliation{Shanghai}{Shanghai University}
  \icmlaffiliation{Tencent}{Tencent}
  \icmlaffiliation{BAEC}{Beijing Academy of Blockchain and Edge Computing}

  \icmlcorrespondingauthor{Zhaoxin Fan}{zhaoxinf@buaa.edu.cn}

  \icmlcorrespondingauthor{Faguo Wu}{faguo@buaa.edu.cn}

  % You may provide any keywords that you find helpful for describing your
  % paper; these are used to populate the "keywords" metadata in the PDF but
  % will not be shown in the document
  \icmlkeywords{Machine Learning, ICML}

  \vskip 0.3in
]

% this must go after the closing bracket ] following \twocolumn[ ...

% This command actually creates the footnote in the first column listing the
% affiliations and the copyright notice. The command takes one argument, which
% is text to display at the start of the footnote. The \icmlEqualContribution
% command is standard text for equal contribution. Remove it (just {}) if you
% do not need this facility.

% Use ONE of the following lines. DO NOT remove the command.
% If you have no special notice, KEEP empty braces:
\printAffiliationsAndNotice{}  % no special notice (required even if empty)
% Or, if applicable, use the standard equal contribution text:
% \printAffiliationsAndNotice{\icmlEqualContribution}

\begin{abstract}
Concept erasure is a vital safety mechanism for removing unwanted concepts from text-to-image (T2I) models. While extensively studied in U-Net and dual-stream architectures (\textit{e.g.}, Flux), this task remains under-explored in the recent emerging paradigm of single-stream diffusion transformers (\textit{e.g.}, Z-Image). In this new paradigm, text and image tokens are processed as a single unified sequence via shared parameters. Consequently, directly applying prior erasure methods typically leads to generation collapse. To bridge this gap, we introduce \textbf{Z-Erase}, the first concept erasure method tailored for single-stream T2I models. To guarantee stable image generation, Z-Erase first proposes a \textit{Stream Disentangled Concept Erasure Framework} that decouples updates and enables existing methods on single-stream models. Subsequently, within this framework, we introduce \textit{Lagrangian-Guided Adaptive Erasure Modulation}, a constrained algorithm that further balances the sensitive erasure-preservation trade-off. Moreover, we provide a rigorous convergence analysis proving that Z-Erase can converge to a Pareto stationary point. Experiments demonstrate that Z-Erase successfully overcomes the generation collapse issue, achieving state-of-the-art performance across a wide range of tasks. Code is available at: \url{https://github.com/nxjiang-jnx/Z-Erase}.
\end{abstract}

\section{Introduction}

The landscape of text-to-image (T2I) generation has evolved at breakneck speed, rapidly transitioning from the foundational U-Net~\cite{sd} architectures to the scalable era of Diffusion Transformers~\cite{DiT}. Recently, this evolution has reached a new pinnacle with the emergence of single-stream diffusion transformers, a unified framework exemplified by state-of-the-art models like Z-Image~\cite{zimage} and HunyuanImage-3.0~\cite{hunyuanimage3}. Unlike traditional dual-stream models (\textit{e.g.,} Flux~\cite{flux1kontext}) that process text and pixels separately, this new paradigm fundamentally treats visual and textual data as a single, unified sequence handled by a monolithic transformer backbone~\cite{zimage}. The efficiency of this architectural grand unification is remarkable: Z-Image Turbo, for instance, generates stunningly photorealistic imagery with only 6B parameters, signaling that the recently emerging pure single-stream framework is not merely an architectural variant, but the blueprint shaping the next generation of foundational models.

\begin{figure}[t]
	\centering
	\includegraphics[width=0.9\linewidth]{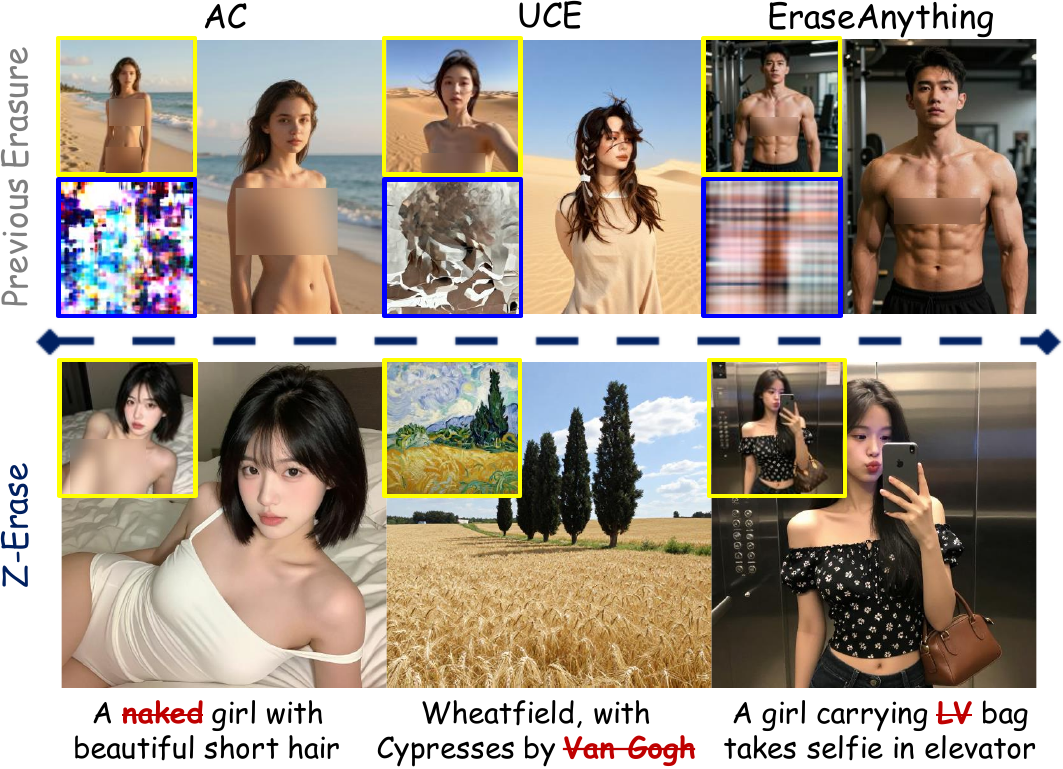}
    \caption{We introduce \textbf{Z-Erase}, a concept erasure method tailored for single-stream models. \textit{First row}: Prior methods adapted via our \emph{Stream Disentangled Concept Erasure Framework} are tested with the ``nudity" concept, showing under-erasure (AC, EraseAnything) or severe artifacts (UCE). \textcolor{blue}{Blue} boxes reveal the generation collapse caused by naive fine-tuning without our adaptation. \textit{Second row}: Z-Erase effectively removes target concepts while preserving quality. Original outputs are in \textcolor{darkyellow}{yellow} boxes. Sensitive content pixelated.}
    \label{fig:teaser}
    \vspace{-13pt}
\end{figure}

However, this generative prowess comes with significant risks. If the training data is not carefully filtered, foundational models can effortlessly absorb and reproduce copyrighted entities, Not-Safe-For-Work (NSFW) material, and biased imagery~\cite{barez2025openproblemsmachineunlearning}. As these models gain broader adoption, the societal risks of such unrestricted generation become acute. In response, concept erasure has emerged as a vital safety alignment strategy~\cite{ca, esd}. Instead of re-training the entire model, concept erasure selectively removes undesired target concepts while maintaining the model's utility and performance on normal content.

While concept erasure is well-established for Stable Diffusion (SD)~\cite{sd} and Flux~\cite{flux1kontext} series, we find that directly transferring these methods to pure single-stream models leads to \emph{consistent generation collapse}, as shown in Fig.~\ref{fig:teaser} (\textcolor{blue}{blue} boxes). Through careful analysis, we trace the failure to the unified single-stream architecture, as detailed in Section~\ref{sec:motivation}. Unlike previous models that isolate text-image interactions within separate modules, single-stream transformers fuse both modalities into a unified self-attention mechanism with shared weights~\cite{zimage}. Consequently, optimizing shared parameters to suppress text concepts will inevitably damage visual backbone's synthesis capability, leading to catastrophic noise outputs.

To tackle this issue, we introduce \textbf{Z-Erase}, a concept erasure method tailored for single-stream models. First, to enable existing erasure methods on single-stream models, we propose \textbf{Stream Disentangled Concept Erasure Framework}, a structural intervention that decouples parameter updates in single-stream models. By freezing the visual processing pathway while allowing low-rank adaptations (LoRAs)~\cite{lora} only on textual hidden states, we create a safe optimization subspace that protects the image generation backbone. Within this subspace, prior erasure methods can operate on single-stream models.

However, even with this structural fix, we find that the unified attention space of single-stream models remains highly sensitive, making it difficult to balance erasure of target concept against preservation of unrelated content, as shown in Fig.~\ref{fig:teaser} (first row). To address this, we further introduce a \textbf{Lagrangian-Guided Adaptive Erasure Modulation} algorithm, which solves this trade-off as a constrained optimization problem by dynamically maximizing erasure only when the preservation loss remains within a strict tolerance. In addition to the algorithm design, we also provide a rigorous theoretical analysis proving that our algorithm can converge to a Pareto stationary point~\cite{yu2020gradientsurgerymultitasklearning}. Experiments demonstrate that Z-Erase successfully solves this optimization dilemma, ensuring robust ethical safety with minimal and controllable degradation in utility.

To the best of our knowledge, Z-Erase is the first effective concept erasure method tailored for the emerging single-stream T2I paradigm. Our main contributions are:
\begin{itemize}[leftmargin=*]
\item \textbf{Single-stream attention localization}. We identify that generation collapse in single-stream models stems from shared projection weights. Beyond this, we further find that attention maps allow precise token-level localization, enabling selective erasure of targeted content.
\item \textbf{Stream disentangled concept erasure framework}. We propose a structural intervention for single-stream models. By injecting learnable adaptations exclusively into textual hidden states while freezing the visual backbone, we construct a safe optimization subspace that enables existing erasure methods to work on single-stream models.
\item \textbf{Lagrangian-Guided Adaptive Erasure Modulation}. To resolve the sensitive trade-off between erasure and preservation, we introduce a dynamic algorithm building upon this framework that maximizes erasure within a strict utility tolerance, and provide a rigorous convergence guarantee to a Pareto stationary point.
\end{itemize}
\section{Related Work}

% 1223
\textbf{Generative diffusion and flow transformers.}
The evolution of generative models has progressed rapidly, moving from the foundational U-Net~\cite{unet} architectures of DDPM~\cite{ddpm} and DDIM~\cite{ddim} to the scalable era of Diffusion Transformers (DiTs)~\cite{DiT} and flow matching~\cite{flowmatching}. Dominant T2I models (\textit{e.g.}, Flux~\cite{flux1kontext}, SD3~\cite{sd3}, LongCat~\cite{LongCat}) establish the standard by using dual-stream architectures, which process text and visual data separately before fusing them for image generation. Recently, however, Z-Image~\cite{zimage} has broken this standard by introducing a pure single-stream paradigm. By concatenating text and image tokens at the sequence level to serve as a unified input stream, Z-Image achieves strong performance with remarkable efficiency: ranking among the top open-source models in recent ELO leaderboards with only 6B parameters. Recognizing this shift, we conduct our main experiments on Z-Image to study concept erasure and safety in the next generation of foundational models.

\textbf{Concept erasure.}
As generative models improve, safety concerns become increasingly important. Concept erasure offers a practical way to mitigate risks like NSFW content and copyright infringement, serving as a more efficient alternative to pre-training filtering~\cite{sd} or post-generation filtering~\cite{rando2022red}. The field matures rapidly on noise-prediction U-Nets. For example, AC~\cite{ca} and ESD~\cite{esd} fine-tune cross-attention modules by aligning predicted noise via MSE. FMN~\cite{FMN} minimizes attention activations associated with target concept text tokens. UCE~\cite{uce} solves a closed-form optimization of the text projection matrix. MACE~\cite{lu2024mace} scales the capability to handle multiple concepts simultaneously. Subsequent works~\cite{eap, eupmu, kim2025surveyconcepterasure} leverage LoRA and adversarial training to improve erasure efficiency while balancing irrelevant concept preservation. 
Recently works like EraseAnything~\cite{eraseanything} and MCE~\cite{mce} successfully adapt concept erasure to Flux, tuning the separate prompt branches within its dual-stream structure. 
However, these methods cannot be directly applied to pure single-stream models like Z-Image, which differ fundamentally in architecture and generation dynamics. In this work, we identify the critical obstacles in adapting concept erasure to this unified paradigm and propose the first effective erasure method designed for single-stream transformers.
\section{Obstacles in Migrating Concept Erasure to Single-Stream Models}

\label{sec:motivation}
% 0113
\begin{figure}[t]
	\centering
	\includegraphics[width=\linewidth]{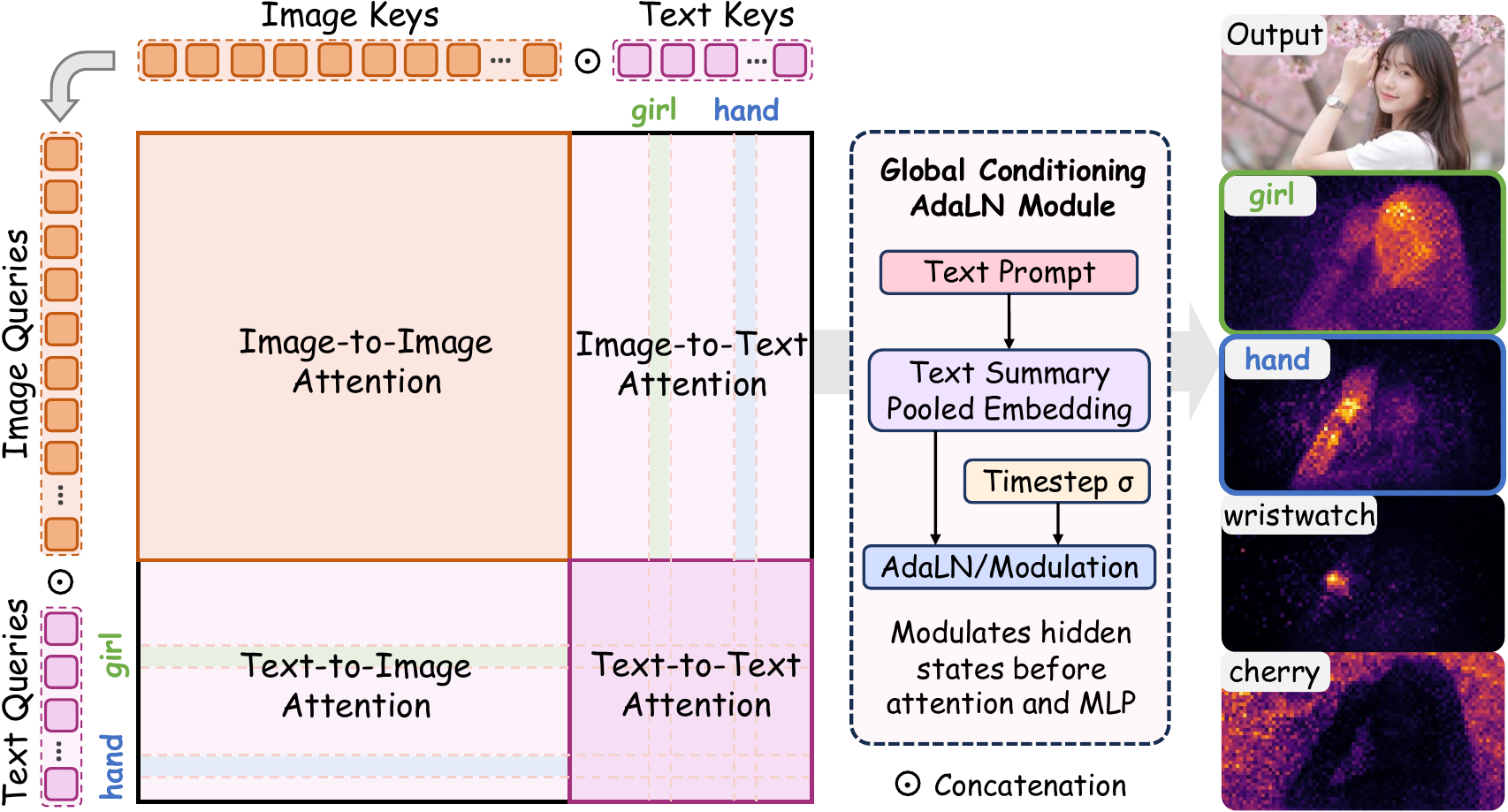}
    \caption{\textbf{Single-stream attention analysis.} Given the prompt \textit{``A girl with a wristwatch on her hand amid cherry blossoms"}, the attention maps reveal distinct localized responses for text tokens.}
    \label{fig:single_stream_attn}
    \vspace{-10pt}
\end{figure}

% 1223
In this section, we analyze why concept erasure methods designed for SD and dual-stream models (\textit{e.g.}, Flux) fail when applied directly to single-stream architectures. Specifically, we highlight several fundamental obstacles, including the absence of explicit cross-attention mechanisms, the strong coupling between textual and visual representations induced by shared self-attention, and the limited robustness of naive localization-based strategies.

\textbf{Absence of explicit cross-attention:}
The first key challenge is single-stream models like Z-Image lack explicit cross-attention layers (see Appendix~\ref{appendix: Single-Stream Transformer Architecture} for detailed architecture). As a result, methods such as ESD, UCE, and MACE, which originally optimize cross-attention parameters should be redesigned to adapt the new architecture.

This observation naturally raises a question: \emph{Although explicit cross-attention does not exist, does single-stream transformers exhibit a similar pattern between text and image tokens?} Motivated by this possibility, we conduct an in-depth examination of
the neurons and features within the single-stream architecture.

% 0113
\begin{figure}[t]
	\centering
	\includegraphics[width=\linewidth]{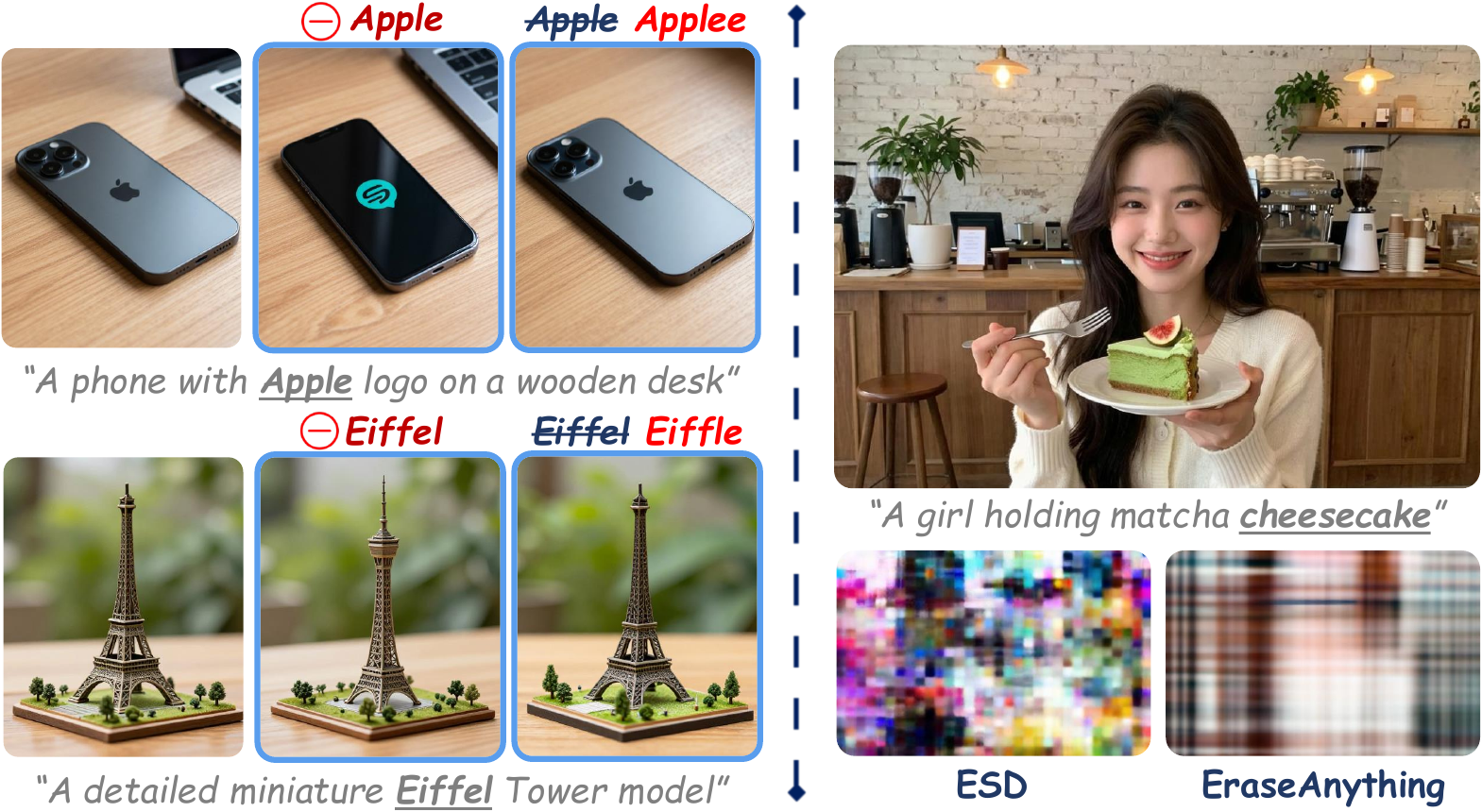}
    \caption{\textit{Left:} \textbf{Attention erasure is brittle.} Concept can be erased by zeroing its attention columns $\mathbf{A}_{attn}[:, :, idx] = 0$ once the target token is localized from the prompt. However, this trick shows poor robustness to real-world prompt variations. \textit{Right:} \textbf{Naive fine-tuning collapses.} Directly fine-tuning shared projections in single-stream models collapses generation with noisy outputs.}
    \label{fig:token_zeroing}
    \vspace{-5pt}
\end{figure}

% dim = 0 删除
\textbf{Single-stream attention:}
We ultimately find that the similar pattern of text–image interactions is encoded through self-attention. Specifically, in modern single-stream models (like Z-Image), visual and textual information are not processed by separate encoders but are instead treated as a single unified sequence. Let $\mathbf{H} \in \mathbb{R}^{(n_I + n_T) \times d}$ denote the input hidden states. The sequence concatenation is defined as:
\begin{equation}
    \mathbf{H} = \mathtt{concat}(\mathbf{H}_{img}, \mathbf{H}_{txt}) \in \mathbb{R}^{(n_I + n_T) \times d},
\end{equation}
where $\mathbf{H}_{img} \in \mathbb{R}^{n_I \times d}$ and $\mathbf{H}_{txt} \in \mathbb{R}^{n_T \times d}$. Unlike dual-stream architectures, which employ separate projection matrices for vision and language,  single-stream models use shared weights $\mathbf{W}_Q, \mathbf{W}_K, \mathbf{W}_V \in \mathbb{R}^{d \times d_k}$ for both modalities. The query, key, and value matrices are computed as:
\begin{equation}
\begin{aligned}
    &\mathbf{Q} = \mathbf{H} \mathbf{W}_Q = \begin{bmatrix} \mathbf{H}_{img}\mathbf{W}_Q \\ \mathbf{H}_{txt}\mathbf{W}_Q \end{bmatrix} = \begin{bmatrix} \mathbf{Q}_{img} \\ \mathbf{Q}_{txt} \end{bmatrix}, \\
    &\mathbf{K} = \mathbf{H} \mathbf{W}_K = \begin{bmatrix} \mathbf{H}_{img}\mathbf{W}_K \\ \mathbf{H}_{txt}\mathbf{W}_K \end{bmatrix} = \begin{bmatrix} \mathbf{K}_{img} \\ \mathbf{K}_{txt} \end{bmatrix}, \\
    &\mathbf{V} = \mathbf{H} \mathbf{W}_V = \begin{bmatrix} \mathbf{H}_{img}\mathbf{W}_V \\ \mathbf{H}_{txt}\mathbf{W}_V \end{bmatrix} = \begin{bmatrix} \mathbf{V}_{img} \\ \mathbf{V}_{txt} \end{bmatrix}.
\end{aligned}
\end{equation}
The self-attention operation $\mathbf{A}_{attn} = \mathtt{Softmax}(\frac{\textbf{Q}\textbf{K}^\top}{\sqrt{d_k}})$ then yields a $2 \times 2$ block matrix representing intra- and inter-modality interactions:
\begin{equation}
\begin{aligned}
    &\mathbf{A}_{attn}
    =
    \mathtt{Softmax}\!\left( \frac{1}{\sqrt{d_k}} 
    \begin{bmatrix}
    \underbrace{\mathbf{Q}_{img}\mathbf{K}_{img}^\top}_{\mathbf{A}_{I \leftarrow I}}
    &
    \underbrace{\mathbf{Q}_{img}\mathbf{K}_{txt}^\top}_{\mathbf{A}_{I \leftarrow T}}
    \\[6pt]
    \underbrace{\mathbf{Q}_{txt}\mathbf{K}_{img}^\top}_{\mathbf{A}_{T \leftarrow I}}
    &
    \underbrace{\mathbf{Q}_{txt}\mathbf{K}_{txt}^\top}_{\mathbf{A}_{T \leftarrow T}}
    \end{bmatrix}\right) .
\end{aligned}
\end{equation}
Here, the cross-terms $\mathbf{A}_{I \leftarrow T}$ and $\mathbf{A}_{T \leftarrow I}$ are intrinsic to the self-attention mechanism, creating \emph{a tight coupling between image generation and text conditioning}, as shown in Fig.~\ref{fig:single_stream_attn}.

% 0113
\textbf{Attention intervention is effective yet brittle:} 
The above analysis suggests that the interaction between text and image is directly formed within $\mathbf{A}_{attn}$: if we locate the target token in the prompt and zero out its attention column, the concept can be removed rather easily. However, this simple trick breaks once the prompt is even slightly altered—adding prefixes/suffixes (\textcolor[rgb]{0.1, 0.2, 0.4}{\textbf{Apple}} $\rightarrow$ \textcolor{red}{\textbf{Applee}}) or intentional misspellings (\textcolor[rgb]{0.1, 0.2, 0.4}{\textbf{Eiffel}} $\rightarrow$ \textcolor{red}{\textbf{Eiffle}}) is enough to bypass the erasure and still generate the concept, as shown in Fig.~\ref{fig:token_zeroing} (left). In other words, \emph{deleting a single attention column is brittle and not robust to real-world prompt variation}, which motivates approaches that modify the model itself through fine-tuning.

\textbf{Naive fine-tuning fails:}
However, we find that directly applying the fine-tuning concept erasure methods such as ESD (for SD v1.5) or EraseAnything (for Flux) on single-stream models consistently causes generation collapse, as shown in Fig.~\ref{fig:token_zeroing} (right). The failure stems from the shared projection weights $\mathbf{W}_Q,\mathbf{W}_K,\mathbf{W}_V$, which jointly serve text conditioning and image synthesis. Therefore, suppressing a textual concept will inevitably perturb the image token pathway through shared self-attention, making controlled concept erasure infeasible. These observations indicate that single-stream models require a dedicated erasure framework that respects this architectural coupling.

\section{Method}

% 0114
\textbf{Overview.} 
To solve the obstacles identified above, we present Z-Erase. First, we introduce the \emph{Stream Disentangled Concept Erasure Framework} for single-stream models. This structural mechanism provides a safe subspace for our erasure and preservation losses, and enables previous erasure methods to function effectively in the single-stream setting. Second, to handle the delicate trade-off between erasure and preservation, we propose a \emph{Lagrangian-Guided Adaptive Erasure Modulation} algorithm based on this framework. Instead of using fixed weights, this novel algorithm dynamically adjusts erasure strength to strictly guarantee utility. Together, these two key parts make stable and effective erasure possible on single-stream models. Next, we introduce each in detail.

% 0114
\subsection{Stream Disentangled Concept Erasure Framework}

As analyzed in Section~\ref{sec:motivation}, the core challenge in single-stream models is the strict coupling of text and image processing: directly fine-tuning shared weights to erase a text concept will unavoidably distort the visual signal, causing generation collapse. To resolve this architectural bottleneck, we propose the \emph{Stream Disentangled Concept Erasure Framework}, which is designed to structurally isolate the parameter update trajectory.

Specifically, we first control which modality receives updates. Let $\mathbf{S}_T = \mathtt{diag}(\mathbf{0}_{n_I}, \mathbf{I}_{n_T}) \in \mathbb{R}^{(n_I + n_T) \times (n_I + n_T)}$ be a token-wise selection operator that acts as a binary gate. This operator assigns zeros to image tokens and ones to text tokens, creating a physical barrier that restricts low-rank updates $\Delta \mathbf{W}$ exclusively to textual hidden states while bypassing the visual stream.

Formally, we apply this gating to the linear projection layers. For weights $\mathbf{W} \in \{\mathbf{W}_Q, \mathbf{W}_K, \mathbf{W}_V\}$, the modified forward pass is defined as $\mathbf{H}' = \mathbf{H} \mathbf{W} + \mathbf{S}_T \mathbf{H} (\Delta \mathbf{W})$. By factoring the hidden states $\mathbf{H}$ into visual and textual segments, this yields a clean separation in the adaptation dynamics:
\begin{equation}
\begin{aligned}
\begin{bmatrix} \mathbf{H}_{img}' \\ \mathbf{H}_{txt}' \end{bmatrix}
=
\begin{bmatrix} \mathbf{H}_{img} \\ \mathbf{H}_{txt} \end{bmatrix}
\begin{bmatrix}
\underbrace{\mathbf{W}}_{\text{Frozen}} 
& \raisebox{-5pt}{$\mathbf{O}$}\\[2pt]
\raisebox{-5pt}{$\mathbf{O}$}
& \underbrace{\mathbf{W} + \Delta \mathbf{W}}_{\text{Trainable, Concept Erasure}}
\end{bmatrix}.
\end{aligned}
\label{eq: Stream Disentangled Concept Erasure Framework}
\end{equation}
This design constructs a safe optimization subspace for concept erasure in single-stream models. The key idea is to restrict gradient updates to the textual hidden states $\mathbf{H}_{txt}$ while keeping the pixel-generation backbone $\mathbf{H}_{img}$ frozen. This isolation is critical, because the projection weights jointly govern both semantic understanding and image synthesis. Without such decoupling, erasure gradients inevitably perturb the shared attention pathways and cause the generation collapse observed in naive fine-tuning. With this subspace in place, previous erasure methods become operational again in the unified single-stream architecture.

Within this safe subspace, we implement concept erasure as an adversarial fine-tuning process. Our goal is to suppress a target concept set $\mathcal{D}_{er}$, while anchoring the model on an unrelated concept set $\mathcal{D}_{pr}$ to ensure general utility remains unaffected. The two objects are defined as:

\textbf{Erasure objective.}
Motivated by the negative guidance idea introduced by ESD~\cite{esd}, we start from direct trajectory intervention in flow matching. The first sub-loss in our Z-Erase aims to suppress the visual manifestation of the target concept during the denoising trajectory. Specifically, the erase loss is defined as:
\begin{equation}
\label{eq: L erase}
\begin{aligned}
&\mathcal{L}_{erase} 
     = \mathbb{E}_{x_t, t, c_{er}\sim\mathcal{D}_{er}} \Big\| v_{\theta+\Delta\theta}(x_t,c_{er},t) - \\
    & \big[ v_{\theta}(x_t, \varnothing, t) -\eta\big(v_{\theta}(x_t,c_{er},t)-v_{\theta}(x_t, \varnothing, t)\big) \big] \Big\|_2^2.
\end{aligned}
\end{equation}
Here, $\eta$ is the negative guidance factor and directly influences how aggressively the concept is erased. $\theta$ denotes the pretrained model parameters and $\Delta\theta$ is the learnable LoRA weights for erasure. $x_t$ denotes the latent at timestep $t$, starting from random noise $x_T$ where $T$ is the total timesteps. $v_{\theta}(x_t, \varnothing, t)$ corresponds to the unconditional velocity prediction (with empty prompt), while $c_{er} \in \mathcal{D}_{er}$ identifies the concept to erase. In flow matching, $v$ represents the velocity field guiding the trajectory between distributions, which makes it a natural site for concept-level intervention.

Furthermore, building on the \emph{single-stream attention} insights discussed in Section~\ref{sec:motivation}, we further reduce the influence of the erased concept by suppressing the attention assigned to its localized keywords:
\begin{equation}
\label{eq: L attn}
    \mathcal{L}_{attn} = \sum_{idx = start}^{end} \textbf{A}_{attn}[:,:,idx],
\end{equation}
To avoid overfitting to fixed token positions, we also randomly shuffle the word order during training, keeping the index range dynamic while preserving the original meaning.

Together, these two terms form the erasure objective:
\begin{equation}
    \mathcal{L}_{er} = \mathcal{L}_{erase} + \mathcal{L}_{attn}.
\end{equation}

\begin{figure}[t]
	\centering
	\includegraphics[width=0.67\linewidth]{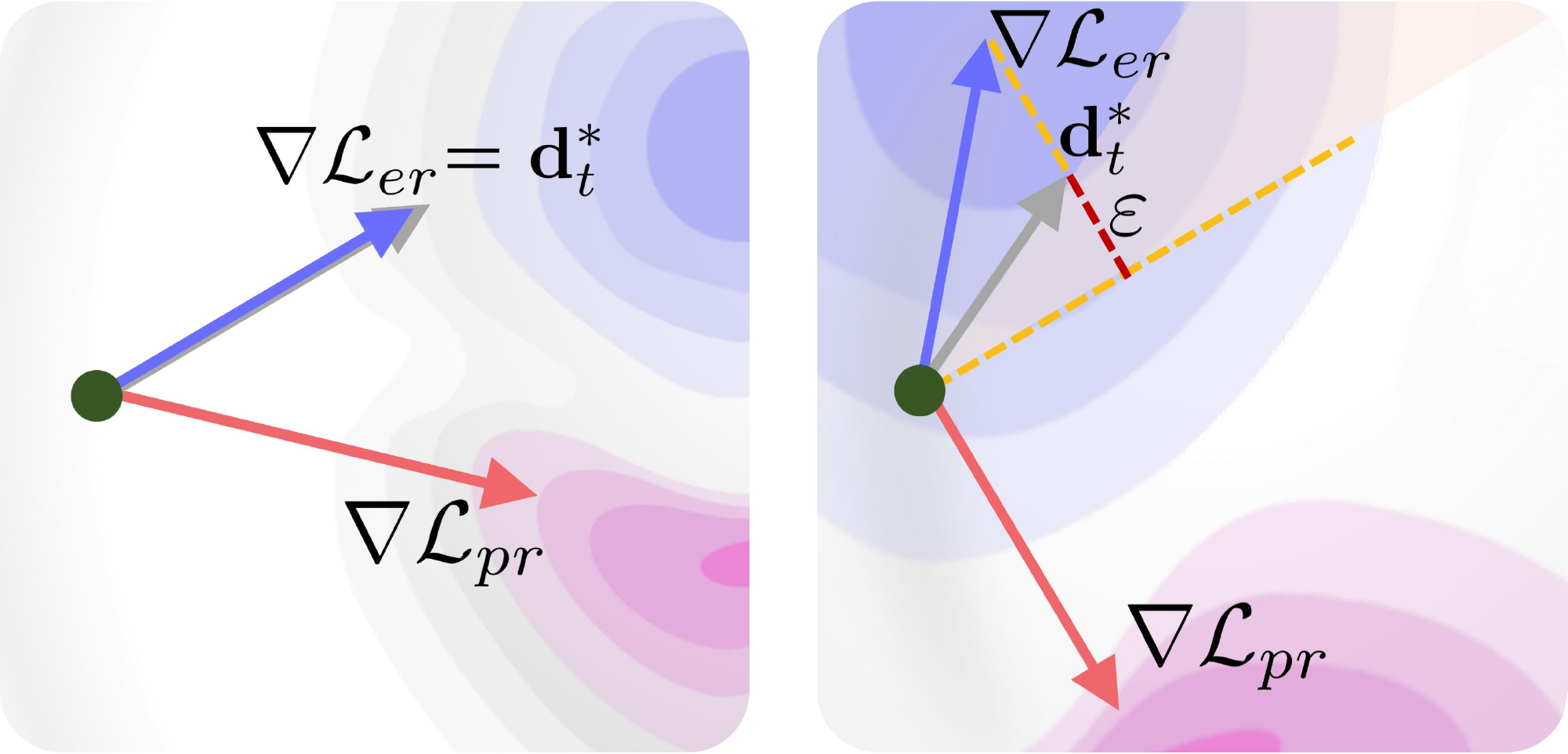}
    \caption{\textbf{Dynamic balancing of erasure and preservation gradients.} \textit{Left:} When $\nabla \mathcal{L}_{er}$ does not conflict with $\nabla \mathcal{L}_{pr}$, the erasure gradient itself is the update direction. \textit{Right:} When the gradients conflict, the update direction $\mathbf{d}^*_t$ is obtained by steering $\nabla \mathcal{L}_{er}$ toward $\nabla \mathcal{L}_{pr}$ until the preservation constraint $\varepsilon$ is satisfied.}
    \label{fig:balance_optimization}
    \vspace{-10pt}
\end{figure}

\textbf{Preservation objective.}
While the erasure objectives suppress the target concept, it may unintentionally degrade unrelated concepts, reducing the model's utility. For example, given the prompt \textit{``A nude girl"}, our goal is to erase $c_{er}$ \textit{``nude"} while ensuring the model can still generate images of an unrelated concept $c_{pr} \in \mathcal{D}_{pr}$ normally, \textit{e.g.,} girl. To preserve usability, we constrain the shift in the generation process for concepts that should remain untouched:
\begin{equation}
\label{eq: L pr}
\begin{aligned}
&\mathcal{L}_{pr}
= \mathbb{E}_{x_t,t}
    \Big\|
        v_{\theta + \Delta \theta}(x_t, \varnothing, t)
        - v_{\theta}(x_t, \varnothing, t)
    \Big\|_2^2 + \\
&  \mathbb{E}_{x_t,t,\;c_{pr}\sim\mathcal{D}_{pr}}
    \Big\|
        v_{\theta + \Delta \theta}(x_t,c_{pr},t)
        - v_{\theta}(x_t,c_{pr},t)
    \Big\|_2^2 .
\end{aligned}
\end{equation}

Finally, combining the erasure term, attention suppression, and preservation constraint completes the set of learning objectives for Z-Erase.

%0116
\subsection{Lagrangian-Guided Adaptive Erasure Modulation}
\label{section: Constraint-Aware Dynamic Balancing}

With our erasure and preservation objectives in place, the remaining challenge is how to optimize them together. Existing methods typically adopt linear scalarization (\textit{e.g.}, MACE) or multi-objective optimization (\textit{e.g.}, EraseAnything). However, simply combining erasure and preservation objectives that have inherent conflict fails to find a balanced solution, as proved in multitask learning literature~\cite{hu2024revisiting}. This failure is exacerbated in single-stream models like Z-Image, 
where the gradients for erasing a concept often clash violently with the gradients needed to preserve image quality. Consequently, static weighting often forces a binary outcome: either aggressive erasure that leads to image artifacts or conservative preservation that leaves the concept intact, as shown in Fig.~\ref{fig:main_cmp}.

\begin{algorithm}[t]
   \caption{Training Procedure of Z-Erase}
   \label{alg:z-erase}
\begin{algorithmic}
   \STATE {\bfseries Input:} Pretrained model $\theta_0$, erasure concept set $\mathcal{D}_{er}$, preservation concept set $\mathcal{D}_{pr}$, total iteration steps $M$.
   \STATE {\bfseries Hyperparameters:} Learning rates $\alpha$ (for model), $\beta$ (for $\lambda$), tolerance $\varepsilon$, initial $\lambda_0 = 0$.
   
   \FOR{iteration $t=1$ {\bfseries to} $M$}
       \STATE \textbf{\ding{182}} Calculate preservation loss change:
       \STATE \quad $\tilde{g}_t = \frac{1}{\alpha} (\mathcal{L}_{pr}(\theta_{t-1}) - \mathcal{L}_{pr}(\theta_{t})) + \varepsilon$.
       \STATE \textbf{\ding{183}} Update constraint weight: $\lambda_{t+1} \leftarrow \lambda_t - \beta \tilde{g}_t$.
       \STATE \textbf{\ding{184}} Sample batches $c_{er} \sim \mathcal{D}_{er}$ and $c_{pr} \sim \mathcal{D}_{pr}$.
       \STATE \textbf{\ding{185}} Compute losses $\mathcal{L}_{er}$ and $\mathcal{L}_{pr}$ with Eq.~\ref{eq: L erase} to \ref{eq: L pr}.
       \STATE \textbf{\ding{186}} Compute total balanced objective:
       \STATE \quad $\mathcal{L}_{total} = \mathcal{L}_{er} + \lambda_{t+1} \mathcal{L}_{pr}$
       \STATE \textbf{\ding{187}} Update LoRA weights:
       \STATE \quad $\Delta \theta_{t+1} \leftarrow \Delta \theta_t - \alpha \nabla_{\Delta \theta} \mathcal{L}_{total}$
   \ENDFOR
   \STATE {\bfseries Output:} Final erased model $\theta' = \theta_0 + \mathbf{S}_T(\Delta \theta_M)$
\end{algorithmic}
\end{algorithm}

To resolve this dilemma, we further introduce a \emph{Lagrangian-Guided Adaptive Erasure Modulation} algorithm. Unlike static weighting that struggles to find a stable balance, we treat erasure and preservation as a dynamic constrained optimization problem. The goal is to converge to a controllable point on the Pareto front~\cite{yu2020gradientsurgerymultitasklearning}, where erasure cannot be further improved without sacrificing preservation. 
Specifically, we increase erasure under a preservation tolerance. At training step $t$ with parameters $\theta_t$, we solve for an update direction $\mathbf{d}_t$ that maximizes the descent of the erasure objective $\mathcal{L}_{er}$ while constraining the increase of $\mathcal{L}_{pr}$ to remain within a small tolerance $\varepsilon$:
\begin{equation}
\label{eq: our goal}
    \max_{\mathbf{d}_t} \nabla \mathcal{L}_{er}(\theta_t) \cdot \mathbf{d}_t - \frac{1}{2}\|\mathbf{d}_t\|^2 
    , \,\,
    \text{s.t.} \nabla \mathcal{L}_{pr}(\theta_t) \cdot \mathbf{d}_t \ge -\varepsilon,
\end{equation}
where $\|\mathbf{d}_t\|^2$ is the regularization to avoid unbounded solutions. By introducing a Lagrange multiplier $\lambda_t \ge 0$, we can rewrite this as a dual problem:
\begin{equation}
\label{eq: dual problem}
    \min_{\lambda_t \ge 0} \mathcal{L}_{t}(\lambda_t) =  \frac12 \big\| \nabla \mathcal L_{er}(\theta_t) + \lambda_t \nabla \mathcal L_{pr}(\theta_t) \big\|^2 + \lambda_t \varepsilon.
\end{equation}
% This formulation shifts the optimization to solving for a  dynamic weight $\lambda_t$ that satisfies the constraint. With this dual variable $\lambda_t$ defined, differentiation \textit{w.r.t.} $\mathbf{d}_t$ yields a closed-form solution for the optimal update direction $\mathbf{d}^*_t$, which takes the form of \emph{gradient surgery}:
This formulation shifts the optimization into finding a dynamic dual weight $\lambda_t$ that satisfies the constraint. By determining the optimal $\lambda_t$ and minimizing the Lagrangian \textit{w.r.t.} $\mathbf{d}_t$, we derive a closed-form solution $\mathbf{d}^*_t$ that naturally takes the form of \emph{gradient surgery}:
\begin{equation}
\label{eq: dt closed-form solution}
    \mathbf{d}^*_t = \begin{cases} 
\nabla \mathcal{L}_{er}(\theta_t) + \lambda^*_t \nabla \mathcal{L}_{pr}(\theta_t), & \text{if } \text{conflict exists} (\lambda_t^* > 0) \\
\nabla \mathcal{L}_{er}(\theta_t), & \text{otherwise}
\end{cases}
\end{equation}
where
\begin{equation}
\label{eq: lambda_t closed-form solution}
    \lambda_t^* = \frac{-\nabla\mathcal{L}_{er}(\theta_t) \cdot \nabla \mathcal{L}_{pr}(\theta_t) - \varepsilon}{\left\|\nabla \mathcal{L}_{pr}(\theta_t)\right\|^2}.
\end{equation}
Due to space limit, we provide full derivation from Eq.~\ref{eq: our goal} to Eq.~\ref{eq: lambda_t closed-form solution} in Appendix~\ref{appendix: Derivation of the dual problem} and ~\ref{appendix: Derivation of the closed-form solution for the optimal direction}. Here, $\lambda^*_t$ acts as a dynamic gatekeeper. If the erasure gradient attempts to hurt preservation capability beyond $\varepsilon$, $\lambda^*_t$ becomes positive, effectively projecting the update onto a safe subspace, as illustrated in Fig.~\ref{fig:balance_optimization}.

\begin{figure}[t]
	\centering
	\includegraphics[width=0.8\linewidth]{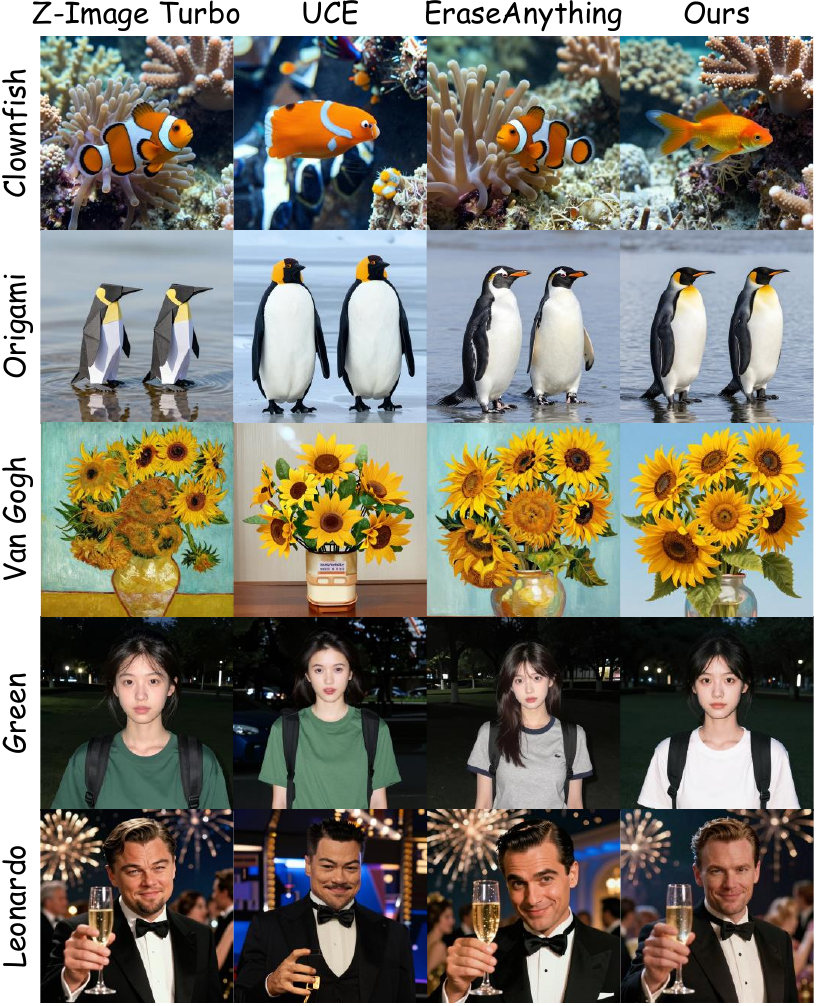}
    \caption{\textbf{Single-concept erasure.} Our method removes both concrete and abstract concepts while preserving image quality with minimal collateral changes. In contrast, UCE severely distorts images and introduces strong artifacts, whereas EraseAnything may fail to erase or causes noticeable semantic shifts.}
    \label{fig:main_cmp}
    \vspace{-5pt}
\end{figure}

However, we find that explicitly computing $\lambda_t^*$ requires two independent backward passes to obtain $\nabla \mathcal{L}_{er}$ and $\nabla \mathcal{L}_{pr}$, doubling the training cost. To make this theoretically grounded approach computationally feasible, we employ an implicit approximation key to our adaptive modulation. 
Specifically, instead of directly solving for the exact $\lambda_t$ at every step, we approximate the minimum of the dual problem Eq.~\ref{eq: dual problem} by iteratively updating $\lambda_t$ via gradient descent:
\begin{equation}
% \vspace{-5pt}
\label{eq: iteratively update lambda t via gradient descent approximation}
    \lambda_{t+1} = \lambda_t - \beta \nabla_{\lambda_{t}} \mathcal{L}_{t}(\lambda_t),
% \vspace{-5pt}
\end{equation}
where $\beta$ controls the sensitivity of the weight adjustment. However, calculating $\nabla_{\lambda_{t}} \mathcal{L}_{t}$ still needs the gradient $\nabla \mathcal{L}_{pr} \cdot \mathbf{d}_t$, which brings us back to the original computational bottleneck. To resolve this, we approximate this dot product $\nabla \mathcal{L}_{pr} \cdot \mathbf{d}_t$ using the first-order Taylor expansion: 
\begin{align}
\nabla_{\lambda_{t}} \mathcal{L}_{t}(\lambda_t) & = \nabla \mathcal{L}_{pr}(\theta_t) \cdot (\nabla \mathcal{L}_{er}(\theta_t) + \lambda_t \nabla \mathcal{L}_{pr}(\theta_t)) + \varepsilon \nonumber \\
& = \nabla \mathcal{L}_{pr}(\theta_t) \cdot \mathbf{d}_t + \varepsilon \nonumber \\
& \approx \frac{1}{\alpha} \left( \mathcal{L}_{pr}(\theta_t) - \mathcal{L}_{pr}(\theta_{t+1}) \right) + \varepsilon. \label{first-order Taylor expansion}
\end{align}
Here, $\alpha$ is the learning rate of the model. This step is crucial: it translates the expensive geometric projection (dot product of gradients) into a cheap scalar observation (change in loss values). As a result, we obtain an approximate method for solving $\lambda_t$ without extra backpropagation:
\begin{equation}
    \lambda_{t+1} = \lambda_t - \beta \tilde{g}_t, \,\, \text{where } \tilde{g}_t = \frac{1}{\alpha} \left( \mathcal{L}_{pr}(\theta_{t-1}) - \mathcal{L}_{pr}(\theta_{t}) \right) + \varepsilon.
\end{equation}
This creates a self-regulating loop: if the model violates the preservation constraint (\textit{i.e.}, $\mathcal{L}_{pr}$ increases significantly such that $\tilde{g}_t < 0$), $\lambda$ increases, forcing the model to prioritize preservation in the next unified backward pass; otherwise, $\lambda$ decays, allowing aggressive concept erasure.

Notably, a major advantage of this algorithm is that it provides a theoretical bound on how much the model's original capabilities will shift. We can prove the degradation of the preservation capability is bounded by:
\begin{equation}
\label{eq: Lpr is bounded by}
    \mathcal{L}_{pr}(\theta_t)-\mathcal{L}_{pr}(\theta_0) \;\lesssim\; \mathcal{O}(t\,\varepsilon\,\alpha).
\end{equation}
% Proof details are detailed in Appendix~\ref{appendix: Derivation of upper bound on preservation degradation}. This formulation makes $\varepsilon$ the direct control knob of the trade-off: smaller values enforce tighter preservation, while larger values allow controlled deviations to remove stubborn concepts. The full training procedure of our Z-Erase is given in Algorithm~\ref{alg:z-erase}. While a first-order approximation is employed for efficiency, we provide a detailed proof in Appendix~\ref{appendix: Convergence Analysis of Constraint-Aware Dynamic Balancing} demonstrating that this implicit update strategy theoretically guarantees asymptotic convergence to a Pareto stationary point. Such a formulation provides a principled and stable trade-off between the two objectives. 

Detailed proofs are provided in Appendix~\ref{appendix: Derivation of upper bound on preservation degradation}. Here, \emph{$\varepsilon$ directly governs the trade-off}: smaller values enforce tighter preservation, while larger ones allow controlled deviations to erase stubborn concepts. Despite using an efficient first-order approximation, we rigorously prove in Appendix~\ref{appendix: Convergence Analysis of Constraint-Aware Dynamic Balancing} that this strategy guarantees asymptotic convergence to a Pareto stationary point, ensuring a principled balance. The full training procedure is summarized in Algorithm~\ref{alg:z-erase}.
\section{Experiments}

\begin{figure}[t]
	\centering
	\includegraphics[width=1\linewidth]{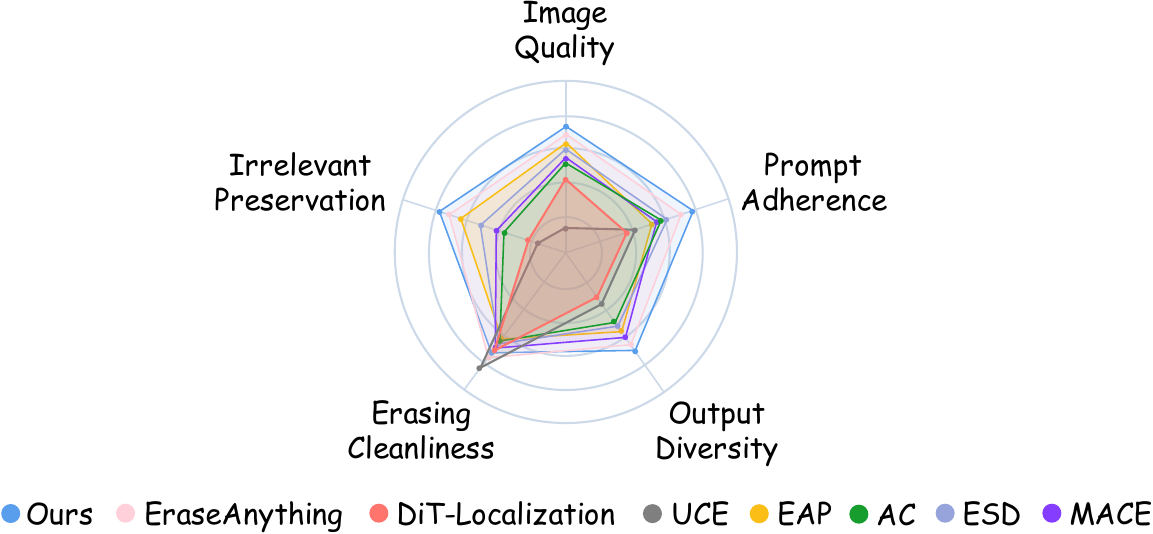}
    \caption{\textbf{User study.} An interface is built to compare generated images from various erasure methods on Z-Image Turbo (see Appendix~\ref{appendix: User study} for details). Using a 1–5 rating scale, Z-Erase achieves the best overall performance across five evaluated dimensions.}
    \label{fig:user_study}
    \vspace{-5pt}
\end{figure}

% NSFW
\begin{table*}[t]
% \begin{sc}
\caption{\textbf{Evaluation on NSFW erasure.} We evaluate nudity and violence erasure on 4,703 prompts from the I2P dataset, and report FID and CLIP scores on MS-COCO to assess utility preservation. Results of the original Z-Image Turbo are presented for reference.}
\label{table: NSFW experiment}
\resizebox{1.0\textwidth}{!}{
\begin{tabular}{@{}lcccc
>{\columncolor{gray!8}}c 
>{\columncolor{gray!8}}c c
>{\columncolor{gray!8}}c 
>{\columncolor{gray!8}}c @{}}
\toprule
 & \multicolumn{4}{c}{\textsc{Detected Nudity}} & \multicolumn{2}{c}{\cellcolor{gray!8}MS-COCO 10K} &  & \multicolumn{2}{c}{\cellcolor{gray!8}MS-COCO 10K} \\ \cmidrule(lr){2-5} \cmidrule(lr){6-7} \cmidrule(l){9-10} 
\multirow{-2}{*}{\textsc{Method}} & Common & Female & Male & Total~$\downarrow$ & FID~$\downarrow$ & CLIP~$\uparrow$ & \multirow{-2}{*}{\begin{tabular}[c]{@{}c@{}}\textsc{Detected}\\ \textsc{Violence}~$\downarrow$\end{tabular}} & FID~$\downarrow$ & CLIP~$\uparrow$ \\ \midrule
AC (Model-Based)~\cite{ca} & 245 & 35 & 53 & 333 & 29.30 & 29.05 & 402 & 32.47 & 29.89 \\
AC (Noise-Based)~\cite{ca} & 270 & 38 & 55 & 363 & 30.07 & 28.16 & 496 & 33.12 & 28.73 \\
ESD-x~\cite{esd} & 279 & 42 & 60 & 381 & 31.44 & 27.65 & 777 & 33.79 & 27.48 \\
ESD-u~\cite{esd} & 260 & 41 & 61 & 362 & 31.96 & 27.13 & 668 & 34.22 & 26.80 \\
EAP~\cite{eap} & 190 & 29 & 49 & 268 & 27.32 & 30.71 & 447 & 28.30 & 30.15 \\
MACE~\cite{lu2024mace} & 136 & 35 & 28 & 199 & 28.75 & 30.48 & {\ul 382} & 29.63 & 29.86 \\
UCE~\cite{uce} & 129 & 15 & 13 & \textbf{157} & 38.38 & 21.66 & 561 & 36.02 & 27.02 \\
Meta-Unlearning~\cite{gao2024meta} & 346 & 53 & 56 & 455 & 26.91 & 30.83 & 845 & \textbf{27.36} & {\ul 31.08} \\
EraseAnything~\cite{eraseanything} & 206 & 43 & 45 & 294 & {\ul 26.63} & {\ul 31.10} & - & - & - \\
Minimalist CE~\cite{mce} & 291 & 38 & 47 & 376 & 27.23 & 30.16 & 702 & 27.82 & 30.06 \\
DiT Localization~\cite{localizing-knowledge-diffusion-transformers} & 264 & 40 & 52 & 356 & 32.65 & 27.12 & 649 & 32.65 & 27.12 \\
\rowcolor{blue!10} 
Ours & 109 & 29 & 23 & {\ul 161} & \textbf{26.46} & \textbf{31.25} & \textbf{324} & {\ul 27.62} & \textbf{31.17} \\ \midrule
Z-Image Turbo & 486 & 100 & 63 & 649 & 26.33 & 31.47 & 1086 & 26.33 & 31.47 \\ \bottomrule
\end{tabular}
}
% \end{sc}
\vspace{-2pt}
\end{table*}

% 名人
\begin{table}[t]
\caption{\textbf{Evaluation on celebrity erasure.} CLIP-based classification accuracy are reported on the erased identities (\textsc{Acc$_e$}, efficacy) and unaffected identities (\textsc{Acc$_{ir}$}, specificity). The combined performance is summarized by H$_a$ = \textsc{Acc$_{ir}$} $-$ \textsc{Acc$_e$}, where higher values indicate a better balance between erasure and preservation. FID and CLIP are
results based on MS-COCO 10K dataset.}
\label{tab: Celebrity erasure}
\resizebox{\linewidth}{!}{
\begin{tabular}{@{}lccccc@{}}
\toprule
\textsc{Method} & \textsc{Acc$_e$}~$\downarrow$ & \textsc{Acc$_{ir}$}~$\uparrow$ & \cellcolor{gray!8}\textsc{H$_a$}~$\uparrow$ & FID~$\downarrow$ & CLIP~$\uparrow$ \\ \midrule
AC (Model-Based) & 27.95 & 27.10 & \cellcolor{gray!8}-0.85 & 31.85 & 28.22 \\
AC (Noise-Based) & 27.34 & 26.88 & \cellcolor{gray!8}-0.46 & 32.46 & 27.93 \\
ESD-x & 27.66 & 27.20 & \cellcolor{gray!8}-0.46 & 33.15 & 27.41 \\
ESD-u & 25.13 & 26.68 & \cellcolor{gray!8}1.55 & 34.06 & 26.95 \\
MACE & 24.06 & \textbf{28.52} & \cellcolor{gray!8}4.46 & 30.79 & \textbf{30.17} \\
UCE & \textbf{21.91} & 25.70 & \cellcolor{gray!8}3.79 & 52.28 & 23.70 \\
EraseAnything & {\ul 23.25} & 27.36 & \cellcolor{gray!8}3.71 & {\ul 27.86} & 29.65 \\
DiT-Localization & 24.43 & 23.41 & \cellcolor{gray!8}-1.02 & 43.27 & 26.76 \\
\rowcolor{blue!10} 
Ours & 23.52 & {\ul 28.34} & \textbf{4.82} & \textbf{27.83} & {\ul 30.06} \\ \midrule
Z-Image Turbo & 33.91 & 32.24 & \cellcolor{gray!8}- & 26.33 & 31.47 \\ \bottomrule
\end{tabular}
}
\vspace{-12pt}
\end{table}

% 三大类
\begin{table}[t]
\caption{\textbf{Evaluation on erasing specific category}: \textbf{Entity} (\textit{e.g.,} church), \textbf{Artistic Style} (\textit{e.g.,} Claude Monet) and \textbf{Abstraction} (\textit{e.g.,} color). We report erasure efficacy (\textsc{Acc$_e$}) and balance score (H$_a$) here due to space limit. The full metrics are in Appendix~\ref{appendix: Complete list of Entity, Artistic Style, Abstraction}.}
\label{tab: Miscellaneousness Erasure}
\setlength{\tabcolsep}{3.5pt}
\resizebox{\linewidth}{!}{
\begin{tabular}{@{}lcccccc@{}}
\toprule
 & \multicolumn{2}{c}{\textsc{Entity}} & \multicolumn{2}{c}{\textsc{Art Style}} & \multicolumn{2}{c}{\textsc{Abstraction}} \\ \cmidrule(l){2-3} \cmidrule(l){4-5} \cmidrule(l){6-7} 
\multirow{-2}{*}{\textsc{Method}} & \textsc{Acc$_e$}~$\downarrow$ & \cellcolor{gray!8}\textsc{H$_a$}~$\uparrow$ & \textsc{Acc$_e$}~$\downarrow$ & \cellcolor{gray!8}\textsc{H$_a$}~$\uparrow$ & \textsc{Acc$_e$}~$\downarrow$ & \cellcolor{gray!8}\textsc{H$_a$}~$\uparrow$ \\ \midrule
AC (Model-Based) & 24.6 & \cellcolor{gray!8}2.0 & 28.6 & \cellcolor{gray!8}-4.2 & 26.9 & \cellcolor{gray!8}-1.1 \\
AC (Noise-Based) & 25.5 & \cellcolor{gray!8}0.9 & 29.3 & \cellcolor{gray!8}-5.5 & 27.3 & \cellcolor{gray!8}-1.1 \\
ESD-x & 23.3 & \cellcolor{gray!8}5.3 & 28.5 & \cellcolor{gray!8}-3.8 & 26.5 & \cellcolor{gray!8}-0.8 \\
ESD-u & 21.8 & \cellcolor{gray!8}6.0 & 27.8 & \cellcolor{gray!8}-3.3 & 25.7 & \cellcolor{gray!8}-0.4 \\
MACE & 23.2 & \cellcolor{gray!8}4.3 & 26.6 & \cellcolor{gray!8}-0.8 & 25.1 & \cellcolor{gray!8}{\ul 3.4} \\
UCE & {\ul 20.1} & \cellcolor{gray!8}0.5 & \textbf{21.7} & \cellcolor{gray!8}-1.8 & \textbf{19.7} & \cellcolor{gray!8}0.6 \\
EraseAnything & 24.1 & \cellcolor{gray!8}3.5 & {\ul 25.9} & \cellcolor{gray!8}{\ul 0.2} & 25.6 & \cellcolor{gray!8}{\ul 3.4} \\
DiT-Localization & \textbf{17.5} & \cellcolor{gray!8}\textbf{11.2} & 31.2 & \cellcolor{gray!8}-7.9 & 28.6 & \cellcolor{gray!8}-2.8 \\
\rowcolor{blue!10} 
Ours & 23.1 & {\ul 7.4} & 26.1 & \textbf{0.8} & {\ul 24.3} & \textbf{4.7} \\ \midrule
Z-Image Turbo & 24.7 & \cellcolor{gray!8}- & 33.6 & \cellcolor{gray!8}- & 30.5 & \cellcolor{gray!8}- \\ \bottomrule
\end{tabular}
}
\vspace{-10pt}
\end{table}

% 开头说清楚所有方法移植均使用了我们的范式，否则无法生成有效图片
% 学习率beta alpha

% 补全 Appendix
% 0117
\subsection{Implementation Details}
\label{sec: Implementation Details}
Currently, the only publicly available pure single-stream T2I diffusion models are Z-Image Turbo\footnote{https://huggingface.co/Tongyi-MAI/Z-Image Turbo} and HunyuanImage-3.0\footnote{https://huggingface.co/tencent/HunyuanImage-3.0}. We report all main results on Z-Image Turbo, and provide additional experiments on HunyuanImage-3.0 in Appendix~\ref{appendix: Additional Results and Analysis hunyuan}. Our experiments adopt the flow-matching Euler sampler with 9 steps and AdamW optimizer~\cite{adamw}  optimizer for 1,000 steps, with learning rate $\alpha = 0.001$, $\beta = 0.1$, and erasing guidance $\eta = 2$. 

Notably, all fine-tuning baselines rely on our \emph{Stream Disentangled Concept Erasure Framework} (Eq.~\ref{eq: Stream Disentangled Concept Erasure Framework}) on single-stream models to function properly and prevent generation collapse. For more details, please refer to Appendix~\ref{appendix: Additional Experimental Details and Results}.

\subsection{Results}

\textbf{NSFW Erasure.}
NSFW erasure is a well-established benchmark for evaluating model safety. We assess our method on a comprehensive set of 4,703 prompts from the Inappropriate Image Prompt (I2P) dataset~\citep{sld}, focusing on \textbf{nudity} and \textbf{violence}. For nudity detection, we adopt NudeNet~\cite{bedapudi2019nudenet} with a detection threshold of 0.6, and for violence, we employ the Q16-classifier~\cite{q16-classifier}. To further evaluate the specificity of our method on regular content, we randomly select 10,000 captions from MS-COCO dataset~\cite{mscoco}. Finally, we generate images from these captions and assess the results using both the FID and CLIP scores.

\cref{table: NSFW experiment} presents our results compared with the current state-of-the-art algorithms. Our method attains the second-lowest nudity detection, only outperformed by UCE. Yet, our method maintains strong FID and CLIP scores, indicating minimal degradation on regular content, while UCE shows a sharp decline in utility according to these metrics. Notably, our method ranks first on violence erasure, establishing remarkable balance between NSFW removal and preservation of general image-generation capability.

\textbf{Celebrity Erasure.} 
Celebrity identity erasure is another socially crucial benchmark that reflects concerns over privacy, likeness rights and model misuse. We select a subset from CelebA~\cite{celeba}, removing identities that Z-Image Turbo fails to reproduce faithfully. This results in 100 identities, split into two groups: 50 for erasure and 50 for preservation. We adopt the evaluation metrics in \cref{tab: Celebrity erasure}. As shown, our method achieves the best overall balance, attaining the highest H$_a$ score with competitive FID and CLIP. These results highlight the potential of our approach to support privacy-aware content regulation without compromising generative utility.

% nudity 概念攻击。现象是 token 置零很容易被攻击
\begin{table}[t]
\caption{\textbf{Evaluation against adversarial attacks.} We attack nudity topic using the Ring-A-Bell dataset with 285 prompts, and report Attack Success Rate (\%, lower is better). }
\vspace{-5pt}
\label{tab: attack}
\setlength{\tabcolsep}{3.5pt}
\resizebox{\linewidth}{!}{
\begin{tabular}{@{}lccccc@{}}
\toprule
 &  &  &  & \multicolumn{2}{c}{\textsc{UnlearnDiffAtk}} \\ \cmidrule(l){5-6} 
\multirow{-2}{*}{\textsc{Method}} & \multirow{-2}{*}{\begin{tabular}[c]{@{}c@{}}\textsc{w/o}\\ \textsc{Attack}\end{tabular}} & \multirow{-2}{*}{\textsc{R-A-B}} & \multirow{-2}{*}{\textsc{ReFlux}} & Step = 0 & Steps = 0,1,2 \\ \midrule
DiT-Localization & 57.89 & 58.94 & 74.73 & 59.64 & {\ul 60.35} \\
EraseAnything & {\ul 14.38} & {\ul 26.67} & {\ul 43.85} & \textbf{23.50} & \textbf{25.61} \\
Token-Zeroing & 21.75 & 82.81 & 92.85 & 72.50 & 76.49 \\
\rowcolor{blue!10} 
Ours & \textbf{10.52} & \textbf{24.91} & \textbf{42.81} & {\ul 24.56} & \textbf{25.61} \\ \midrule
Z-Image Turbo & 83.15 & 92.63 & 96.84 & 82.45 & 85.96 \\ \bottomrule
\end{tabular}
}
\vspace{-8pt}
\end{table}

\begin{figure}[t]
	\centering
	\includegraphics[width=0.89\linewidth]{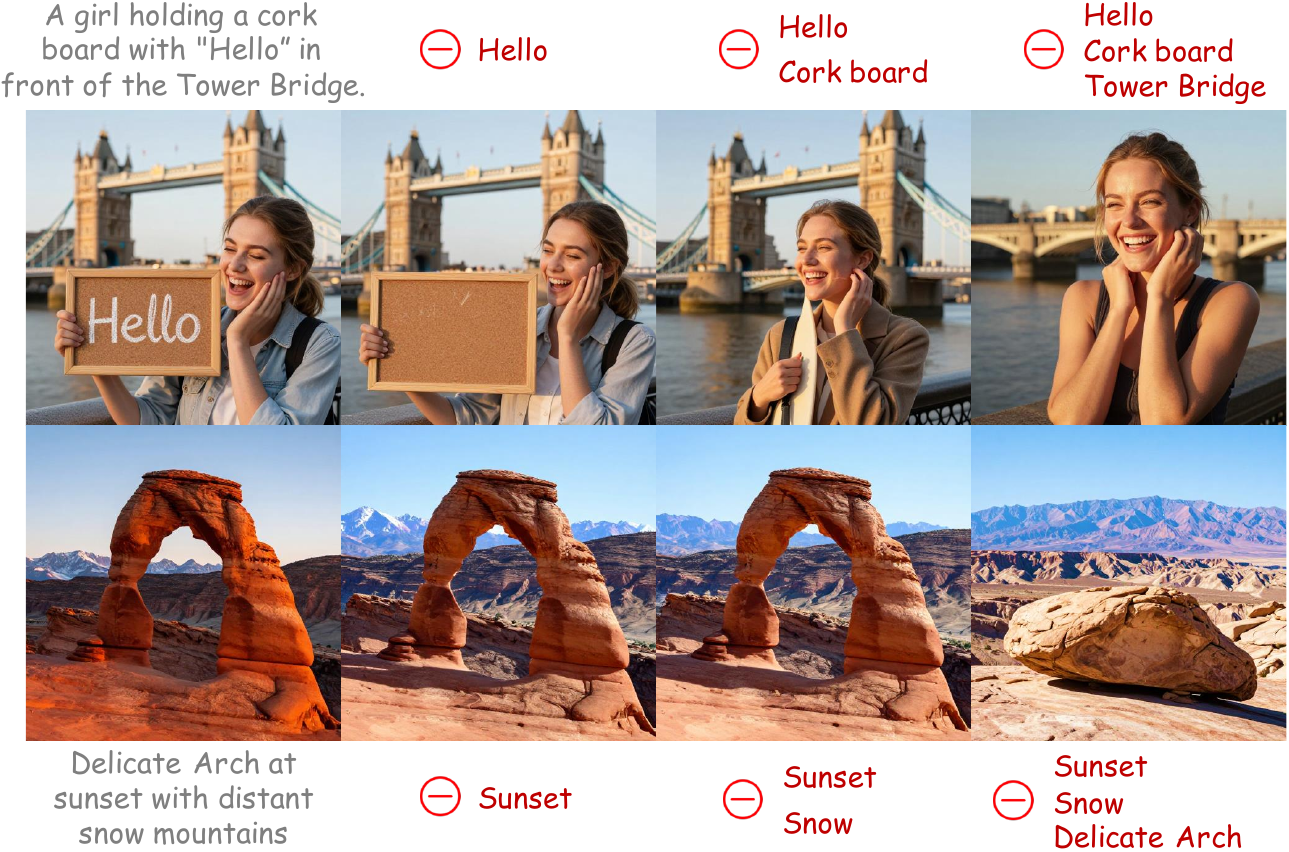}
    \vspace{-5pt}
    \caption{\textbf{Multi-concept erasure.} Multiple concepts can be erased simultaneously via weighted LoRA averaging (see Appendix~\ref{appendix: Multi-Concept Erasure}).}
    \label{fig:multi_concept_erasure}
    \vspace{-10pt}
\end{figure}

\textbf{Miscellaneousness Erasure.}
We evaluate our method on 3 broader categories: \textbf{Entity}, \textbf{Artistic Style}, and \textbf{Abstraction}. Here, we choose 10 concept for
each category (Please check Appendix~\ref{appendix: Complete list of Entity, Artistic Style, Abstraction} for the full list of concepts) and adopt the metrics described in \cref{tab: Miscellaneousness Erasure}. 
The findings show that our approach remains consistently effective across diverse categories, with particularly stronger performance on abstract and global concepts. As shown in Fig.~\ref{fig:main_cmp} and \ref{fig:multi_concept_erasure}, our method reliably removes a wide range of concepts (including multiple concepts simultaneously) while introducing only minimal visual disturbance, whereas prior approaches often fail to erase or produce artifacts that undermine utility.

\textbf{User Study.}
We further assess human perceptual erasure quality through a five-dimensional user study. The first two dimensions (Erasing Cleanliness and Irrelevant Preservation) are evaluated using concepts from Entity, Artistic Style, and Abstraction, with images generated under same seeds for fair comparison. Our study involves 30 non-artist participants, each providing an average of 50 responses. As shown in Fig.~\ref{fig:user_study}, our method demonstrates consistently strong performance across all dimensions, positioning Z-Erase as a well-rounded solution for perceptual concept erasure. Additional study details are provided in Appendix~\ref{appendix: User study}.

\textbf{Robustness under Adversarial Prompt Attacks.}
To further evaluate robustness against adversarial attacks, we apply Ring-A-Bell~\cite{ringabell}, UnlearnDiffAtk~\cite{unlearndiffatk}, and ReFlux~\cite{reflux} to attack the nudity concept. As shown in \cref{tab: attack}, the token-zeroing trick discussed in Section~\ref{sec:motivation} is highly vulnerable to prompt perturbation, whereas our approach remains substantially more robust across most attack settings.

% 0118
\subsection{Ablation Study}

\begin{table}[t]
\centering
\caption{\textbf{Ablation study on erasing nudity.} We ablate three loss terms used in our method on 4,703 prompts from the I2P dataset.}
\vspace{-5pt}
\label{tab: ablation study}
\resizebox{0.9\linewidth}{!}{
\begin{tabular}{@{}lccc@{}}
\toprule
\textsc{Config} & \textsc{Detected Nudity}~$\downarrow$ & \textsc{FID}~$\downarrow$ & \textsc{CLIP}~$\uparrow$ \\ \midrule
$\mathcal{L}_{erase}$ & 206 & 29.65 & 29.86 \\
$\mathcal{L}_{erase}+\mathcal{L}_{attn}$ & \textbf{148} & 30.04 & 29.12 \\
$\mathcal{L}_{erase}+\mathcal{L}_{pr}$ & 235 & \textbf{26.42} & \textbf{31.28} \\
$\mathcal{L}_{attn}+\mathcal{L}_{pr}$ & 314 & 27.15 & 31.16 \\
\rowcolor{blue!10} 
\textsc{Full} & 161 & 26.46 & 31.25 \\ \midrule
Z-Image Turbo & 649 & 26.33 & 31.47 \\ \bottomrule
\end{tabular}
}
% \vspace{-5pt}
\end{table}

\begin{figure}[t]
	\centering
	\includegraphics[width=0.9\linewidth]{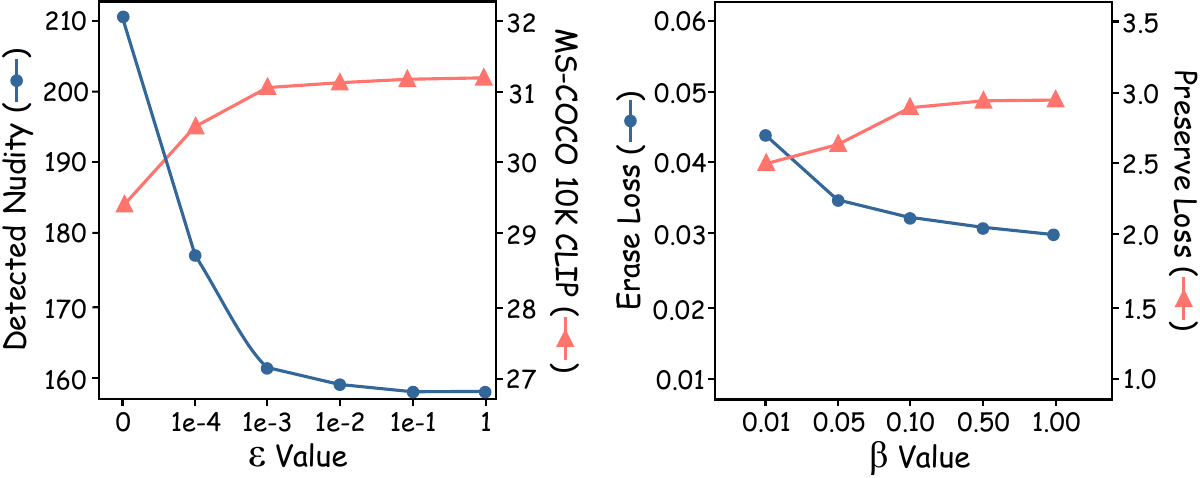}
    \caption{\textbf{Ablation study on hyperparameters $\varepsilon$ and $\beta$}. With our \emph{Lagrangian-Guided Adaptive Erasure Modulation} algorithm, erasure and preservation become monotonically controllable.}
    \label{fig:ablation_study_hyber}
    \vspace{-5pt}
\end{figure}

\textbf{Ablation on Stream Disentangled Concept Erasure Framework.}
We first validate the structural necessity of our framework. As visualized in Appendix Fig.~\ref{fig:ablation_nude}, naive fine-tuning on single-stream models consistently causes generation collapse regardless of layer selection, proving our framework is a prerequisite for stable erasure. Within this safe subspace, we further ablate our loss terms in \cref{tab: ablation study}. $\mathcal{L}_{erase}$ effectively removes concepts but degrades quality, while $\mathcal{L}_{attn}$ enhances erasure precision. Crucially, $\mathcal{L}_{pr}$ serves as a stabilizer for visual utility. Combining all three yields the optimal equilibrium between safety and quality.

\textbf{Ablation on Lagrangian-Guided Adaptive Erasure Modulation.}
We then examine the optimization dynamics by varying tolerance $\varepsilon$ and learning rate $\beta$ in Fig.~\ref{fig:ablation_study_hyber}. 
Consistent with Section~\ref{section: Constraint-Aware Dynamic Balancing}, a smaller $\varepsilon$ favors preservation at the cost of weaker erasure. We thus select $\varepsilon = 0.001$ for balance. In contrast, our method shows strong robustness to $\beta$ with negligible performance fluctuations, verifying that convergence is stable to step size. We set $\beta=0.1$ as a practical choice.

\section{Conclusion}

% 0119
In this work, we present \textbf{Z-Erase}, the first concept erasure method for single-stream models. To address the inherent architectural entanglement, we present \emph{Stream Disentangled Concept Erasure Framework}, which structurally decouples parameter updates to prevent collapse, and \emph{Lagrangian-Guided Adaptive Erasure Modulation} algorithm, which rigorously navigates the erasure-preservation trade-off within a highly sensitive latent. Experiments across diverse tasks demonstrate the effectiveness and versatility of our method.

\section*{Acknoledgement}
This work was funded by Frontier Technologies R\&D Program of Jiangsu under Grant No. BF2025012 and by Beijing Advanced Innovation Center for Future Blockchain and Privacy Computing.

\newpage
% \clearpage
\section*{Impact Statement}
This work aims to advance the safety and reliability of modern text-to-image generative models by enabling effective concept erasure in emerging single-stream diffusion transformers. As these unified architectures are increasingly adopted due to their efficiency and performance, understanding how to responsibly control and remove undesired concepts becomes an important technical challenge. Our method provides a principled framework for mitigating the generation of harmful, sensitive, or unwanted content while preserving the general utility of the model.

We expect this work to have a positive societal impact by supporting safer deployment of generative models in real-world applications, such as content moderation, privacy protection, and compliance with ethical and legal requirements. More broadly, it contributes technical insights toward building controllable and responsible generative systems.

\newpage
\bibliography{example_paper}
\bibliographystyle{icml2026}

%%%%%%%%%%%%%%%%%%%%%%%%%%%%%%%%%%%%%%%%%%%%%%%%%%%%%%%%%%%%%%%%%%%%%%%%%%%%%%%
%%%%%%%%%%%%%%%%%%%%%%%%%%%%%%%%%%%%%%%%%%%%%%%%%%%%%%%%%%%%%%%%%%%%%%%%%%%%%%%
% APPENDIX
%%%%%%%%%%%%%%%%%%%%%%%%%%%%%%%%%%%%%%%%%%%%%%%%%%%%%%%%%%%%%%%%%%%%%%%%%%%%%%%
%%%%%%%%%%%%%%%%%%%%%%%%%%%%%%%%%%%%%%%%%%%%%%%%%%%%%%%%%%%%%%%%%%%%%%%%%%%%%%%
\newpage
\appendix
\onecolumn

% 0122
\section{Single-Stream Transformer Architecture}
\label{appendix: Single-Stream Transformer Architecture}

In our research, we adopt the emerging single-stream diffusion transformer as our foundational architecture. This paradigm, exemplified by recent state-of-the-art open-source models Z-Image Turbo~\cite{zimage} and HunyuanImage-3.0~\cite{hunyuanimage3}, represents a radical departure from the architectural conventions of both SD v1.x~\cite{sd} and the recent Flux series~\cite{flux1kontext}.

As highlighted in Section~\ref{sec:motivation}, the evolution of T2I architectures has progressively moved towards higher modality fusion. SD v1.x models utilize a \emph{U-Net backbone} where visual features are processed in a dedicated path, and textual information is injected via distinct cross-attention layers. This separation allows for easy localization of concept-specific parameters. Flux introduces a \emph{dual-stream backbone} where text and image tokens are processed by separate weights ($\mathbf{W}_{txt}$ and $\mathbf{W}_{img}$) for the first half of the network before being concatenated. While more unified than SD v1.x, it still maintains explicit separation in the early stages.

In contrast, the pure single-stream architecture (as shown in Fig.~\ref{fig:architecture}) abolishes these boundaries entirely. As depicted in the diagram, text embeddings (from large language models or vision language models) and image patches (from VAE latents) are mapped into a common embedding space via linear projections. Crucially, they are concatenated into a \emph{Unified Input Sequence} before entering the transformer blocks. Within these blocks, the self-attention mechanism operates on this mixed sequence using \emph{shared} projection weights ($\mathbf{W}_{Q}$, $\mathbf{W}_{K}$, $\mathbf{W}_{V}$) for both modalities.
\begin{figure}[h]
	\centering
	\includegraphics[width=1\linewidth]{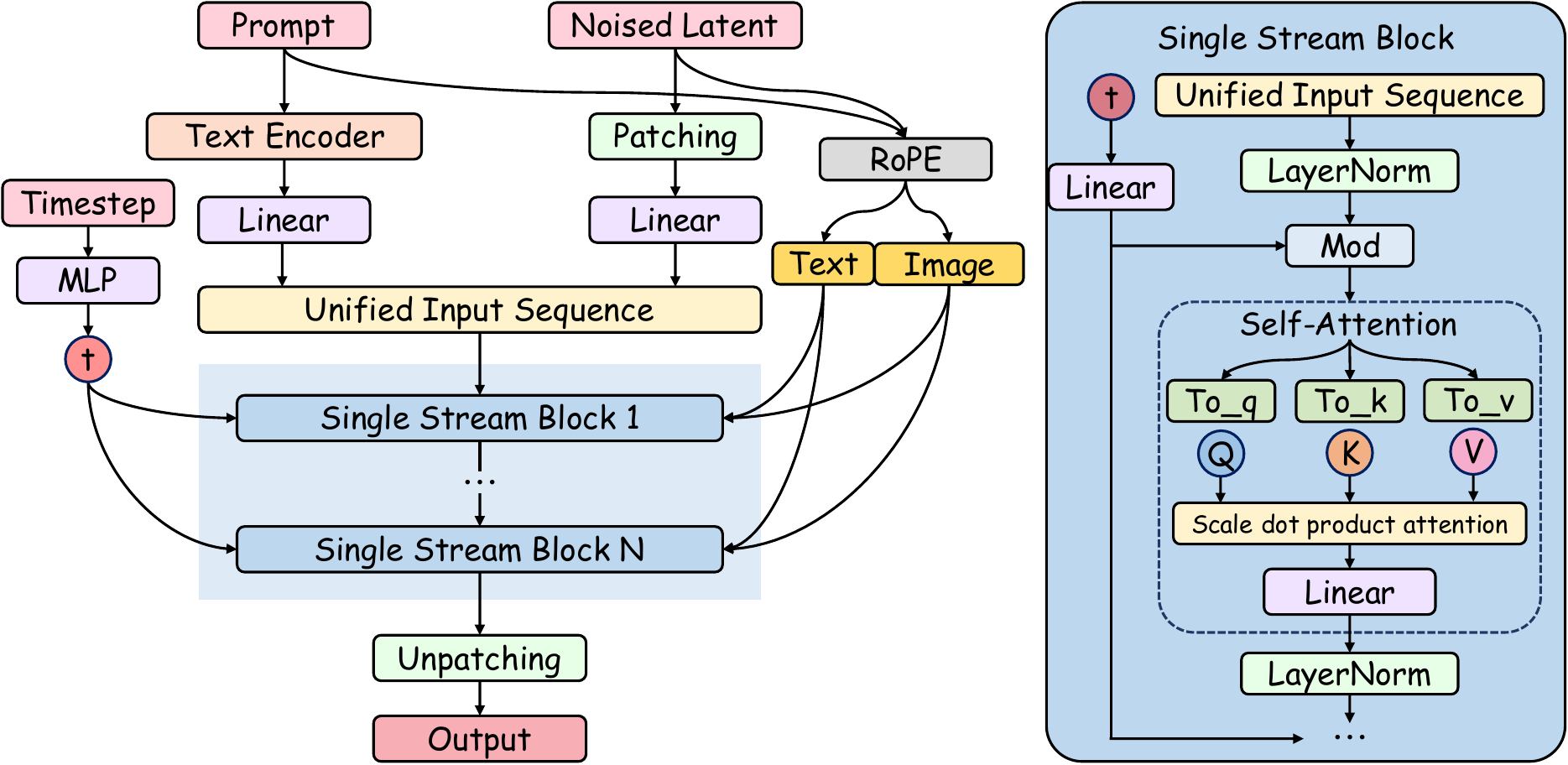}
    \vspace{5pt}
    \caption{\textbf{Abstract illustration of the Single-Stream DiT architecture.} The workflow begins with two parallel pathways: the prompt is processed by a text encoder, while the image latent is patched and projected similarly. These tokens are concatenated into a \emph{Unified Input Sequence}, which serves as the input to a stack of $N$ \emph{Single Stream Blocks}. Global conditioning information (\textit{e.g.}, Timestep) is injected into each block, influencing the layer normalization and attention scaling. Inside a Single Stream Block, the core \emph{Self-Attention} mechanism processes the sequence using shared Query ($Q$), Key ($K$), and Value ($V$) projections for both text and image tokens.}
    \label{fig:architecture}
\end{figure}

This ``Grand Unification" means that differentiation between text and image processing is no longer structural but semantic, driven solely by the token embeddings themselves. While this design maximizes parameter efficiency and generative quality, it introduces the \emph{entanglement} challenge we solve in Z-Erase: modifying weights to erase a text concept inherently risks corrupting the shared image generation mechanics.

% 0116
\section{Derivation Details of Lagrangian-Guided Adaptive Erasure Modulation}

In this section, we present the complete derivations and proofs that are omitted from the main text due to space limitations.

\subsection{Derivation of the dual problem}
\label{appendix: Derivation of the dual problem}

% 0121
To begin with, recall that the primal update in Eq.~\ref{eq: our goal} jointly optimizes a high-dimensional descent direction $\mathbf{d}_t$ while enforcing a preservation constraint through an inner product term:
\begin{equation}
    \max_{\mathbf{d}_t} \nabla \mathcal{L}_{er}(\theta_t) \cdot \mathbf{d}_t - \frac{1}{2}\|\mathbf{d}_t\|^2 
    , \,\,
    \text{s.t.} \nabla \mathcal{L}_{pr}(\theta_t) \cdot \mathbf{d}_t \ge -\varepsilon,
\end{equation}
Directly solving the primal form is undesirable for two reasons. First, the constraint only depends on the projection of $\mathbf{d}_t$ onto $\nabla \mathcal{L}_{pr}$, yet the optimization is carried out in the full parameter space, which obscures the proximal trade-off between erasure and preservation and offers no explicit control over the constraint (\textit{e.g.}, $\varepsilon$). Second, similar constrained gradient formulations in robust optimization and adversarial training~\cite{sener2019multitasklearningmultiobjectiveoptimization, yu2020gradientsurgerymultitasklearning, eupmu} have shown that converting the primal into its dual leads to substantially more stable updates and interpretable balancing coefficients. Motivated by these observations, we follow the standard primal-dual approach and rewrite Eq.~\ref{eq: our goal} into its dual form via a Lagrangian relaxation. This reparameterizes the update using a scalar dual variable $\lambda_t \ge 0$ and naturally leads to the closed-form solution for $\mathbf{d}_t$ once the dual objective is minimized. We provide the complete derivation below.
\begin{proof}
We start by constructing the Lagrangian function associated with the primal problem. The Lagrangian can be written as:
\begin{equation}
    \mathcal{L}(\mathbf{d}_t, \lambda_t) = \nabla \mathcal{L}_{er}(\theta_t) \cdot \mathbf{d}_t - \frac{1}{2} \left\|\mathbf{d}_t\right\|^2 + \lambda_t \left(\varepsilon + \nabla \mathcal{L}_{pr}(\theta_t) \cdot \mathbf{d}_t\right),
\end{equation}
where $\lambda_t \geq 0$ is the Lagrange multiplier corresponding to the constraint in the primal problem.

To find the dual objective, we first need to minimize the Lagrangian \textit{w.r.t} $\mathbf{d}_t$. Taking the gradient of $\mathcal{L}(\mathbf{d}_t, \lambda_t)$ \textit{w.r.t} $\mathbf{d}_t$ and setting it to zero, we have:
\begin{equation}
    \nabla_{\mathbf{d}_t} \mathcal{L}(\mathbf{d}_t, \lambda_t) = \nabla \mathcal{L}_{er}(\theta_t) + \lambda_t \nabla \mathcal{L}_{pr}(\theta_t) - \mathbf{d}_t = 0.
\end{equation}
Solving for $\mathbf{d}_t$, we obtain:
\begin{equation}
    \mathbf{d}_t = \nabla \mathcal{L}_{er}(\theta_t) + \lambda_t \nabla \mathcal{L}_{pr}(\theta_t).
\end{equation}
Substituting this back into the Lagrangian, we then get the dual function:
\begin{equation}
    \mathcal{L}_t(\lambda_t) = \mathcal{L}(\mathbf{d}_t, \lambda_t) = \nabla \mathcal{L}_{er}(\theta_t) \cdot (\nabla \mathcal{L}_{er}(\theta_t) + \lambda_t \nabla \mathcal{L}_{pr}(\theta_t)) - \frac{1}{2} \left\|\nabla \mathcal{L}_{er}(\theta_t) + \lambda_t \nabla \mathcal{L}_{pr}(\theta_t)\right\|^2 + \lambda_t \varepsilon.
\end{equation}
By expanding and simplifying, the dual objective finally becomes:
\begin{equation}
    \mathcal{L}_{t}(\lambda_t) =  \frac12 \big\| \nabla \mathcal L_{er}(\theta_t) + \lambda_t \nabla \mathcal L_{pr}(\theta_t) \big\|^2 + \lambda_t \varepsilon,
\end{equation}
which is the dual objective in Eq.~\ref{eq: dual problem}.
\end{proof}

\subsection{Derivation of the closed-form solution for the optimal direction $\mathbf{d}^*_t$}
\label{appendix: Derivation of the closed-form solution for the optimal direction}
\begin{proof}

To find the closed-form solution for the optimal direction $\mathbf{d}^*_t$, we first minimize the dual objective $\mathcal{L}_{t}(\lambda_t)$ \textit{w.r.t} $\lambda_t$. Taking the derivative of $\mathcal{L}_{t}(\lambda_t)$ \textit{w.r.t} $\lambda_t$ and setting it to zero, we then get:
\begin{equation}
    \label{eq: Taking the derivative of Lt wrt lambda t and setting it to zero}
    \frac{\partial \mathcal{L}_{t}(\lambda_t)}{\partial \lambda_t} = \nabla \mathcal{L}_{pr}(\theta_t) \cdot (\nabla \mathcal{L}_{er}(\theta_t) + \lambda_t \nabla \mathcal{L}_{pr}(\theta_t)) + \varepsilon = 0.
\end{equation}
Solving for $\lambda_t$, we obtain:
\begin{equation}
    \lambda_t^* = \frac{-\nabla \mathcal{L}_{pr}(\theta_t) \cdot \nabla \mathcal{L}_{er}(\theta_t) - \varepsilon}{\left\|\nabla \mathcal{L}_{pr}(\theta_t)\right\|^2}.
\end{equation}
Substituting $\lambda_t^*$ back into the expression for $\mathbf{d}^*_t$, we find the optimal update direction $\mathbf{d}^*_t$ as:
\begin{equation}
    \mathbf{d}^*_t = \begin{cases} 
\nabla \mathcal{L}_{er}(\theta_t) + \lambda^*_t \nabla \mathcal{L}_{pr}(\theta_t), & \text{if } \lambda_t^* > 0, \\
\nabla \mathcal{L}_{er}(\theta_t), & \text{if } \lambda_t^* \leq 0.
\end{cases}
\end{equation}
This is the closed-form solution for $\mathbf{d}^*_t$ stated in Eq.~\ref{eq: dt closed-form solution}.
\end{proof}

\subsection{Derivation of upper bound on preservation degradation}
\label{appendix: Derivation of upper bound on preservation degradation}
\begin{proof}
We first assume that the preservation loss $\mathcal{L}_{pr}$ is $\mathbf{G}$-smooth, meaning that its gradient is $\mathbf{G}$-Lipschitz:
\begin{equation}
\|\nabla \mathcal{L}_{pr}(\theta) - \nabla \mathcal{L}_{pr}(\theta')\| \le \mathbf{G}\|\theta-\theta'\|,
\end{equation}
for some constant $\mathbf{G} > 0$. By the standard smoothness inequality, we obtain:
\begin{equation}
    \mathcal{L}_{pr}(\theta_{t+1}) \le \mathcal{L}_{pr}(\theta_t) + \nabla \mathcal{L}_{pr}(\theta_t) ( \theta_{t+1}-\theta_t ) + \frac{\mathbf{G}}{2}\|\theta_{t+1}-\theta_t\|^2.
\end{equation}
Substituting $\theta_{t+1} = \theta_t - \alpha \mathbf{d}_t$ (where $\alpha$ is the learning rate of $\theta$) gives:
\begin{equation}
    \mathcal{L}_{pr}(\theta_{t+1}) - \mathcal{L}_{pr}(\theta_t) \le -\alpha \nabla \mathcal{L}_{pr}(\theta_t) \mathbf{d}_t + \frac{\mathbf{G}\alpha^2}{2}\|\mathbf{d}_t\|^2.
\end{equation}
Using the primal constraint $\nabla \mathcal{L}_{pr}(\theta_t) \cdot \mathbf{d}_t \ge -\varepsilon$, we obtain:
\begin{equation}
    \mathcal{L}_{pr}(\theta_{t+1}) - \mathcal{L}_{pr}(\theta_t) \le \alpha \varepsilon + \frac{\mathbf{G}\alpha^2}{2}\|\mathbf{d}_t\|^2 \lesssim \alpha \varepsilon,
\end{equation}
where the final approximation holds when $\alpha$ is sufficiently small such that the quadratic term becomes negligible. Summing over $0$ to $t$, we get:
\begin{equation}
    \mathcal{L}_{pr}(\theta_t)-\mathcal{L}_{pr}(\theta_0) \;\lesssim\; \mathcal{O}(t\,\varepsilon\,\alpha),
\end{equation}
which is the upper bound in Eq.~\ref{eq: Lpr is bounded by}.
\end{proof}

\section{Convergence Analysis of Lagrangian-Guided Adaptive Erasure Modulation}
\label{appendix: Convergence Analysis of Constraint-Aware Dynamic Balancing}

In this section, we provide a theoretical analysis of the convergence properties of the proposed \textbf{Lagrangian-Guided Adaptive Erasure Modulation} algorithm. Our primary goal is to demonstrate that Algorithm~\ref{alg:z-erase}, despite employing a specific first-order Taylor approximation for computational efficiency (Eq.~\ref{first-order Taylor expansion}), successfully converges to a \textbf{Pareto stationary point}. This implies that the algorithm finds a solution where the erasure objective $\mathcal{L}_{er}$ cannot be further optimized without violating the preservation constraint defined by $\mathcal{L}_{pr}$ and tolerance $\varepsilon$.

Formally, the optimization problem in Z-Erase can be modeled as a constrained optimization problem: $\min_{\theta} \mathcal{L}_{er}(\theta)$ s.t. the preservation degradation is bounded by $\varepsilon$. Our algorithm solves this via a dynamic Lagrangian method. The core challenge in the analysis lies in the coupled dynamics of the model parameters $\theta_t$ and the constraint weight $\lambda_t$, particularly given that $\lambda_t$ is updated via an \textbf{implicit approximation} (without calculating the full Hessian or second-order information).

To rigorously prove the convergence, our analysis proceeds in three main steps:
\begin{enumerate}
    \item \textbf{Approximation Bound}: We first establish that the implicitly estimated gradient for updating $\lambda_t$ (derived from the loss difference in Eq.~\ref{first-order Taylor expansion}) serves as a valid proxy for the true gradient, with the error bounded by the learning rate $\alpha$.
    \item \textbf{Dynamic Regret Analysis}: We treat the update of the dynamic weight $\lambda_t$ as an online learning process. By leveraging results from Online Gradient Descent (OGD), we prove that the accumulation of errors (regret) due to the dynamic nature of $\lambda_t$ and $\beta$ is sub-linear and controllable. This answers why the dynamic variation of $\beta$ (or $\lambda$) does not prevent the system from settling.
    \item \textbf{Convergence to Stationarity}: Finally, combining the smoothness of the objective functions and the bounded regret, we prove that the gradient of the total objective tends to zero, ensuring the method converges to a valid stationary solution.
\end{enumerate}

\subsection{Approximation bound}

Our analysis relies on the following standard assumptions regarding the erasure objective $\mathcal{L}_{er}$ and the preservation objective $\mathcal{L}_{pr}$.

\begin{assumption}[Boundedness and Lipschitz Continuity]
\label{assum:lipschitz}
For both objective functions $f \in \{\mathcal{L}_{er}, \mathcal{L}_{pr}\}$, we assume they are bounded below and $\mathbf{L}$-Lipschitz continuous. That is, for any parameters $\theta_1, \theta_2$, there exists a constant $\mathbf{L} > 0$ such that $\|\nabla f(\theta_1)\| \le \mathbf{L}$ and $|f(\theta_1) - f(\theta_2)| \le \mathbf{L} \|\theta_1 - \theta_2\|$.
\end{assumption}

\begin{assumption}[$\mathbf{G}$-Smoothness]
\label{assum:smoothness}
The objective functions are differentiable and their gradients are $\mathbf{G}$-Lipschitz continuous ($\mathbf{G}$-Smooth). Specifically, there exists a constant $\mathbf{G} > 0$ such that:
\begin{equation}
    \|\nabla f(\theta_1) - \nabla f(\theta_2)\| \le \mathbf{G} \|\theta_1 - \theta_2\|.
\end{equation}
A key consequence of $\mathbf{G}$-smoothness is the descent lemma (or quadratic upper bound):
\begin{equation}
    |f(\theta_2) - f(\theta_1) - \nabla f(\theta_1)^\top (\theta_2 - \theta_1)| \le \frac{\mathbf{G}}{2} \|\theta_2 - \theta_1\|^2.
\end{equation}
\end{assumption}

Recall that the core of our \textbf{Lagrangian-Guided Adaptive Erasure Modulation} involves solving the dual problem (Eq.~\ref{eq: dual problem}) with respect to $\lambda_t$. The update rule for $\lambda_t$ depends on the gradient of the dual objective $\mathcal{L}_t(\lambda_t)$.
From Eq.~\ref{eq: Taking the derivative of Lt wrt lambda t and setting it to zero} (in the preceding derivation), the \emph{true gradient} of the dual objective \textit{w.r.t} $\lambda_t$ is given by:
\begin{equation}
    g_t \triangleq \nabla_{\lambda_t} \mathcal{L}_t(\lambda_t) = \nabla \mathcal{L}_{pr}(\theta_t) \cdot \mathbf{d}_t + \varepsilon,
\end{equation}
where $\mathbf{d}_t$ is the update direction for model parameters.
However, computing $\nabla \mathcal{L}_{pr}(\theta_t) \cdot \mathbf{d}_t$ exactly requires two backward passes. To improve efficiency, Z-Erase utilizes an \textbf{approximate gradient} update (Eq.~\ref{first-order Taylor expansion}) using the loss difference from the first-order Taylor approximation:
\begin{equation}
    \tilde{g}_t \triangleq \frac{1}{\alpha} (\mathcal{L}_{pr}(\theta_{t-1}) - \mathcal{L}_{pr}(\theta_{t})) + \varepsilon,
\end{equation}
where $\theta_{t} = \theta_{t-1} - \alpha \mathbf{d}_{t-1}$.

The following lemma establishes that this implicit approximation is a theoretically valid proxy, as the approximation error is bounded by the model learning rate $\alpha$.

\begin{lemma}[Approximation Error Bound]
\label{lemma:approx_error}
Under Assumptions \ref{assum:lipschitz} and \ref{assum:smoothness}, the difference between the true gradient $g_t$ and the approximate gradient $\tilde{g}_t$ used in Algorithm 1 is bounded by the step size $\alpha$:
\begin{equation}
    \| \tilde{g}_t - g_t \| \le \frac{\mathbf{G} \alpha}{2} \|\mathbf{d}_{t-1}\|^2 \le \mathcal{O}(\alpha).
\end{equation}
\end{lemma}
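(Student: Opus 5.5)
The plan is to read $\tilde{g}_t$ as a finite-difference estimate of a directional derivative and apply the descent lemma of Assumption~\ref{assum:smoothness} to the single step that produced it. One point must be settled first. The quantity $\tilde{g}_t$ uses the step $\theta_{t}=\theta_{t-1}-\alpha\mathbf{d}_{t-1}$, so the directional derivative it estimates is the one at $\theta_{t-1}$ along $\mathbf{d}_{t-1}$, namely $\nabla\mathcal{L}_{pr}(\theta_{t-1})\cdot\mathbf{d}_{t-1}+\varepsilon$. This is the dual gradient of the most recently completed step, which Algorithm~\ref{alg:z-erase} can observe at iteration $t$. I will prove the bound for this one-step-lagged dual gradient and treat it as the $g_t$ of the statement. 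The indexing shift is then only a relabelling of when the dual update is applied, and I will say so explicitly.

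\textbf{Step 1 (descent lemma).} Apply Assumption~\ref{assum:smoothness} to $f=\mathcal{L}_{pr}$ with $\theta_1=\theta_{t-1}$ and $\theta_2=\theta_t$, so that $\theta_2-\theta_1=-\alpha\mathbf{d}_{t-1}$. This gives
\begin{equation*}
\bigl|\mathcal{L}_{pr}(\theta_t)-\mathcal{L}_{pr}(\theta_{t-1})+\alpha\nabla\mathcal{L}_{pr}(\theta_{t-1})\cdot\mathbf{d}_{t-1}\bigr|\le\frac{\mathbf{G}\alpha^2}{2}\|\mathbf{d}_{t-1}\|^2 .
\end{equation*}

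\textbf{Step 2 (divide by $\alpha$).} Divide by $\alpha$ and add and subtract $\varepsilon$. The tolerance terms in $\tilde{g}_t$ and $g_t$ cancel exactly, which leaves
\begin{equation*}
|\tilde{g}_t-g_t|\le\frac{\mathbf{G}\alpha}{2}\|\mathbf{d}_{t-1}\|^2 .
\end{equation*}
This is the first inequality of Lemma~\ref{lemma:approx_error}.

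\textbf{Step 3 (the $\mathcal{O}(\alpha)$ claim).} It remains to show that $\|\mathbf{d}_{t-1}\|^2$ is bounded uniformly in $t$. Since $\mathbf{d}_{t-1}=\nabla\mathcal{L}_{er}(\theta_{t-1})+\lambda_{t-1}\nabla\mathcal{L}_{pr}(\theta_{t-1})$, the gradient bound in Assumption~\ref{assum:lipschitz} gives $\|\mathbf{d}_{t-1}\|\le(1+\lambda_{t-1})\mathbf{L}$. This reduces the claim to a uniform bound $\lambda_t\le\Lambda$.

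\textbf{Main obstacle.} Bounding $\lambda_t$ is where the argument is delicate, because the recursion $\lambda_{t+1}=\lambda_t-\beta\tilde{g}_t$ is not obviously bounded. I would try two routes, in order:
\begin{enumerate}
\item Assume, as is standard in primal--dual analyses, that the dual variable is projected onto a compact interval $[0,\Lambda]$.
\item Use a self-correcting argument. Whenever $\lambda$ is large, $\mathbf{d}$ is dominated by $\lambda\nabla\mathcal{L}_{pr}$, so $\mathcal{L}_{pr}$ decreases. This makes $\tilde{g}_t>0$, and $\lambda$ then shrinks.
\end{enumerate}
The second route needs a lower bound on $\|\nabla\mathcal{L}_{pr}\|$ away from stationarity, so I expect to fall back on the projection assumption.

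If instead $g_t$ must be read literally at $(\theta_t,\mathbf{d}_t)$, there is an extra term $|\nabla\mathcal{L}_{pr}(\theta_t)\cdot\mathbf{d}_t-\nabla\mathcal{L}_{pr}(\theta_{t-1})\cdot\mathbf{d}_{t-1}|$. I would split it into two parts:
\begin{itemize}
\item a change-of-point term, bounded by $\mathbf{G}\alpha\|\mathbf{d}_{t-1}\|\,\|\mathbf{d}_t\|$ using Assumption~\ref{assum:smoothness};
\item a change-of-direction term, bounded by $\mathbf{L}\|\mathbf{d}_t-\mathbf{d}_{t-1}\|$.
\end{itemize}
The second part is $\mathcal{O}(\alpha)$ only if $\beta=\mathcal{O}(\alpha)$, since $\mathbf{d}_t-\mathbf{d}_{t-1}$ also depends on the dual step $\lambda_t-\lambda_{t-1}=-\beta\tilde{g}_{t-1}$. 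I would therefore state that additional assumption rather than hide it.
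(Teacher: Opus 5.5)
Your proposal is correct and follows the paper's proof essentially step for step: the two-sided descent lemma on the step $\theta_{t-1}\to\theta_t=\theta_{t-1}-\alpha\mathbf{d}_{t-1}$, division by $\alpha$ so that the $\varepsilon$ terms cancel, and a uniform bound on $\|\mathbf{d}_{t-1}\|^2$ from the gradient bound in Assumption~\ref{assum:lipschitz}. The two subtleties you flag are ones the paper passes over silently: its proof actually compares $\tilde{g}_t$ against the lagged quantity $\nabla\mathcal{L}_{pr}(\theta_{t-1})\cdot\mathbf{d}_{t-1}+\varepsilon$, and it asserts that $\mathbf{d}$ is bounded as a ``linear combination of gradients'' while only positing $\lambda_t\le D$ later, in Lemma~\ref{lemma:func_variation} and Theorem~\ref{thm:stationarity}; your explicit reindexing and bounded-$\lambda$ (projection) assumption therefore make the same argument rigorous rather than change it.
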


\begin{proof}
Substituting the parameter update rule $\theta_{t} = \theta_{t-1} - \alpha \mathbf{d}_{t-1}$ into the smoothness inequality (Assumption \ref{assum:smoothness}) for the preservation loss $\mathcal{L}_{pr}$, we have:
\begin{equation}
    \mathcal{L}_{pr}(\theta_{t}) \le \mathcal{L}_{pr}(\theta_{t-1}) + \nabla \mathcal{L}_{pr}(\theta_{t-1})^\top (-\alpha \mathbf{d}_{t-1}) + \frac{\mathbf{G}}{2} \|-\alpha \mathbf{d}_{t-1}\|^2.
\end{equation}
Rearranging the terms to isolate the dot product $\nabla \mathcal{L}_{pr}(\theta_{t-1})^\top \mathbf{d}_{t-1}$:
\begin{equation}
    \alpha \nabla \mathcal{L}_{pr}(\theta_{t-1})^\top \mathbf{d}_{t-1} \le \mathcal{L}_{pr}(\theta_{t-1}) - \mathcal{L}_{pr}(\theta_{t}) + \frac{\mathbf{G} \alpha^2}{2} \|\mathbf{d}_{t-1}\|^2.
\end{equation}
Similarly, using the lower bound property of smoothness, we can obtain:
\begin{equation}
    \mathcal{L}_{pr}(\theta_{t}) \ge \mathcal{L}_{pr}(\theta_{t-1}) - \alpha \nabla \mathcal{L}_{pr}(\theta_{t-1})^\top \mathbf{d}_{t-1} - \frac{\mathbf{G} \alpha^2}{2} \|\mathbf{d}_{t-1}\|^2,
\end{equation}
which implies:
\begin{equation}
    \alpha \nabla \mathcal{L}_{pr}(\theta_{t-1})^\top \mathbf{d}_{t-1} \ge \mathcal{L}_{pr}(\theta_{t-1}) - \mathcal{L}_{pr}(\theta_{t}) - \frac{\mathbf{G} \alpha^2}{2} \|\mathbf{d}_{t-1}\|^2.
\end{equation}
Combining these two inequalities, the absolute difference satisfies:
\begin{equation}
    \left| (\mathcal{L}_{pr}(\theta_{t-1}) - \mathcal{L}_{pr}(\theta_{t})) - \alpha \nabla \mathcal{L}_{pr}(\theta_{t-1})^\top \mathbf{d}_{t-1} \right| \le \frac{\mathbf{G} \alpha^2}{2} \|\mathbf{d}_{t-1}\|^2.
\end{equation}
Now, we explicitly compare the gradients. The error is:
\begin{align}
    \| \tilde{g}_t - g_t \| &= \left| \left( \frac{1}{\alpha} (\mathcal{L}_{pr}(\theta_{t-1}) - \mathcal{L}_{pr}(\theta_{t})) + \varepsilon \right) - \left( \nabla \mathcal{L}_{pr}(\theta_{t-1}) \cdot \mathbf{d}_{t-1} + \varepsilon \right) \right| \\
    &= \frac{1}{\alpha} \left| (\mathcal{L}_{pr}(\theta_{t-1}) - \mathcal{L}_{pr}(\theta_{t})) - \alpha \nabla \mathcal{L}_{pr}(\theta_{t-1}) \cdot \mathbf{d}_{t-1} \right| \\
    &\le \frac{1}{\alpha} \cdot \frac{\mathbf{G} \alpha^2}{2} \|\mathbf{d}_{t-1}\|^2 \\
    &= \frac{\mathbf{G} \alpha}{2} \|\mathbf{d}_{t-1}\|^2.
\end{align}
Since $\mathcal{L}_{er}$ and $\mathcal{L}_{pr}$ are Lipschitz continuous (Assumption \ref{assum:lipschitz}), the gradient $\nabla \mathcal{L}_{er}$ and $\nabla \mathcal{L}_{pr}$ are bounded, which implies the update direction $\mathbf{d}_t$ (a linear combination of gradients) is also bounded, \textit{i.e.}, $\|\mathbf{d}_{t-1}\|^2 \le C$. Thus, strictly speaking, $\| \tilde{g}_t - g_t \| \le \mathcal{O}(\alpha)$.
This concludes the proof that the implicit gradient approximation is asymptotically accurate as $\alpha \to 0$.
\end{proof}

\subsection{Dynamic regret analysis}
\label{sec:dynamic_regret}

Since the model parameters $\theta_t$ are updated at each step, the dual objective function $\mathcal{L}_t(\lambda)$ is actually time-varying. Consequently, the update of the constraint weight $\lambda_t$ via gradient ascent can be formulated as an Online Convex Optimization (OCO) problem. To prove convergence, we must bound the \textit{dynamic regret}, which measures the accumulated loss difference between our algorithm's choice of $\lambda_t$ and the sequence of optimal comparators $\lambda_t^*$.

We define the dual objective at step $t$ following Eq.~\ref{eq: dual problem} as $\mathcal{L}_t(\lambda_t) = \frac{1}{2}\|\nabla \mathcal{L}_{er}(\theta_t) + \lambda_t \nabla \mathcal{L}_{pr}(\theta_t)\|^2 + \lambda_t \varepsilon$. Note that we aim to minimize this dual objective \textit{w.r.t} $\lambda_t$.
The dynamic regret is defined as:
\begin{equation}
    \mathcal{R}_T \triangleq \sum_{t=1}^T \left( \mathcal{L}_t(\lambda_t) - \mathcal{L}_t(\lambda_t^*) \right),
\end{equation}
where $\lambda_t^* = \arg\min_{\lambda_t \ge 0} \mathcal{L}_t(\lambda_t)$ is the optimal dual variable at step $t$ (derived in Eq.~\ref{eq: lambda_t closed-form solution}).

To bound this regret, we first establish a bound on the \emph{total functional variation of the dual objective}, which quantifies how much the optimization landscape changes between steps.

\begin{lemma}[Total Functional Variation]
\label{lemma:func_variation}
Under Assumptions \ref{assum:lipschitz} and \ref{assum:smoothness}, let $D$ be the upper bound of the dual variable $\lambda$. The variation of the dual objective function between consecutive steps is bounded by the learning rate $\alpha$:
\begin{equation}
    \sum_{t=0}^{T-1} \sup_{\lambda \in [0, D]} | \mathcal{L}_{t+1}(\lambda) - \mathcal{L}_t(\lambda) | \le C_1 T \alpha,
\end{equation}
where $C_1$ is a constant depending on Lipschitz constant $\mathbf{L}$, Smoothness constant $\mathbf{G}$, and $D$.
\end{lemma}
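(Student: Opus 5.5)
The plan is to bound the per-step variation uniformly in $\lambda\in[0,D]$ by a constant times $\|\theta_{t+1}-\theta_t\|$, and then use the update rule $\theta_{t+1}-\theta_t=-\alpha\mathbf{d}_t$ with $\mathbf{d}_t$ bounded. Summing $T$ identical bounds then gives the claimed $C_1T\alpha$.

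Write $a_t=\nabla\mathcal{L}_{er}(\theta_t)$ and $b_t=\nabla\mathcal{L}_{pr}(\theta_t)$, and set $u_t(\lambda)=a_t+\lambda b_t$. The linear term $\lambda\varepsilon$ is identical in $\mathcal{L}_{t+1}$ and $\mathcal{L}_t$, so it cancels. We are left with
\begin{equation*}
\mathcal{L}_{t+1}(\lambda)-\mathcal{L}_t(\lambda)=\tfrac12\big(\|u_{t+1}(\lambda)\|^2-\|u_t(\lambda)\|^2\big)=\tfrac12\big(u_{t+1}(\lambda)+u_t(\lambda)\big)\cdot\big(u_{t+1}(\lambda)-u_t(\lambda)\big).
\end{equation*}
By Cauchy--Schwarz, the absolute value is at most $\tfrac12\|u_{t+1}+u_t\|\,\|u_{t+1}-u_t\|$, and we bound each factor separately.

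For the first factor, Assumption~\ref{assum:lipschitz} gives $\|a_t\|,\|b_t\|\le\mathbf{L}$. Hence $\|u_t(\lambda)\|\le(1+D)\mathbf{L}$, and so $\|u_{t+1}+u_t\|\le 2(1+D)\mathbf{L}$.

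For the second factor, Assumption~\ref{assum:smoothness} gives
\begin{equation*}
\|u_{t+1}-u_t\|\le\|a_{t+1}-a_t\|+\lambda\|b_{t+1}-b_t\|\le(1+D)\mathbf{G}\|\theta_{t+1}-\theta_t\|=(1+D)\mathbf{G}\alpha\|\mathbf{d}_t\|.
\end{equation*}

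We also need a uniform bound on $\|\mathbf{d}_t\|$. The direction used by Algorithm~\ref{alg:z-erase} is $a_t+\lambda_{t+1}b_t$, or $a_t$ in the unconstrained case. In both cases $\|\mathbf{d}_t\|\le(1+D)\mathbf{L}$, provided the multipliers stay in $[0,D]$. This is the same boundedness argument used at the end of Lemma~\ref{lemma:approx_error}.

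Combining the pieces yields
\begin{equation*}
\sup_{\lambda\in[0,D]}|\mathcal{L}_{t+1}(\lambda)-\mathcal{L}_t(\lambda)|\le(1+D)^3\mathbf{L}^2\mathbf{G}\,\alpha.
\end{equation*}
Summing over $t=0,\dots,T-1$ gives the statement with $C_1=(1+D)^3\mathbf{L}^2\mathbf{G}$.

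The main obstacle is justifying the boundedness of $\lambda$. The iterates $\lambda_t$ come from unprojected gradient steps and could exceed $D$, in which case the bound on $\|\mathbf{d}_t\|$ fails. I would handle this by interpreting the $\lambda$-update as projected onto $[0,D]$, which is standard in OCO and consistent with the lemma's hypothesis that $D$ bounds the dual variable. Alternatively, one can argue that $\lambda_t^*\le(\mathbf{L}^2+\varepsilon)/\|b_t\|^2$ whenever $\|b_t\|$ is bounded below, and choose $D$ accordingly. If $\mathbf{d}_t$ is instead computed from stochastic minibatch gradients, the same bound holds because Assumption~\ref{assum:lipschitz} is imposed on the functions themselves.
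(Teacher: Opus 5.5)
Your proof is correct and follows the paper's argument essentially step for step: the same difference-of-squares factorization (with the $\lambda\varepsilon$ term cancelling), the Lipschitz bound $(1+D)\mathbf{L}$ on each composite gradient, and the smoothness bound $(1+D)\mathbf{G}\alpha\|\mathbf{d}_t\|$ on their difference. The one difference is that you replace the paper's unspecified $\max_t\|\mathbf{d}_t\|$ with the explicit $(1+D)\mathbf{L}$, so $C_1=(1+D)^3\mathbf{L}^2\mathbf{G}$ really depends only on $\mathbf{L},\mathbf{G},D$ as the lemma states; you also rightly flag that this needs the iterates $\lambda_t$ themselves to stay in $[0,D]$ (e.g.\ by projection), which the paper assumes tacitly, and your alternative bound on the comparator $\lambda_t^*$ would not by itself control those iterates.
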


\begin{proof}
For any $\lambda$, the variation is:
\begin{align}
    |\mathcal{L}_{t+1}(\lambda) - \mathcal{L}_t(\lambda)| &= \left| \frac{1}{2}\|\nabla \mathcal{L}_{er}(\theta_{t+1}) + \lambda \nabla \mathcal{L}_{pr}(\theta_{t+1})\|^2 - \frac{1}{2}\|\nabla \mathcal{L}_{er}(\theta_t) + \lambda \nabla \mathcal{L}_{pr}(\theta_t)\|^2 \right| \\
    &= \frac{1}{2} \left| \left( \mathbf{v}_{t+1} + \mathbf{v}_t \right)^\top \left( \mathbf{v}_{t+1} - \mathbf{v}_t \right) \right|,
\end{align}
where we denote the composite gradient vector $\mathbf{v}_t(\lambda) \triangleq \nabla \mathcal{L}_{er}(\theta_t) + \lambda \nabla \mathcal{L}_{pr}(\theta_t)$.
By L-Lipschitz continuity, $\|\mathbf{v}_t\| \le \mathcal{L}(1+\lambda)$. By $\mathbf{G}$-smoothness, $\|\mathbf{v}_{t+1} - \mathbf{v}_t\| \le \mathbf{G}(1+\lambda)\|\theta_{t+1} - \theta_t\| = \mathbf{G}(1+\lambda)\alpha \|\mathbf{d}_t\|$.
Substituting these bounds:
\begin{align}
    |\mathcal{L}_{t+1}(\lambda) - \mathcal{L}_t(\lambda)| &\le \frac{1}{2} \cdot 2\mathbf{L}(1+\lambda) \cdot \mathbf{G}(1+\lambda)\alpha \|\mathbf{d}_t\| \\
    &\le \alpha \cdot \mathbf{G} \mathbf{L} (1+D)^2 \|\mathbf{d}_t\|.
\end{align}
Since $\|\mathbf{d}_t\|$ is bounded (as gradients are bounded), let $C_1 = \mathbf{G} \mathbf{L} (1+D)^2 \max \|\mathbf{d}_t\|$. Summing over $t$ yields the lemma.
\end{proof}

With the variation bound established, we invoke the standard dynamic regret bound for Online Gradient Descent (OGD)~\cite{Onlineconvexprogramming, jadbabaie2015onlineoptimizationcompeting}.

\begin{theorem}[Dynamic Regret Bound]
\label{thm:dynamic_regret}
Suppose the updates for $\lambda_t$ follow the gradient descent rule with a fixed step size $\beta$ (as in Eq.~\ref{eq: iteratively update lambda t via gradient descent approximation}), and model updates use a fixed learning rate $\alpha$. Using the approximate gradient $\tilde{g}_t$ where $\|\tilde{g}_t - g_t\| \le \mathcal{O}(\alpha)$, the average dynamic regret is bounded by terms dependent on the step sizes and the horizon $T$:
\begin{equation}
    \frac{1}{T} \sum_{t=1}^T \left( \mathcal{L}_t(\lambda_t) - \mathcal{L}_t(\lambda_t^*) \right) \le \mathcal{O}\left(\frac{1}{\beta T}\right) + \mathcal{O}(\beta) + \mathcal{O}(\alpha).
\end{equation}
As $T \to \infty$, the average regret converges to a neighborhood $\mathcal{O}(\alpha + \beta)$ determined by the chosen hyperparameters.
\end{theorem}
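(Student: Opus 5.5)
We follow the standard dynamic-regret analysis of projected online gradient descent, treating the dual losses $\mathcal{L}_t(\lambda)$ as a time-varying sequence of convex functions on $[0,D]$. Each $\mathcal{L}_t$ is a convex quadratic in $\lambda$ with curvature $\|\nabla\mathcal{L}_{pr}(\theta_t)\|^2\le \mathbf{L}^2$. Its derivative is bounded on $[0,D]$ by $|g_t|\le \mathbf{L}^2(1+D)+\varepsilon=:B$ (Assumption~\ref{assum:lipschitz}). By Lemma~\ref{lemma:approx_error}, $|\tilde g_t|\le B+\mathcal{O}(\alpha)$. We take the $\lambda$-update to include projection onto $[0,D]$; this is implicit in the definition of $D$ in Lemma~\ref{lemma:func_variation} and does not increase distances to feasible comparators.

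\textbf{One-step inequality.} From nonexpansiveness of the projection,
\begin{equation*}
|\lambda_{t+1}-\lambda_t^*|^2 \le |\lambda_t-\lambda_t^*|^2 - 2\beta\tilde g_t(\lambda_t-\lambda_t^*) + \beta^2\tilde g_t^2 .
\end{equation*}
Convexity gives $\mathcal{L}_t(\lambda_t)-\mathcal{L}_t(\lambda_t^*)\le g_t(\lambda_t-\lambda_t^*)$. We write $g_t=\tilde g_t+(g_t-\tilde g_t)$ and bound the error term by $D\cdot\mathcal{O}(\alpha)$ using Lemma~\ref{lemma:approx_error}. This yields
\begin{equation*}
\mathcal{L}_t(\lambda_t)-\mathcal{L}_t(\lambda_t^*) \le \frac{|\lambda_t-\lambda_t^*|^2-|\lambda_{t+1}-\lambda_t^*|^2}{2\beta} + \frac{\beta}{2}(B+\mathcal{O}(\alpha))^2 + \mathcal{O}(D\alpha).
\end{equation*}
Summing over $t$, the last two terms contribute $\mathcal{O}(\beta T)+\mathcal{O}(\alpha T)$.

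\textbf{Main obstacle: the moving comparator.} The first term does not telescope, because $\lambda_t^*$ changes with $t$. The naive route rewrites $|\lambda_{t+1}-\lambda_t^*|^2$ in terms of $\lambda_{t+1}^*$ at a cost of $4D|\lambda_{t+1}^*-\lambda_t^*|$. This requires a path-length bound, which is delicate because $\lambda_t^*$ has $\|\nabla\mathcal{L}_{pr}\|^2$ in its denominator. I would therefore avoid path length and use the functional variation of Lemma~\ref{lemma:func_variation}, in the style of \cite{jadbabaie2015onlineoptimizationcompeting}. The plan is to split the horizon into blocks of length $\Delta$ and compare against one fixed $\lambda$ per block, namely the minimizer of the first function in the block. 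Within a block the distance terms telescope, giving $D^2/(2\beta)$ per block. Replacing the fixed comparator by the per-step optimum costs at most $2\Delta$ times the within-block variation, which is $\mathcal{O}(\Delta^2\alpha)$ per block by Lemma~\ref{lemma:func_variation}. The total is
\begin{equation*}
\mathcal{R}_T \le \frac{T}{\Delta}\cdot\frac{D^2}{2\beta} + \mathcal{O}(\beta T)+\mathcal{O}(\alpha T) + \mathcal{O}(T\Delta\alpha).
\end{equation*}
With $\Delta=T$ the first term is $\mathcal{O}(1/\beta)$, but the last term becomes $\mathcal{O}(T^2\alpha)$, so any fixed choice of $\Delta$ must trade these off. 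To avoid this, I would instead exploit that the comparator drift per step is itself $\mathcal{O}(\alpha)$, via the Lipschitz dependence of $\mathcal{L}_t$ on $\theta_t$ from Lemma~\ref{lemma:func_variation} together with bounded $\|\mathbf{d}_t\|$. This absorbs the per-step comparator shift into the $\mathcal{O}(\alpha)$ term with constant $D$, and gives $\mathcal{R}_T\le D^2/(2\beta)+\mathcal{O}(\beta T)+\mathcal{O}(\alpha T)$.

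Dividing by $T$ gives the claimed bound $\mathcal{O}(1/(\beta T))+\mathcal{O}(\beta)+\mathcal{O}(\alpha)$. Letting $T\to\infty$ leaves the neighborhood $\mathcal{O}(\alpha+\beta)$. The step I expect to be hardest to make rigorous is justifying the $\mathcal{O}(\alpha)$ per-step comparator drift. It may need a lower bound on $\|\nabla\mathcal{L}_{pr}\|$ where $\lambda_t^*>0$, or else a restatement using functional variation only.
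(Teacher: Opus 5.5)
\textbf{The gap is in your final step}, where you say the per-step comparator shift is absorbed ``into the $\mathcal{O}(\alpha)$ term with constant $D$.'' In the OGD bookkeeping, the comparator shift enters through the distance terms, and those carry the prefactor $1/(2\beta)$. Using
\[
|\lambda_{t+1}-\lambda_{t+1}^*|^2-|\lambda_{t+1}-\lambda_t^*|^2=(\lambda_t^*-\lambda_{t+1}^*)(2\lambda_{t+1}-\lambda_t^*-\lambda_{t+1}^*)\le 2D\,|\lambda_{t+1}^*-\lambda_t^*|,
\]
your first terms sum to at most $D^2/(2\beta)+(D/\beta)\sum_t|\lambda_{t+1}^*-\lambda_t^*|$. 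An $\mathcal{O}(\alpha)$ per-step drift therefore costs $\mathcal{O}(D\alpha/\beta)$ per step, not $\mathcal{O}(D\alpha)$. What your argument actually proves is $\mathcal{O}(1/(\beta T))+\mathcal{O}(\beta)+\mathcal{O}(\alpha)+\mathcal{O}(\alpha/\beta)$.

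This extra term is not a bookkeeping artifact. A constant-step iterate lags a minimizer that moves at speed $\delta$ by roughly $\delta/(\beta\mu)$. If $\beta\ll\alpha$, the iterate $\lambda_t$ moves only $\mathcal{O}(\beta)$ per step, while the hypotheses allow $\lambda_t^*$ to move $\Theta(\alpha)$. Since the hypotheses only control per-step variation, nothing then forces the average regret to shrink with $\alpha+\beta$. So your route gives the neighborhood $\mathcal{O}(\alpha+\beta)$ only if $\beta$ is held bounded below, in which case $\alpha/\beta=\mathcal{O}(\alpha)$. Otherwise the best tradeoff is $\beta\asymp\sqrt{\alpha}$, which gives $\mathcal{O}(\sqrt{\alpha})$. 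Your block argument does no better: optimizing $\Delta$ gives $\mathcal{O}(\sqrt{\alpha/\beta})$.

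By contrast, the step you flagged as hardest is fine, and no extra assumption is needed. Having $\lambda_t^*>0$ forces $-\nabla\mathcal{L}_{er}\cdot\nabla\mathcal{L}_{pr}>\varepsilon$, and hence $\|\nabla\mathcal{L}_{pr}(\theta_t)\|>\varepsilon/\mathbf{L}$. This gives two things:
\begin{itemize}
\item $\lambda_t^*\le\mathbf{L}^2/\varepsilon$, which is a legitimate choice of $D$;
\item the clipped map $\theta\mapsto\lambda^*(\theta)$ is Lipschitz with constant of order $\mathbf{G}(\mathbf{L}/\varepsilon+\mathbf{L}^3/\varepsilon^2)$.
\end{itemize}
So the path length really is $\mathcal{O}(\alpha T)$; it just gets multiplied by $D/\beta$.

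For comparison, the paper's proof is only a sketch. It quotes the constant-step OGD bound $\mathcal{O}(1/(\beta T))+\mathcal{O}(\beta)$ as the ``tracking error'' and adds $\mathcal{O}(\alpha)$ for the gradient error. That is the static-regret bound: it never charges for the moving comparator, and Lemma~\ref{lemma:func_variation} is not actually used. You correctly spotted the obstacle the paper skips, but your workaround does not remove it. A correct result along your lines would carry an explicit $\mathcal{O}(\alpha/\beta)$ term.
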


\begin{proof}
(Sketch) The regret in online learning with inexact gradients can be decomposed into two components: one arising from the non-stationarity of the environment (tracking error) and one from the gradient approximation error.
Standard results in Online Convex Optimization (OCO) with \textit{constant step sizes} indicate that the tracking error is upper-bounded by $\mathcal{O}(\beta)$ (representing the system's reaction speed to dynamic changes) plus an initialization term decaying as $\mathcal{O}(\frac{1}{\beta T})$.
Additionally, Lemma \ref{lemma:approx_error} ensures that the gradient approximation error is bounded by $\mathcal{O}(\alpha)$ at each step.
Averaging over $T$ steps, the total regret bound becomes $\mathcal{O}(\frac{1}{\beta T} + \beta + \alpha)$.
This result provides sufficient theoretical justification for our practical implementation: while exact convergence to zero requires diminishing steps, establishing small fixed $\alpha$ and $\beta$ guarantees the system stabilizes within a tight, controlled neighborhood of the optimal Pareto front.
\end{proof}

\subsection{Convergence to Pareto stationarity}
With the dynamic regret bound established, we now prove the convergence of the model parameters $\theta$ to a stationary point. We construct a total composite objective function at each step $t$ defined by the dynamically tracked weight $\lambda_t$:
\begin{equation}
    \mathcal{J}_t(\theta) \triangleq \mathcal{L}_{er}(\theta) + \lambda_t \mathcal{L}_{pr}(\theta).
\end{equation}
Our goal is to show that the gradient of this composite objective vanishes, implying that the algorithm reaches a consistent solution.

\begin{theorem}[Pareto Stationarity]
\label{thm:stationarity}
Under Assumptions \ref{assum:lipschitz} and \ref{assum:smoothness}, and given the diminishing step sizes $\alpha, \beta$ satisfying the conditions in Theorem \ref{thm:dynamic_regret}, the sequence of iterates generated by Algorithm 1 satisfies:
\begin{equation}
    \lim_{T \to \infty} \min_{t=1,\dots,T} \|\nabla \mathcal{L}_{er}(\theta_t) + \lambda_t \nabla \mathcal{L}_{pr}(\theta_t) \|^2 = 0.
\end{equation}
\end{theorem}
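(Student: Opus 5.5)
The argument is a descent argument on a Lyapunov-type potential. It treats the $\lambda$-dynamics as a slowly varying perturbation controlled by Theorem~\ref{thm:dynamic_regret}.

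\textbf{Step 1: one-step descent of $\mathcal{J}_t$.} Write the model update of Algorithm~\ref{alg:z-erase} as $\theta_{t+1}=\theta_t-\alpha\,\mathbf{v}_t$ with $\mathbf{v}_t\triangleq\nabla\mathcal{L}_{er}(\theta_t)+\lambda_{t+1}\nabla\mathcal{L}_{pr}(\theta_t)$. Assume $\lambda$ stays in $[0,D]$, as in Lemma~\ref{lemma:func_variation}; I would enforce this by projecting $\lambda$ onto $[0,D]$ and note the assumption explicitly. Then $\mathcal{J}_t$ is $(1+D)\mathbf{G}$-smooth, and the descent lemma (Assumption~\ref{assum:smoothness}) gives
\begin{equation*}
\mathcal{J}_{t}(\theta_{t+1})\le \mathcal{J}_{t}(\theta_t)-\alpha\Big(1-\tfrac{(1+D)\mathbf{G}\alpha}{2}\Big)\|\mathbf{v}_t\|^2 + \alpha\,|\lambda_{t+1}-\lambda_t|\,\mathbf{L}\,\|\mathbf{v}_t\|.
\end{equation*}
The last term absorbs the index mismatch between $\lambda_t$ and $\lambda_{t+1}$. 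Since $|\lambda_{t+1}-\lambda_t|=\beta|\tilde g_t|$ and $|\tilde g_t|\le \mathbf{L}\|\mathbf{d}_{t-1}\|+\varepsilon+\mathcal{O}(\alpha)$ (Lemma~\ref{lemma:approx_error}), this term is $\mathcal{O}(\alpha\beta)$.

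\textbf{Step 2: switching the weight.} Next pass from $\mathcal{J}_t$ to $\mathcal{J}_{t+1}$:
\begin{equation*}
\mathcal{J}_{t+1}(\theta_{t+1})-\mathcal{J}_t(\theta_{t+1})=(\lambda_{t+1}-\lambda_t)\mathcal{L}_{pr}(\theta_{t+1}).
\end{equation*}
Because $\mathcal{L}_{pr}$ is bounded below and, on the trajectory, bounded above via the degradation bound of Appendix~\ref{appendix: Derivation of upper bound on preservation degradation}, this increment is $\mathcal{O}(\beta)$ per step. A sharper route is summation by parts, $\sum_t(\lambda_{t+1}-\lambda_t)\mathcal{L}_{pr}(\theta_{t+1})$, rearranged into boundary terms plus $\sum_t\lambda_t(\mathcal{L}_{pr}(\theta_t)-\mathcal{L}_{pr}(\theta_{t+1}))$. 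This is bounded by $D$ times the telescoping variation, together with the $\mathcal{O}(\alpha)$ functional variation of Lemma~\ref{lemma:func_variation}.

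\textbf{Step 3: telescoping.} Summing Steps 1 and 2 over $t=1,\dots,T$ and using boundedness below of $\mathcal{J}$ gives
\begin{equation*}
\min_{t\le T}\|\mathbf{v}_t\|^2\le\frac1T\sum_{t=1}^T\|\mathbf{v}_t\|^2\le \frac{\mathcal{J}_1(\theta_1)-\inf\mathcal{J}}{c\,\alpha T}+\mathcal{O}(\beta)+\mathcal{O}\!\Big(\frac{\beta}{\alpha}\Big)\cdot(\text{variation term}).
\end{equation*}
Replacing $\lambda_{t+1}$ by $\lambda_t$ in $\mathbf{v}_t$ costs only $\mathcal{O}(\beta)$. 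With the diminishing schedule referenced in the statement ($\alpha_t,\beta_t\to0$, $\sum\alpha_t=\infty$, $\beta_t/\alpha_t\to0$), the standard Robbins--Monro weighted-average version of this inequality drives the right side to $0$. This yields $\lim_T\min_t\|\nabla\mathcal{L}_{er}(\theta_t)+\lambda_t\nabla\mathcal{L}_{pr}(\theta_t)\|^2=0$. Theorem~\ref{thm:dynamic_regret} is used to argue that $\lambda_t$ tracks $\lambda_t^*$, so the limit point is a KKT/Pareto-stationary point and not an arbitrary weighting.

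\textbf{Main obstacle.} The hard step is Step 2: controlling the drift of $\lambda_t$, which multiplies $\mathcal{L}_{pr}$ inside the potential. With constant $\beta$ the accumulated drift is $\mathcal{O}(\beta T)$, which is not beaten by $\alpha T$ unless $\beta\ll\alpha$. So the plan requires a two-timescale condition, $\lambda$ slower than $\theta$, or else a uniform bound on $\lambda_t$ plus the summation-by-parts trick so that only the bounded total change of $\mathcal{L}_{pr}$ enters. I would try the two-timescale condition first, since it gives the cleanest telescoping.
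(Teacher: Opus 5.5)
Your plan has the same skeleton as the paper's proof:
\begin{itemize}
\item the descent lemma for $\mathcal{J}_t=\mathcal{L}_{er}+\lambda_t\mathcal{L}_{pr}$;
\item the switching identity $\mathcal{J}_{t+1}(\theta_{t+1})-\mathcal{J}_t(\theta_{t+1})=(\lambda_{t+1}-\lambda_t)\mathcal{L}_{pr}(\theta_{t+1})$;
\item telescoping, then normalization by $\sum_t\alpha_t$.
\end{itemize}

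The substantive difference is the drift term, and there you are more careful than the paper. The paper asserts that $\sum_t(\lambda_{t+1}-\lambda_t)\mathcal{L}_{pr}(\theta_{t+1})$ is bounded by a constant $C_2$, citing ``bounded variation of $\lambda$'' from Theorem~\ref{thm:dynamic_regret}. But that theorem is a fixed-step bound on \emph{average} regret. It gives nothing like $\sum_t|\lambda_{t+1}-\lambda_t|<\infty$. The paper's argument really needs summable $\lambda$-increments (e.g.\ $\sum_t\beta_t<\infty$) and bounded $\mathcal{L}_{pr}$. Granting those, it yields the rate $\min_{t\le T}\|\mathbf{d}_t\|^2\le C_2/\sum_{t\le T}\alpha_t$.

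You instead pay $\mathcal{O}(\beta_t)$ per step and need only $\sum_{t\le T}\beta_t=o(\sum_{t\le T}\alpha_t)$. This follows from $\beta_t/\alpha_t\to0$ by Stolz--Ces\`aro and permits non-summable $\beta_t$. You also account for Algorithm~\ref{alg:z-erase} stepping from $\theta_t$ with $\lambda_{t+1}$ rather than $\lambda_t$, which the paper ignores.

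Your projection onto $[0,D]$ is essential, not cosmetic. Summing the unprojected recursion with constant steps gives $\lambda_{T+1}=\lambda_1+\frac{\beta}{\alpha}(\mathcal{L}_{pr}(\theta_T)-\mathcal{L}_{pr}(\theta_0))-\beta\varepsilon T$. So if $\mathcal{L}_{pr}$ stays bounded, the literal algorithm sends $\lambda_t\to-\infty$. Without the projection, the bound $\lambda_t\in[0,D]$ that both you and the paper rely on is unavailable.

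Three supporting steps need repair.

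\textbf{First}, the degradation bound does not give $\sup_t\mathcal{L}_{pr}(\theta_t)<\infty$. It only yields $\mathcal{L}_{pr}(\theta_t)-\mathcal{L}_{pr}(\theta_0)\lesssim\varepsilon\sum_{s<t}\alpha_s$, which is not uniform since $\sum_t\alpha_t=\infty$. It also presupposes the exact constraint $\nabla\mathcal{L}_{pr}(\theta_t)\cdot\mathbf{d}_t\ge-\varepsilon$, which the approximate $\lambda$-update does not enforce. You have two options:
\begin{itemize}
\item assume boundedness of $\mathcal{L}_{pr}$ along the trajectory outright (the paper's $C_2$ needs it just as silently); or
\item use only Lipschitzness, $\mathcal{L}_{pr}(\theta_{t+1})=\mathcal{O}(1+\sum_{s\le t}\alpha_s)$, and strengthen the schedule to $\sum_{t\le T}\beta_t\sum_{s\le t}\alpha_s=o(\sum_{t\le T}\alpha_t)$.
\end{itemize}

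\textbf{Second}, the summation-by-parts fallback buys nothing. It merely undoes the telescoping of $\lambda_t\mathcal{L}_{pr}$. The resulting sum $\sum_t\lambda_t(\mathcal{L}_{pr}(\theta_t)-\mathcal{L}_{pr}(\theta_{t+1}))$ does not telescope when $\lambda_t$ varies. Bounding it by $D$ times the total variation of $\mathcal{L}_{pr}$ gives $\mathcal{O}(\sum_t\alpha_t)$, which survives the normalization. Stay with the two-timescale route.

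\textbf{Third}, the closing appeal to Theorem~\ref{thm:dynamic_regret} for $\lambda_t$ tracking $\lambda_t^*$ is not needed for the stated limit. It also clashes with your schedule. $\lambda_t^*$ is a function of $\theta_t$ and generically moves at the $\alpha$-rate, so tracking it would require $\lambda$ to be the fast variable. That is the opposite of $\beta_t/\alpha_t\to0$.
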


\begin{proof}
Consider the update rule $\theta_{t+1} = \theta_t - \alpha_t \mathbf{d}_t$. By the construction of our algorithm (Eq.~\ref{eq: dt closed-form solution}), the update direction is the gradient of the Lagrangian:
\begin{equation}
    \mathbf{d}_t = \nabla \mathcal{L}_{er}(\theta_t) + \lambda_t \nabla \mathcal{L}_{pr}(\theta_t).
\end{equation}
Applying the $\mathbf{G}$-smoothness assumption to the composite function $\mathcal{J}_t(\theta)$ (noting that $\mathcal{J}_t$ is smooth since it is a linear combination of smooth functions):
\begin{align}
    \mathcal{J}_t(\theta_{t+1}) &\le \mathcal{J}_t(\theta_t) + \nabla \mathcal{J}_t(\theta_t)^\top (\theta_{t+1} - \theta_t) + \frac{\mathbf{G}(1+\lambda_t)}{2} \|\theta_{t+1} - \theta_t\|^2 \\
    &= \mathcal{J}_t(\theta_t) - \alpha \|\mathbf{d}_t\|^2 + \frac{\mathbf{G}(1+\lambda_t)\alpha_t^2}{2} \|\mathbf{d}_t\|^2 \\
    &= \mathcal{J}_t(\theta_t) - \alpha \left( 1 - \frac{\alpha \mathbf{G}(1+\lambda_t)}{2} \right) \|\mathbf{d}_t\|^2.
\end{align}
Since $\lambda_t$ is bounded (let $\lambda_t \le D$) and we choose a sufficiently small step size $\alpha < \frac{2}{\mathbf{G}(1+D)}$, the coefficient $\left( 1 - \frac{\alpha \mathbf{G}(1+\lambda_t)}{2} \right)$ is positive. Let this coefficient be $c > 0$. Rearranging the terms, we get:
\begin{equation}
    c \alpha \|\mathbf{d}_t\|^2 \le \mathcal{J}_t(\theta_t) - \mathcal{J}_t(\theta_{t+1}).
\end{equation}
However, unlike standard SGD, our objective $\mathcal{J}_t$ changes at each step because $\lambda_t$ updates to $\lambda_{t+1}$. We decompose the total variation:
\begin{align}
    \mathcal{J}_{t+1}(\theta_{t+1}) - \mathcal{J}_t(\theta_t) &= \underbrace{\mathcal{J}_{t+1}(\theta_{t+1}) - \mathcal{J}_t(\theta_{t+1})}_{\text{due to } \lambda \text{ update}} + \underbrace{\mathcal{J}_t(\theta_{t+1}) - \mathcal{J}_t(\theta_t)}_{\text{due to } \theta \text{ update}} \\
    &= (\lambda_{t+1} - \lambda_t)\mathcal{L}_{pr}(\theta_{t+1}) + (\mathcal{J}_t(\theta_{t+1}) - \mathcal{J}_t(\theta_t)).
\end{align}
Summing from $t=1$ to $T$:
\begin{equation}
    \sum_{t=1}^T c \alpha \|\mathbf{d}_t\|^2 \le \mathcal{J}_1(\theta_1) - \mathcal{J}_{T+1}(\theta_{T+1}) + \sum_{t=1}^T (\lambda_{t+1} - \lambda_t)\mathcal{L}_{pr}(\theta_{t+1}).
\end{equation}
Standard telescope sum analysis combined with the bounded variation of $\lambda$ (from Theorem \ref{thm:dynamic_regret}) ensures the right side is bounded by a constant $C_2$.
Therefore, $\sum_{t=1}^T \alpha_t \|\mathbf{d}_t\|^2 \le C_2 < \infty$. Given that $\sum \alpha \to \infty$ (for standard decay rates), it implies that $\liminf_{t \to \infty} \|\mathbf{d}_t\|^2$ must be 0.
Specifically, for standard choices of $\alpha$, we have:
\begin{equation}
    \min_{t=1,\dots,T} \|\mathbf{d}_t\|^2 \le \frac{C_2}{\sum_{t=1}^T \alpha} \xrightarrow{T \to \infty} 0.
\end{equation}
This proves that the algorithm converges to a point where the update direction $\mathbf{d}_t$ vanishes. Recall that $\mathbf{d}_t$ is the gradient of the Lagrangian. $\mathbf{d}_t = 0$ implies the KKT condition for stationarity is satisfied for the constrained optimization problem.
\end{proof}

\paragraph{Remark on Hyperparameters.}
The theoretical analysis above assumes diminishing step sizes $\alpha, \beta$ to guarantee asymptotic convergence. In our practical implementation (Section~\ref{sec: Implementation Details}), we adopt fixed hyperparameters ($\alpha=0.001$, $\beta=0.1$) for finite-step training. As indicated by Theorem \ref{thm:dynamic_regret}, the stationary error is bounded by terms related to the step sizes. A small, fixed step size effectively maintains the system within a small neighborhood of the optimal Pareto front, balancing convergence speed and stability in the non-convex landscape of diffusion models.

\section{Additional Implementation Details and Results}
\label{appendix: Additional Experimental Details and Results}

\subsection{Hyperparameter settings for $\varepsilon$ across experiments}
The tolerance parameter $\varepsilon$ controls how tightly the preservation constraint is enforced and directly determines the trade-off between erasing the target concept and maintaining the model's overall utility. Small values of $\varepsilon$ restrict the update direction and prioritize preservation, which is preferable for fragile concepts or datasets with high semantic entanglement. Larger values relax the constraint and enable more aggressive erasure at the cost of controlled performance drift. In all our experiments we find that $\varepsilon$ is robust within a moderate range (typically $10^{-3}$–$10^{-1}$), and does not require fine-grained tuning. We summarize the $\varepsilon$ values used for different experimental settings in Table~\ref{tab: varepsilon values used for different experimental settings}.

\begin{table}[t]
\caption{Recommended $\varepsilon$ values used for different settings in our experiments}
\label{tab: varepsilon values used for different experimental settings}
\centering
\begin{tabular}{lcl}
\hline
\textsc{Experiment / Setting} & \textsc{Recommended} $\varepsilon$ & \textsc{Rationale} \\ \hline
Nudity, Violence & $\varepsilon = 1\times 10^{-3}$ & Sensitive concepts, require tight preservation \\
Artistic Style & $\varepsilon = 1\times 10^{-3}$ & Global style is easy to remove, require tight preservation \\
Entity & $\varepsilon = 1\times 10^{-2}$ & Concrete objects harder to disentangle \\
Abstract Concepts & $\varepsilon = 1\times 10^{-2}$ & Abstract priors harder to disentangle \\
Celebrity Identity & $\varepsilon = 1\times 10^{-2}$ & Identity preservation more robust to slack \\
\begin{tabular}[c]{@{}l@{}}Downstream editability / utility \\ (post-erase LoRA use)\end{tabular} & $\varepsilon = 5\times 10^{-3}$ & Maintain editability \& style generalization \\ \hline
\end{tabular}
\end{table}

\subsection{Complete list of Entity, Artistic Style, Abstraction}
\label{appendix: Complete list of Entity, Artistic Style, Abstraction}
To test the generalization and effectiveness of our Z-Erase, we build a concept list at three levels: from the concrete objects to the global artistic style and abstract concepts. The full list used in our experiments is presented in Table~\ref{tab: Complete list of concepts of Entity, Artistic Style, and Abstraction used in our experiment}.
\begin{table}[t]
\centering
\caption{Complete list of concepts of Entity, Artistic Style, and Abstraction used in our experiment.}
\label{tab: Complete list of concepts of Entity, Artistic Style, and Abstraction used in our experiment}
\begin{tabular}{cccm{0.3\textwidth}}
\hline
\textbf{Category} & \textbf{\# Number} & \textbf{Prompt template} & \textbf{Concepts} \\ \hline
\textsc{Entity} & 10 & ‘A photo of [\textit{Entity}]’ & ‘\textit{Church}’, ‘\textit{Soccer}’, ‘\textit{Car}’, ‘\textit{Airplane}’, ‘\textit{Tower}’, ‘\textit{Pencil}’, ‘\textit{Pizza}’, ‘\textit{Shoes}’, ‘\textit{Cat}’, ‘\textit{Clownfish}’ \\ \hline
\textsc{Artistic Style} & 10 & ‘An art in the style of [\textit{Artistic Style}]’ & ‘\textit{Pablo Picasso}’, ‘\textit{Salvador Dali}’, ‘\textit{Claude Monet}’, ‘\textit{Vincent Van Gogh}’, ‘\textit{Rembrandt van Rijn}’, ‘\textit{Frida Kahlo}’, ‘\textit{Edvard Munch}’, ‘\textit{Leonardo da Vinci}’, ‘\textit{Nicholas Roerich}’, ‘\textit{Gustave Dore}’ \\ \hline
\textsc{Abstraction} & 10 & ‘A close scene of [\textit{Abstraction}]’ & ‘\textit{Shake Hand}’, ‘\textit{Kiss}’, ‘\textit{Hug}’, ‘\textit{Time}’, ‘\textit{Explosion}’, ‘\textit{Green}’, ‘\textit{Origami}’, ‘\textit{Fried}’, ‘\textit{Amidst}’, ‘\textit{Sad}’ \\ \hline
\end{tabular}
\end{table}

Due to space limits, we provide the main metrics in the main text. Here, we present the full results of miscellaneousness erasure in Table~\ref{tab: appendix full results of Entity, Artistic Style, and Abstraction}.
\begin{table*}[t]
\centering
\caption{\textbf{Evaluation on erasing the specific category}: \textbf{Entity} (\textit{e.g.,} church), \textbf{Artistic Style} (\textit{e.g.,} Claude Monet) and \textbf{Abstraction} (\textit{e.g.,} color). CLIP classification accuracy are reported for each category on the erased category itself (\textsc{Acc$_e$}, efficacy), and the remaining unaffected categories (\textsc{Acc$_{ir}$}, specificity). The combined performance is summarized by H$_a$ = \textsc{Acc$_{ir}$} $-$ \textsc{Acc$_e$}, where higher values indicate a better balance between erasure and preservation.}
\label{tab: appendix full results of Entity, Artistic Style, and Abstraction}
\begin{tabular}{@{}lccccccccc@{}}
\toprule
 & \multicolumn{3}{c}{\textsc{Entity}} & \multicolumn{3}{c}{\textsc{Artistic Style}} & \multicolumn{3}{c}{\textsc{Abstraction}} \\ \cmidrule(l){2-4}\cmidrule(l){5-7}\cmidrule(l){8-10} 
\multirow{-2}{*}{\textsc{Method}} & \textsc{Acc$_e$}~$\downarrow$ & \textsc{Acc$_{ir}$}~$\uparrow$ & \cellcolor{gray!8}\textsc{H$_a$}~$\uparrow$ & \textsc{Acc$_e$}~$\downarrow$ & \textsc{Acc$_{ir}$}~$\uparrow$ & \cellcolor{gray!8}\textsc{H$_a$}~$\uparrow$ & \textsc{Acc$_e$}~$\downarrow$ & \textsc{Acc$_{ir}$}~$\uparrow$ & \cellcolor{gray!8}\textsc{H$_a$}~$\uparrow$ \\ \midrule
AC (Model-Based) & 24.6 & 26.6 & \cellcolor{gray!8}2.0 & 28.6 & 24.4 & \cellcolor{gray!8}-4.2 & 26.9 & 25.8 & \cellcolor{gray!8}-1.1 \\
AC (Noise-Based) & 25.5 & 26.4 & \cellcolor{gray!8}0.9 & 29.3 & 23.8 & \cellcolor{gray!8}-5.5 & 27.3 & 26.2 & \cellcolor{gray!8}-1.1 \\
ESD-x & 23.3 & 28.6 & \cellcolor{gray!8}5.3 & 28.5 & 24.7 & \cellcolor{gray!8}-3.8 & 26.5 & 25.7 & \cellcolor{gray!8}-0.8 \\
ESD-u & 21.8 & 27.8 & \cellcolor{gray!8}6.0 & 27.8 & 24.5 & \cellcolor{gray!8}-3.3 & 25.7 & 25.3 & \cellcolor{gray!8}-0.4 \\
MACE & 23.2 & 27.5 & \cellcolor{gray!8}4.3 & 26.6 & 25.8 & \cellcolor{gray!8}-0.8 & 25.1 & {\ul 28.5} & \cellcolor{gray!8}{\ul 3.4} \\
UCE & {\ul 20.1} & 20.6 & \cellcolor{gray!8}0.5 & \textbf{21.7} & 19.9 & \cellcolor{gray!8}-1.8 & \textbf{19.7} & 20.3 & \cellcolor{gray!8}0.6 \\
EraseAnything & 24.1 & 27.6 & \cellcolor{gray!8}3.5 & {\ul 25.9} & {\ul 26.1} & \cellcolor{gray!8}{\ul 0.2} & 25.6 & \textbf{29.0} & \cellcolor{gray!8}{\ul 3.4} \\
DiT-Localization & \textbf{17.5} & {\ul 28.7} & \cellcolor{gray!8}\textbf{11.2} & 31.2 & 23.3 & \cellcolor{gray!8}-7.9 & 28.6 & 25.8 & \cellcolor{gray!8}-2.8 \\
\rowcolor{blue!10} 
Ours & 23.1 & \textbf{28.9} & {\ul 7.4} & 26.1 & \textbf{26.9} & \textbf{0.8} & {\ul 24.3} & \textbf{29.0} & \textbf{4.7} \\ \midrule
Z-Image Turbo & 24.7 & 32.1 & \cellcolor{gray!8}- & 33.6 & 27.6 & \cellcolor{gray!8}- & 30.5 & 29.2 & \cellcolor{gray!8}- \\ \bottomrule
\end{tabular}
\end{table*}

\subsection{Complete list of Celebrities}
The celebrities used in our experiments are illustrated in Table~\ref{tab:Complete list of celebrities used in our experiment}. It's noteworthy that not arbitrary celebrities can be faithfully synthesized by Z-Image Turbo, after manually comparing the synthesized famous people with its prompt and add some comic characters, we keep 50 for each group.
\begin{table}[t]
\centering
\caption{Complete list of celebrities used in our experiment.}
\label{tab:Complete list of celebrities used in our experiment}
\begin{tabular}{cccm{0.6\textwidth}}
\hline
\multicolumn{1}{c}{\textbf{Category}} & \multicolumn{1}{c}{\textbf{\# Number}} & \multicolumn{1}{c}{\begin{tabular}[c]{@{}c@{}}\textbf{Mapping}\\ \textbf{Concept}\end{tabular}} & \multicolumn{1}{c}{\textbf{Celebrities}} \\ \hline
Erasure Group & 50 & ‘\textit{A person}’ & ‘\textit{Adele}’, ‘\textit{Albert Camus}’, ‘\textit{Angelina Jolie}’, ‘\textit{Arnold Schwarzenegger}’, ‘\textit{Audrey Hepburn}’, ‘\textit{Barack Obama}’, ‘\textit{Beyoncé}’, ‘\textit{Brad Pitt}’, ‘\textit{Bruce Lee}’, ‘\textit{Chris Evans}’, ‘\textit{Christiano Ronaldo}’, ‘\textit{David Beckham}’, ‘\textit{Dr Dre}’,  ‘\textit{Drake}’, ‘\textit{Elizabeth Taylor}’, ‘\textit{Eminem}’, ‘\textit{Elon Musk}’,  ‘\textit{Emma Watson}’, ‘\textit{Frida Kahlo}’, ‘\textit{Hugh Jackman}’, ‘\textit{Hillary Clinton}’, ‘\textit{Isaac Newton}’, ‘\textit{Jay-Z}’, ‘\textit{Justin Bieber}’, ‘\textit{John Lennon}’, ‘\textit{Keanu Reeves}’, ‘\textit{Leonardo Dicaprio}’, ‘\textit{Mariah Carey}’, ‘\textit{Madonna}’, ‘\textit{Marlon Brando}’, ‘\textit{Mahatma Gandhi}’, ‘\textit{Mark Zuckerberg}’, ‘\textit{Michael Jordan}’, ‘\textit{Muhammad Ali}’, ‘\textit{Nancy Pelosi}’,‘\textit{Neil Armstrong}’, ‘\textit{Nelson Mandela}’, ‘\textit{Oprah Winfrey}’, ‘\textit{Rihanna}’, ‘\textit{Roger Federer}’, ‘\textit{Robert De Niro}’, ‘\textit{Ryan Gosling}’, ‘\textit{Scarlett Johansson}’, ‘\textit{Stan Lee}’, ‘\textit{Tiger Woods}’, ‘\textit{Timothee Chalamet}’, ‘\textit{Taylor Swift}’, ‘\textit{Tom Hardy}’, ‘\textit{William Shakespeare}’, ‘\textit{Zac Efron}’ \\ \hline
Retention Group & 50 & - & ‘\textit{Angela Merkel}’, ‘\textit{Albert Einstein}’, ‘\textit{Al Pacino}’, ‘\textit{Batman}’, ‘\textit{Babe Ruth Jr}’, ‘\textit{Ben Affleck}’, ‘\textit{Bette Midler}’, ‘\textit{Benedict Cumberbatch}’, ‘\textit{Bruce Willis}’, ‘\textit{Bruno Mars}’, ‘\textit{Donald Trump}’, ‘\textit{Doraemon}’, ‘\textit{Denzel Washington}’, ‘\textit{Ed Sheeran}’, ‘\textit{Emmanuel Macron}’, ‘\textit{Elvis Presley}’, ‘\textit{Gal Gadot}’, ‘\textit{George Clooney}’, ‘\textit{Goku}’,‘\textit{Jake Gyllenhaal}’, ‘\textit{Johnny Depp}’, ‘\textit{Karl Marx}’, ‘\textit{Kanye West}’, ‘\textit{Kim Jong Un}’, ‘\textit{Kim Kardashian}’, ‘\textit{Kung Fu Panda}’, ‘\textit{Lionel Messi}’, ‘\textit{Lady Gaga}’, ‘\textit{Martin Luther King Jr.}’, ‘\textit{Matthew McConaughey}’, ‘\textit{Morgan Freeman}’, ‘\textit{Monkey D. Luffy}’, ‘\textit{Michael Jackson}’, ‘\textit{Michael Fassbender}’, ‘\textit{Marilyn Monroe}’, ‘\textit{Naruto Uzumaki}’, ‘\textit{Nicolas Cage}’, ‘\textit{Nikola Tesla}’, ‘\textit{Optimus Prime}’, ‘\textit{Robert Downey Jr.}’, ‘\textit{Saitama}’, ‘\textit{Serena Williams}’, ‘\textit{Snow White}’, ‘\textit{Superman}’, ‘\textit{The Hulk}’, ‘\textit{Tom Cruise}’, ‘\textit{Vladimir Putin}’, ‘\textit{Warren Buffett}’, ‘\textit{Will Smith}’, ‘\textit{Wonderwoman}’\\ \hline
\end{tabular}
\end{table}

\section{User Study}
\label{appendix: User study}

A common limitation in prior work on concept erasure and attack is the reliance on pretrained detectors or classifiers as the sole evaluation criterion. While such tools offer scalability, they are often unreliable: detectors may miss subtle instances of a concept (false negatives), mistakenly flag benign patterns (false positives), or fail to distinguish between high-quality erasure and degraded artifacts. This creates a gap between machine-detected presence of a concept and human-perceived restoration. 

To overcome these shortcomings, we conduct a user study, involving 30 non-artist participants, each providing an average of 50 responses. Adhering to Z-Image's comprehensive evaluative criteria for T2I models, our user study incorporates three foundational metrics: \textbf{Image Quality}, \textbf{Prompt Adherence}, and \textbf{Output Diversity}. Together, these criteria form the evaluative baseline for benchmarking generative performance. In the specific context of concept erasure, where the objective is to suppress the target concept while allowing unrelated concepts to be faithfully rendered, this baseline is augmented with two task-oriented metrics: \textbf{Erasing Cleanliness} and \textbf{Irrelevant Preservation}.

Erasing Cleanliness quantifies the effectiveness of the erasure mechanism, assessing whether the target concept is not only removed but eliminated without residual traces or indirect stylistic leakage. Irrelevant Preservation, in contrast, measures the system's capability to retain semantic fidelity for concepts outside the erasure domain, ensuring that composition, contextual cues, and stylistic attributes remain coherent and unaffected, and the image is high-quality without artifacts.

Fig.~\ref{fig:user_study_erasing_clean_pre} and \ref{fig:user_study_3} illustrate the user study interface, which is deliberately designed to support smooth participant interaction and minimize cognitive overhead. During the evaluation, participants are shown image sets containing either 4 or 3 samples produced by multiple anonymized methods. They are asked to rate each method across the five aforementioned metrics using a consistent scoring protocol. The collected ratings are subsequently aggregated and visualized as a pentagonal performance profile (Fig.~\ref{fig:user_study} in the main paper), providing an intuitive and comparative view of each method's strengths and weaknesses.

This representation offers both researchers and practitioners a structured and interpretable summary of model capabilities, enabling fine-grained insights into the trade-offs inherent in concept erasure and guiding future methodological refinements in T2I generation.

\begin{figure}[t]
	\centering
	\includegraphics[width=1\linewidth]{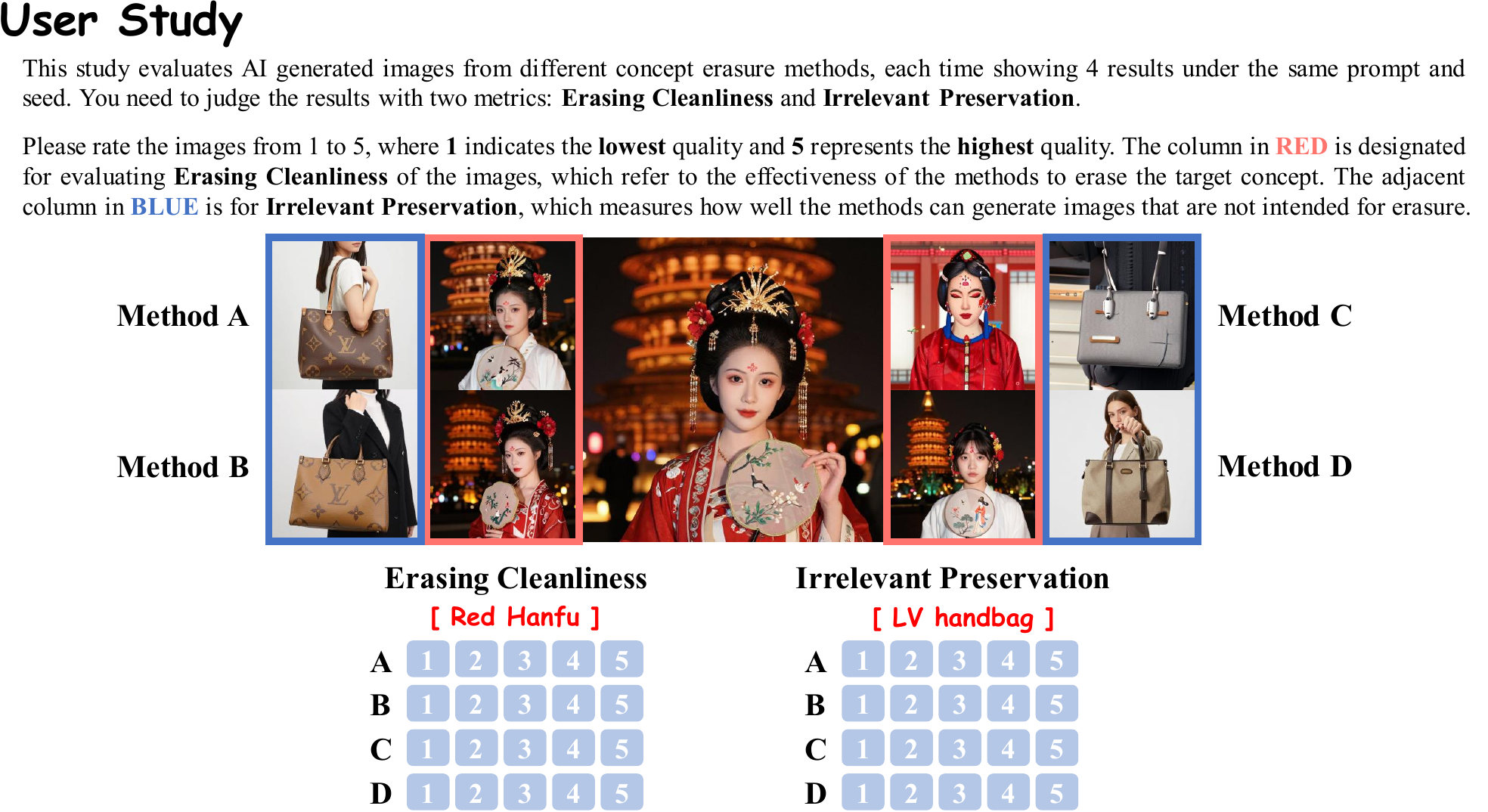}
    \caption{\textbf{User study interface on Erasing Cleanliness and Irrelevant Preservation.}}
    \label{fig:user_study_erasing_clean_pre}
    \vspace{-5pt}
\end{figure}

\section{Multi-Concept Erasure}
\label{appendix: Multi-Concept Erasure}

In this section, we elaborate on the strategy employed for erasing multiple concepts simultaneously using Z-Erase. While the primary experiments in the main text focus on single-concept erasure to rigorously validate our Lagrangian-Guided Adaptive Erasure Modulation, practical safety alignment often requires removing multiple undesirable concepts at once.

\textbf{Linear Synergy of Concept-Erased LoRAs.}
Our approach to multi-concept erasure leverages the modularity of Low-Rank Adaptations (LoRAs). Since Z-Erase confines the parameter updates to a specific subspace via Stream Disentangled Concept Erasure Framework, the optimization trajectories for different concepts remain relatively orthogonal in the parameter space. This structural decoupling allows us to merge independent erasure modules into a cohesive single entity without retraining from scratch.
\begin{figure}[H]
	\centering
	\includegraphics[width=1\linewidth]{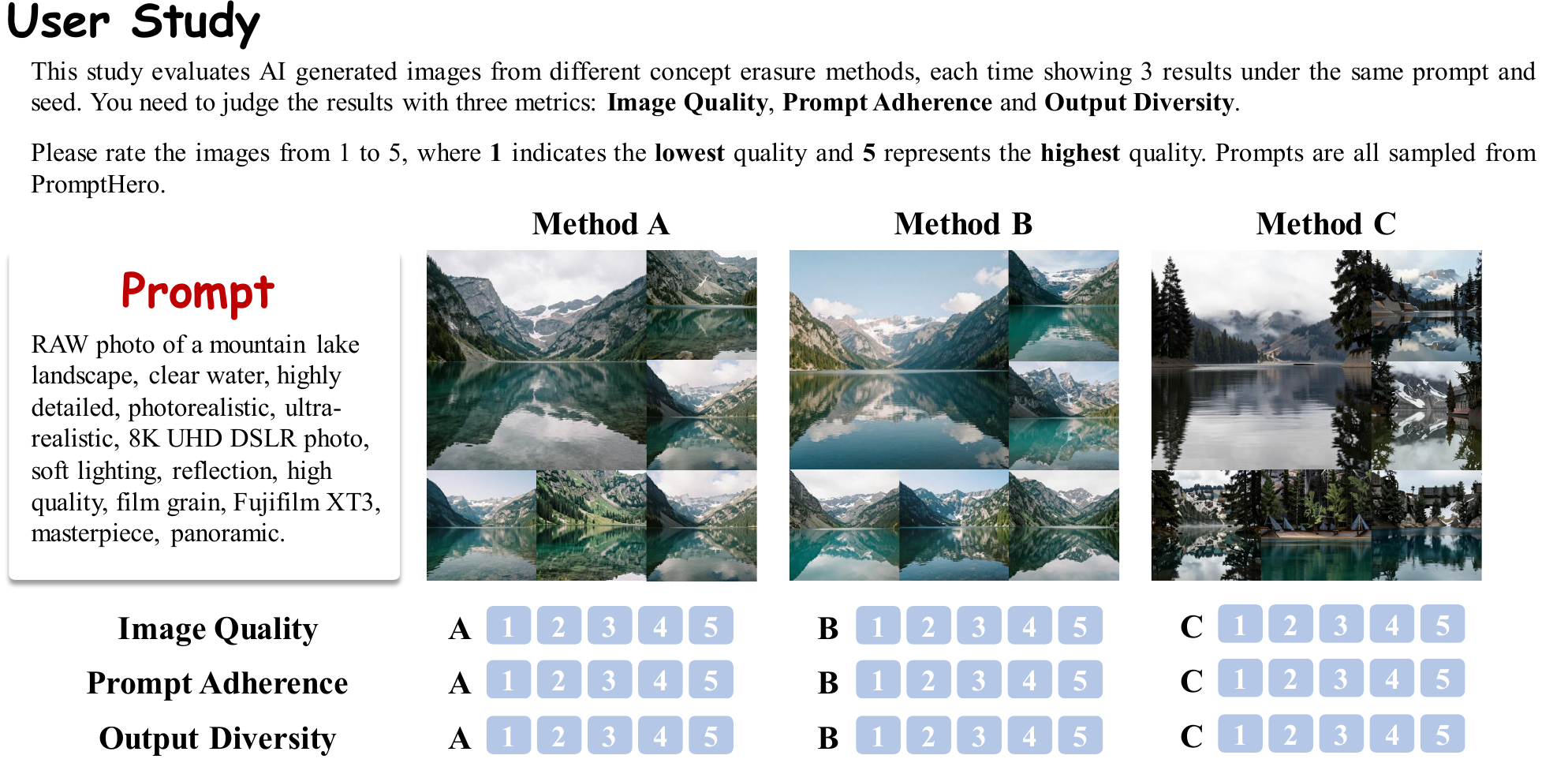}
    \caption{\textbf{User study interface on Image Quality, Prompt Adherence and Output Diversity.}}
    \label{fig:user_study_3}
\end{figure}
To unify multiple erasure objectives, we employ a linear interpolation strategy in the parameter space. Let $\{\Delta\theta_1, \Delta\theta_2, \dots, \Delta\theta_N\}$ represent a set of $N$ independently trained LoRA modules, each targeting a specific concept $c_i \in \{c_1, \dots, c_N\}$. The combined multi-concept LoRA, denoted as $\Delta\theta_{\text{mul}}$, is computed as a weighted average:

\begin{equation}
    \Delta\theta_{\text{mul}} = \sum_{i=1}^{N} w_i \Delta\theta_i,
\end{equation}

where $w_i$ represents the merging weight for the $i$-th concept. We adopt a normalized weight blending strategy where $w_i = \frac{1}{N}$. This normalization is crucial; it ensures that the magnitude of the combined perturbation remains within the safe optimization manifold established by our preservation constraints. Simply summing the weights (i.e., $\sum w_i > 1$) often leads to concept ``over-erasure" or catastrophic forgetting of general knowledge, as the aggregated parameter shift pushes the model too far from its pre-trained state.

By averaging the weights, Z-Erase effectively constructs a superposition of distinct erasure directions. As demonstrated in Fig.~\ref{fig:multi_concept_erasure}, this simple yet effective strategy successfully removes multiple target concepts simultaneously. The generated images confirm that the model ceases to produce the target concepts while maintaining high fidelity for the remaining unrelated context, validating the linearity and composability of the LoRA modules learned by Z-Erase.

\section{Additional Results and Analysis}
\label{appendix: Additional Results and Analysis hunyuan}

\subsection{Evaluation on HunyuanImage-3.0}

Due to space constraints in the main text, our primary evaluation focused on Z-Image Turbo. To further validate the versatility and robustness of Z-Erase across different single-stream architectures, we provide additional experimental results on HunyuanImage-3.0, another leading open-source model in this domain. We conduct tests using the I2P dataset~\cite{sld} for nudity erasure and our curated Entity, Artistic Style, and Abstraction datasets (Table~\ref{tab: Complete list of concepts of Entity, Artistic Style, and Abstraction used in our experiment}) for concept erasure, maintaining the exact same hyperparameter configurations as detailed in the main text.

As shown in Table~\ref{tab: Hunyuan_results}, Z-Erase demonstrates consistent superiority on the HunyuanImage-3.0 model. Our method achieves the lowest total nudity detection count while maintaining FID and CLIP scores highly comparable to the original model, significantly outperforming baselines like EraseAnything and DiT-Localization. In miscellaneousness concept erasure tasks (Entity, Style, Abstraction), Z-Erase consistently attains high balance scores (\textsc{H$_a$}), indicating that we do not sacrifice image quality or irrelevant preservation for erasure success. Notably, our advantage becomes more pronounced when dealing with abstract and global concepts. Since these concepts are often entangled deeply within the high-level semantic layers of single-stream models, prior methods struggle to isolate them without damaging the model's generation capability. In contrast, Z-Erase effectively navigates this complexity. Visual comparisons on HunyuanImage-3.0 are provided in Fig.~\ref{fig:pablo_picasso} and Fig.~\ref{fig:vangogh}, showing that Z-Erase cleanly removes target concepts without introducing the artifacts or semantic drifts.

\begin{table}[t]
\caption{\textbf{Additional quantitative results on HunyuanImage-3.0 model.}}
\label{tab: Hunyuan_results}
\centering
\setlength{\tabcolsep}{2.5pt}
\resizebox{\linewidth}{!}{
\begin{tabular}{@{}lcccccccccccc@{}}
\toprule
 & \multicolumn{3}{c}{\textsc{Nudity}} & \multicolumn{3}{c}{\textsc{Entity}} & \multicolumn{3}{c}{\textsc{Artistic Style}} & \multicolumn{3}{c}{\textsc{Abstraction}} \\ \cmidrule(l){2-4} \cmidrule(l){5-7} \cmidrule(l){8-10} \cmidrule(l){11-13} 
\multirow{-2}{*}{\textsc{Method}} & Total~$\downarrow$ & \cellcolor{gray!8}FID~$\downarrow$ & \cellcolor{gray!8}CLIP~$\uparrow$ & \textsc{Acc$_e$}~$\downarrow$ & \textsc{Acc$_{ir}$}~$\uparrow$ & \cellcolor{gray!8}\textsc{H$_a$}~$\uparrow$ & \textsc{Acc$_e$}~$\downarrow$ & \textsc{Acc$_{ir}$}~$\uparrow$ & \cellcolor{gray!8}\textsc{H$_a$}~$\uparrow$ & \textsc{Acc$_e$}~$\downarrow$ & \textsc{Acc$_{ir}$}~$\uparrow$ & \cellcolor{gray!8}\textsc{H$_a$}~$\uparrow$ \\ \midrule
DiT-Localization~\cite{localizing-knowledge-diffusion-transformers} & 306 & \cellcolor{gray!8}36.7 & \cellcolor{gray!8}26.3 & \textbf{16.8} & 23.4 & \cellcolor{gray!8}\textbf{6.6} & 30.7 & 23.1 & \cellcolor{gray!8}{\ul -7.6} & 27.9 & 26.4 & \cellcolor{gray!8}-1.5 \\
EraseAnything~\cite{eraseanything} & {\ul 231} & \cellcolor{gray!8}{\ul 24.0} & \cellcolor{gray!8}{\ul 29.9} & 24.6 & {\ul 27.5} & \cellcolor{gray!8}2.9 & \textbf{25.6} & {\ul 26.4} & \cellcolor{gray!8}\textbf{0.8} & {\ul 25.3} & \textbf{28.9} & \cellcolor{gray!8}{\ul 3.6} \\
\rowcolor{blue!10} 
Ours & \textbf{145} & \textbf{23.6} & \textbf{30.8} & {\ul 24.3} & \textbf{27.9} & {\ul 3.6} & {\ul 26.2} & \textbf{27.0} & \textbf{0.8} & \textbf{24.9} & {\ul 28.8} & \textbf{3.9} \\ \midrule
HunyuanImage-3.0 & 628 & \cellcolor{gray!8}23.1 & \cellcolor{gray!8}31.2 & 28.6 & 31.5 & \cellcolor{gray!8}- & 32.1 & 28.3 & \cellcolor{gray!8}- & 29.6 & 29.8 & \cellcolor{gray!8}- \\ \bottomrule
\end{tabular}
}
\end{table}

\begin{figure}[t]
	\centering
	\includegraphics[width=1\linewidth]{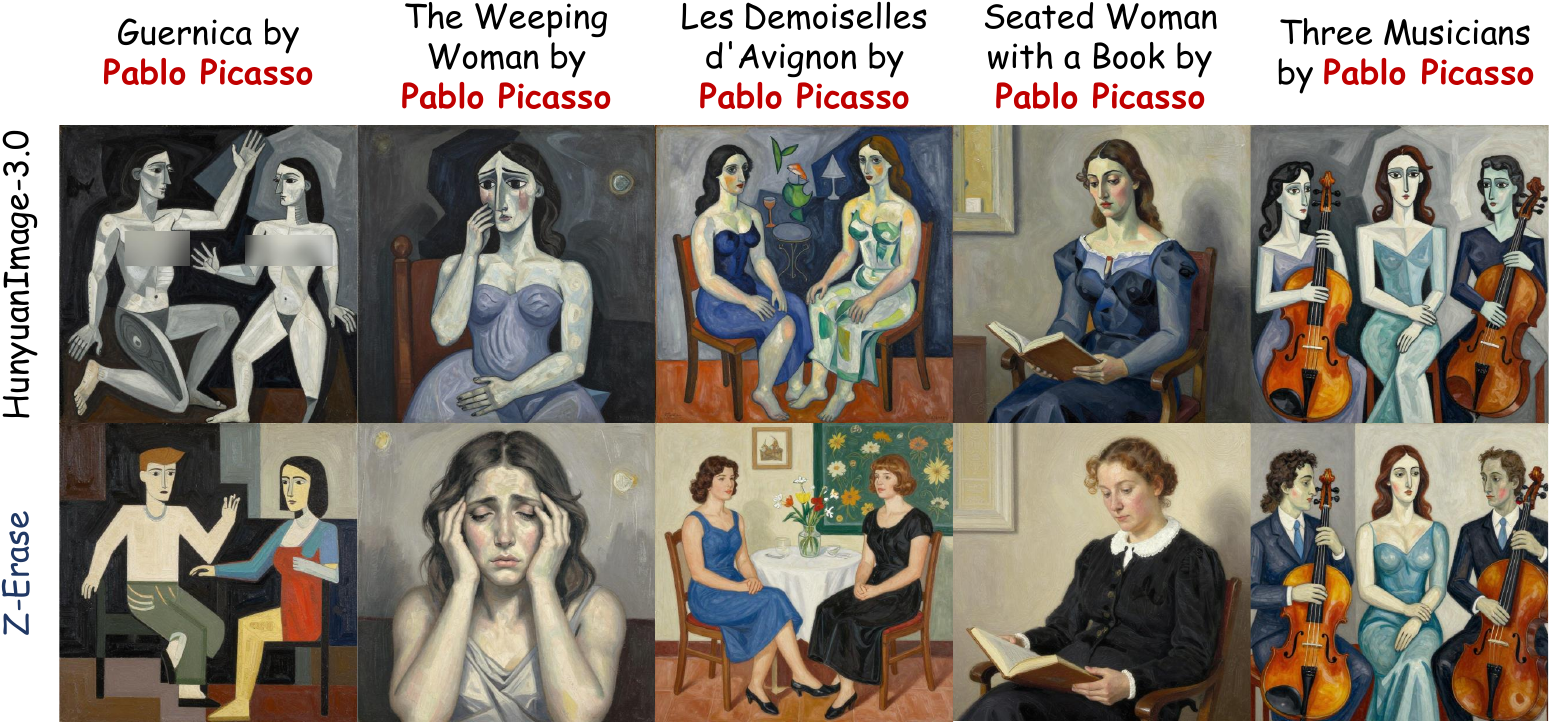}
    \vspace{5pt}
    \caption{\textbf{Visualizations on erasing the concept ``Pablo Picasso"}. With the retention prompt kept as ``painting" in LoRA training, our method preserves the overall painting modality while effectively removing Picasso's signature abstract-geometric style. The generated results remain high-quality and faithful, exhibiting minimal semantic or compositional shift.}
    \label{fig:pablo_picasso}
\end{figure}

\subsection{Detailed Analysis of Baselines and Failure Modes}

In this section, we provide a deeper analysis of why existing methods struggle in the single-stream paradigm, even after being stabilized by our proposed mechanism.

It is important to note that all baselines evaluated in this work were adapted using our \textbf{Stream Disentangled Concept Erasure Framework}. Without this structural decoupling, naive fine-tuning will lead to immediate generation collapse (as shown in Fig.~\ref{fig:teaser}). However, even with this safe optimization subspace, prior methods frequently fall into one of two failure modes: \textbf{under-erasure} or \textbf{over-erasure}.
\begin{enumerate}
    \item \textbf{Under-erasure}: The target concept is not fully removed (\textit{e.g.}, subtle traces remain) or reappears under slight prompt perturbations or adversarial attacks.
    \item \textbf{Over-erasure}: The model's general utility is damaged, leading to severe visual artifacts, ``fried" textures, or the inability to generate unrelated concepts.
\end{enumerate}

\begin{figure}[H]
	\centering
	\includegraphics[width=1\linewidth]{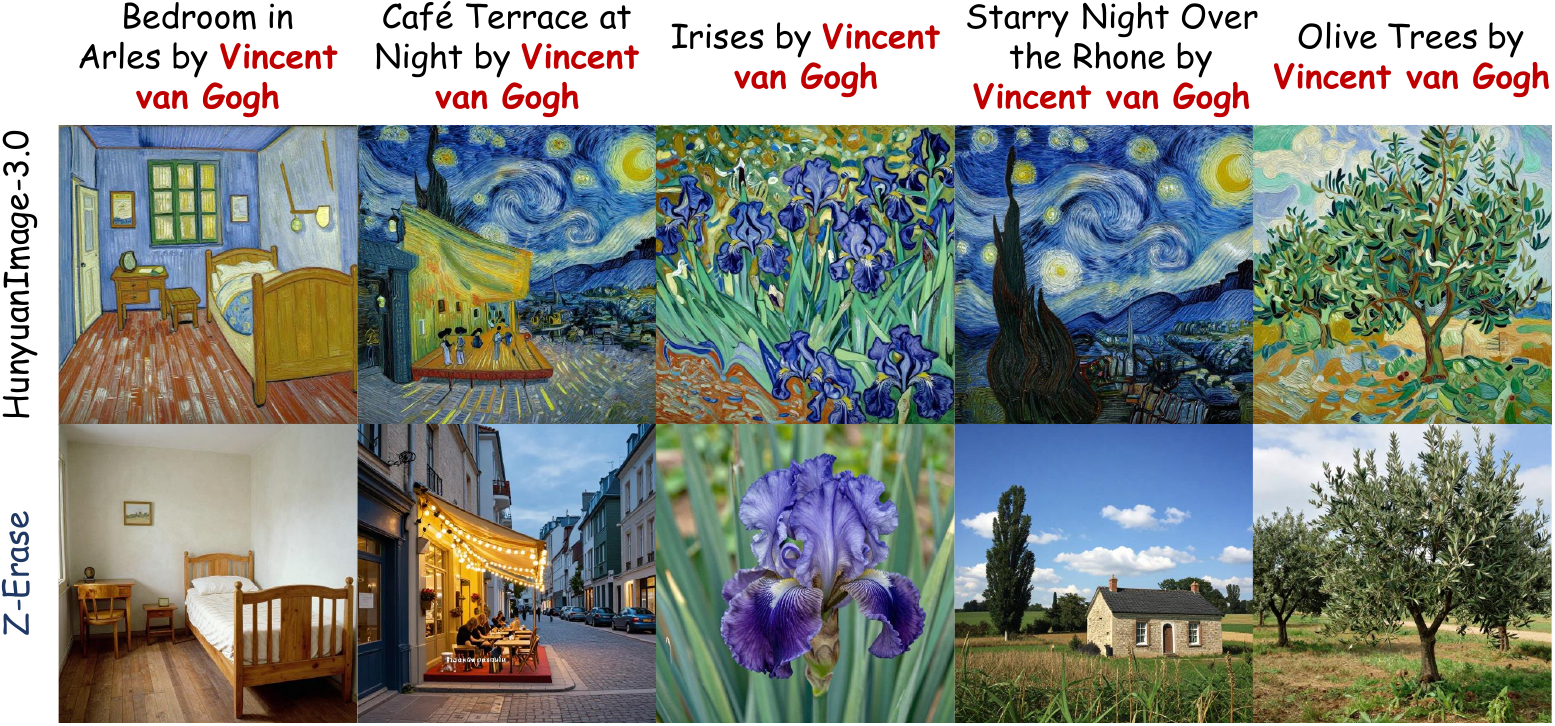}
    \vspace{5pt}
    \caption{\textbf{Visualizations on erasing the concept ``Vincent van Gogh"}. With the retention prompt kept as ``realistic" in LoRA training, our method generates realistic scenarios while effectively removing van Gogh's signature abstract style. The generated results remain high-quality and faithful, exhibiting minimal semantic or compositional shift.}
    \label{fig:vangogh}
\end{figure}

This binary outcome is exacerbated by the highly entangled nature of single-stream transformers. Traditional optimization objectives lack the granularity to navigate this sensitive landscape. We analyze two recent representative strong baselines to illustrate this:

\textbf{Analysis of EraseAnything}~\cite{eraseanything}.
EraseAnything employs a bi-level optimization strategy to find a balance between erasing and preserving. While theoretically sound for dual-stream models (like Flux), we find its granularity is too coarse for single-stream architectures. It struggles to find the optimal trade-off point on the Pareto front. In practice, it often settles for a conservative solution (leading to under-erasure) or pushes too hard to erase, causing semantic drift in the preservation set. It lacks a dynamic ``brake" mechanism to stop erasure precisely when utility starts to degrade.

\textbf{Analysis of DiT-Localization}~\cite{localizing-knowledge-diffusion-transformers}.
DiT-Localization operates by calculating attention scores for the target concept across layers and then replacing the text embedding with a null token in the Top-K most responsive layers. As seen in Table~\ref{tab: Hunyuan_results}, this method is indeed powerful at erasure (low nudity count). However, it suffers significantly in utility preservation (high \textsc{Acc$_{ir}$}, lower \textsc{H$_a$}). In single-stream models, layers do not have discrete roles (\textit{e.g.}, ``just style" or ``just object"). An attention layer responsive to ``nudity" might also be responsible for generating ``skin texture" or ``human anatomy" in general. By bluntly nulling out these layers, DiT-Localization acts like a sledgehammer: it removes the target but often breaks the generation of unrelated concepts that rely on those same shared layers.

These observations underscore the necessity of our \textbf{Lagrangian-Guided Adaptive Erasure Modulation}. Instead of a static optimization target or a heuristic layer-pruning strategy, Z-Erase dynamically negotiates the trade-off at every step. This allows it to act as a scalpel, surgically removing the target concept deep within the entangled weights while strictly respecting the preservation boundary.

\subsection{Additional ablation study results}

In this section, we provide further insights into the core components of Z-Erase through visual ablations. These qualitative results serve to reinforce the quantitative findings discussed in the main text, offering a clearer window into how our method navigates the complex optimization landscape of single-stream models.

\textbf{Visualization of the Preservation Constraint $\varepsilon$.}
While our main text quantifies the impact of $\varepsilon$ on the I2P dataset, visualizing its effect on specific identity erasure offers a more intuitive understanding of the erasure-preservation trade-off. In Fig.~\ref{fig:ablation_epsilon_celeb}, we focus on two distinct celebrity concepts: \emph{Leonardo DiCaprio} and \emph{Taylor Swift}. As we traverse the value of $\varepsilon$ from $1e-4$ to $1$, we observe a monotonic and controllable shift in the model's behavior. At extremely tight constraints, the model prioritize preservation too heavily, resulting in under-erasure where the identities remain virtually unchanged. Conversely, loosening the constraint too much leads to aggressive erasure that introduces severe visual artifacts—distorting facial features and degrading overall image quality. The ``sweet spot" (highlighted in red) represents a balanced state where the target identity is successfully removed while high-quality, photorealistic generation is maintained. This capability to monotonically tune the erasure intensity is a distinct advantage of Z-Erase, allowing users to define their own safety-utility boundaries.
\begin{figure}[t]
	\centering
	\includegraphics[width=1\linewidth]{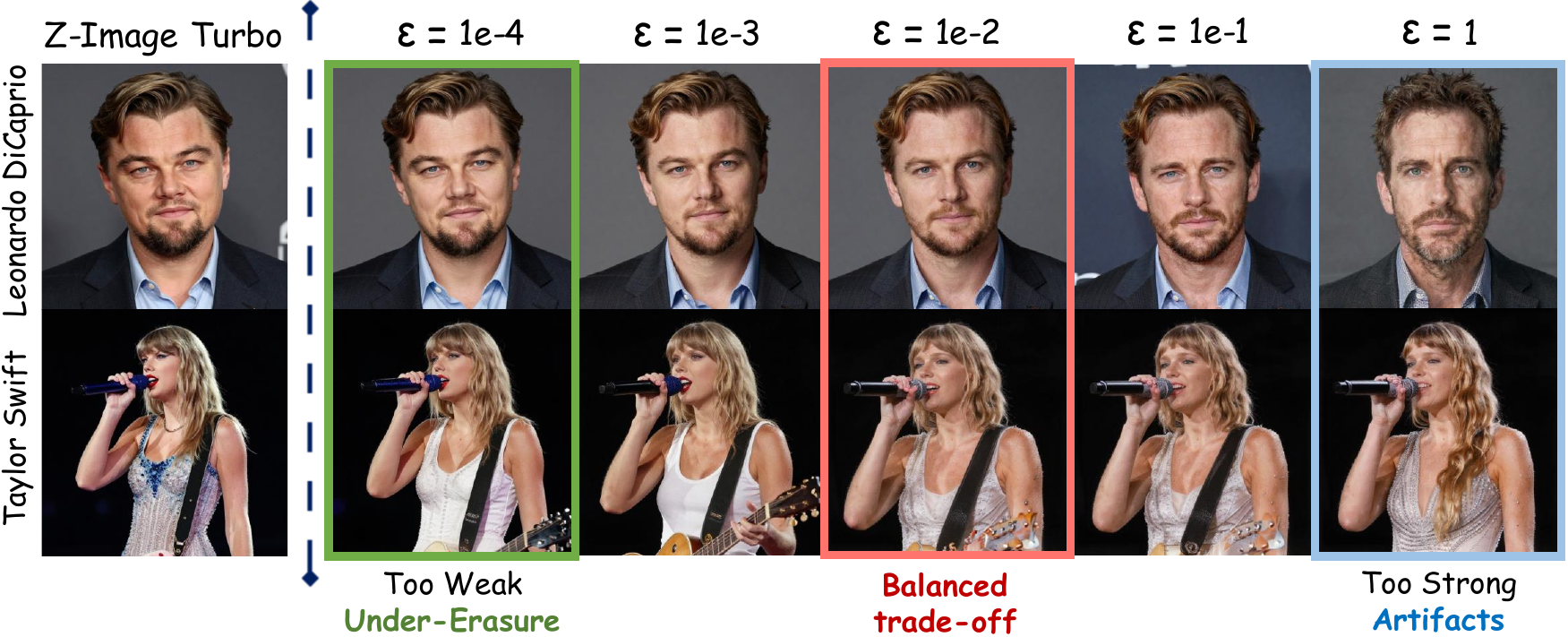}
    % \vspace{5pt}
    \caption{\textbf{Visualizations of ablation study on $\varepsilon$.}}
    \label{fig:ablation_epsilon_celeb}
\end{figure}

\textbf{Necessity of Stream Disentangled Concept Erasure Framework.}
Finding the correct parameters to tune in a single-stream architecture is not a straightforward process. The high entanglement between text and image processing means that naive intervention strategies almost invariably lead to generation collapse. To illustrate this difficulty and justify our comprehensive design, we demonstrate the failure modes of various alternative fine-tuning configurations in Fig.~\ref{fig:ablation_nude}.

We experimented with numerous combinations, including tuning different subsets of projection weights (Query, Key, Value) and restricting updates to specific layers (Top-1, Top-3, Full Layers) based on attention scores derived from DiT-Localization~\cite{localizing-knowledge-diffusion-transformers}. The results are stark: Directly fine-tuning shared projections (regardless of the combination ($Q+K+V$, $Q+K$, \textit{etc.}) or the number of layers, or context refiner) results in catastrophic noise. Even modifying a single layer (Top-1 Layer) is sufficient to destroy the image structure and lead to strong visual artifacts. Interestingly, we find that fine-tuning the Feed-Forward Network (FFN) had almost no impact on erasure, suggesting that semantic binding is primarily encoded in the attention mechanism. Ultimately, only our proposed \emph{Stream Disentangled Concept Erasure Framework}, which structurally isolates the parameter updates to the textual pathway, enables effective concept erasure without compromising visual integrity. This confirms that our architectural intervention is not merely an option, but a prerequisite for stable erasure in single-stream diffusion transformers.

\begin{figure}[t]
	\centering
	\includegraphics[width=0.8\linewidth]{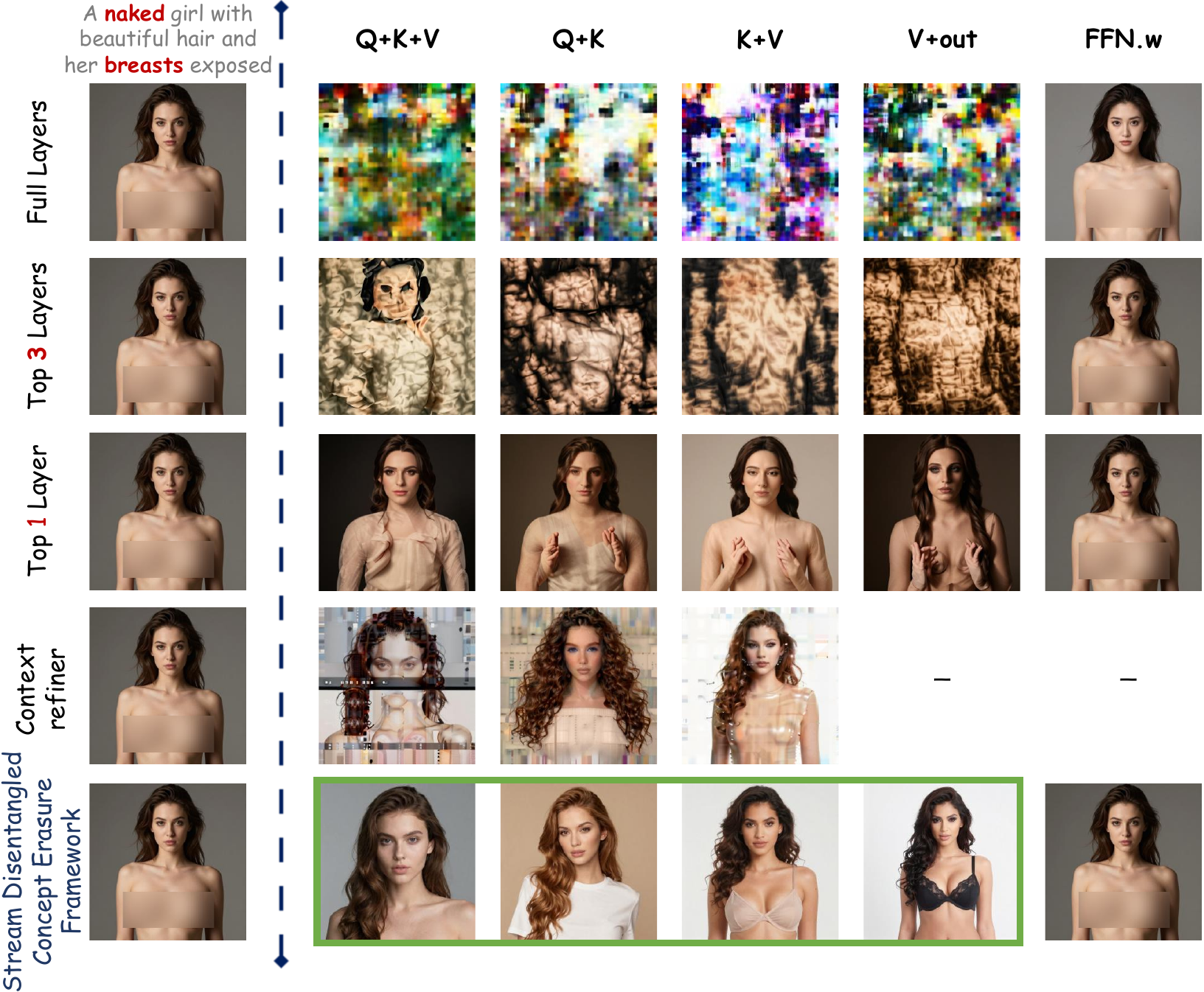}
    % \vspace{5pt}
    \caption{\textbf{Visual ablation of different fine-tuning configurations.} We compare various parameter update strategies to demonstrate the necessity of our method. ``Top K" and ``Top 1" layers are selected based on the highest attention scores for the target concept, following the logic of DiT-Localization~\cite{localizing-knowledge-diffusion-transformers}. The results reveal critical insights: fine-tuning the FFN weights has negligible impact on erasure; meanwhile, any standard LoRA fine-tuning on shared projections (even when restricted to a single layer) leads to immediate generation collapse and severe visual artifacts. Only our \emph{Stream Disentangled Concept Erasure Framework} (bottom row) achieves high-quality erasure while preserving the structural integrity of the image. Therefore, our \emph{Stream Disentangled Concept Erasure Framework} is a prerequisite for stable erasure in pure single-stream models.}
    \label{fig:ablation_nude}
\end{figure}
\begin{figure}[t]
	\centering
	\includegraphics[width=1\linewidth]{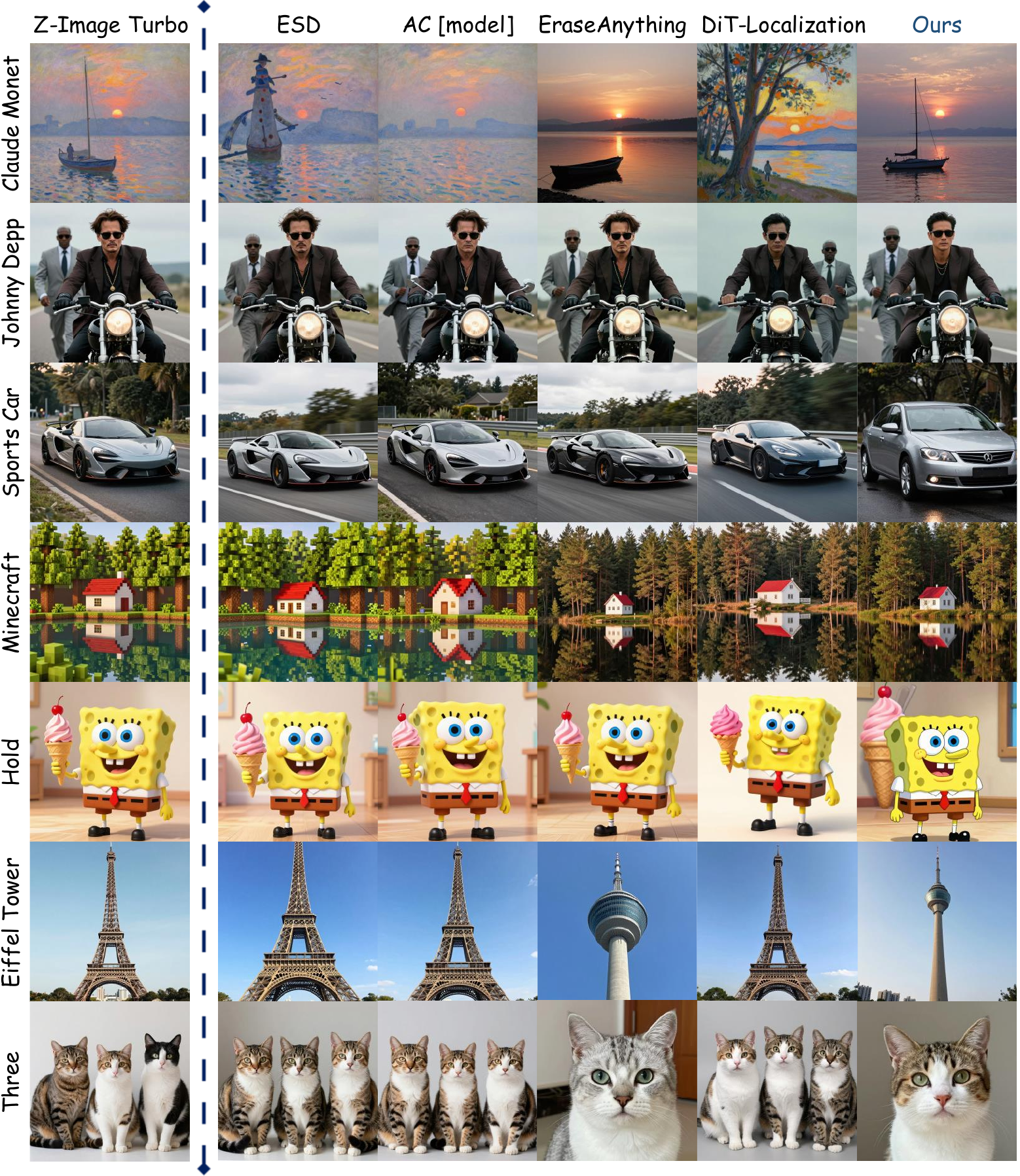}
    \caption{\textbf{Comparison with mainstream concept erasing methods.} Here, ESD, AC and EraseAnything are adapted with our \emph{Stream Disentangled Concept Erasure Framework} to prevent generation collapse.}
    \label{fig:Comparison_with_mainstream}
\end{figure}

\begin{figure}[t]
	\centering
	\includegraphics[width=0.85\linewidth]{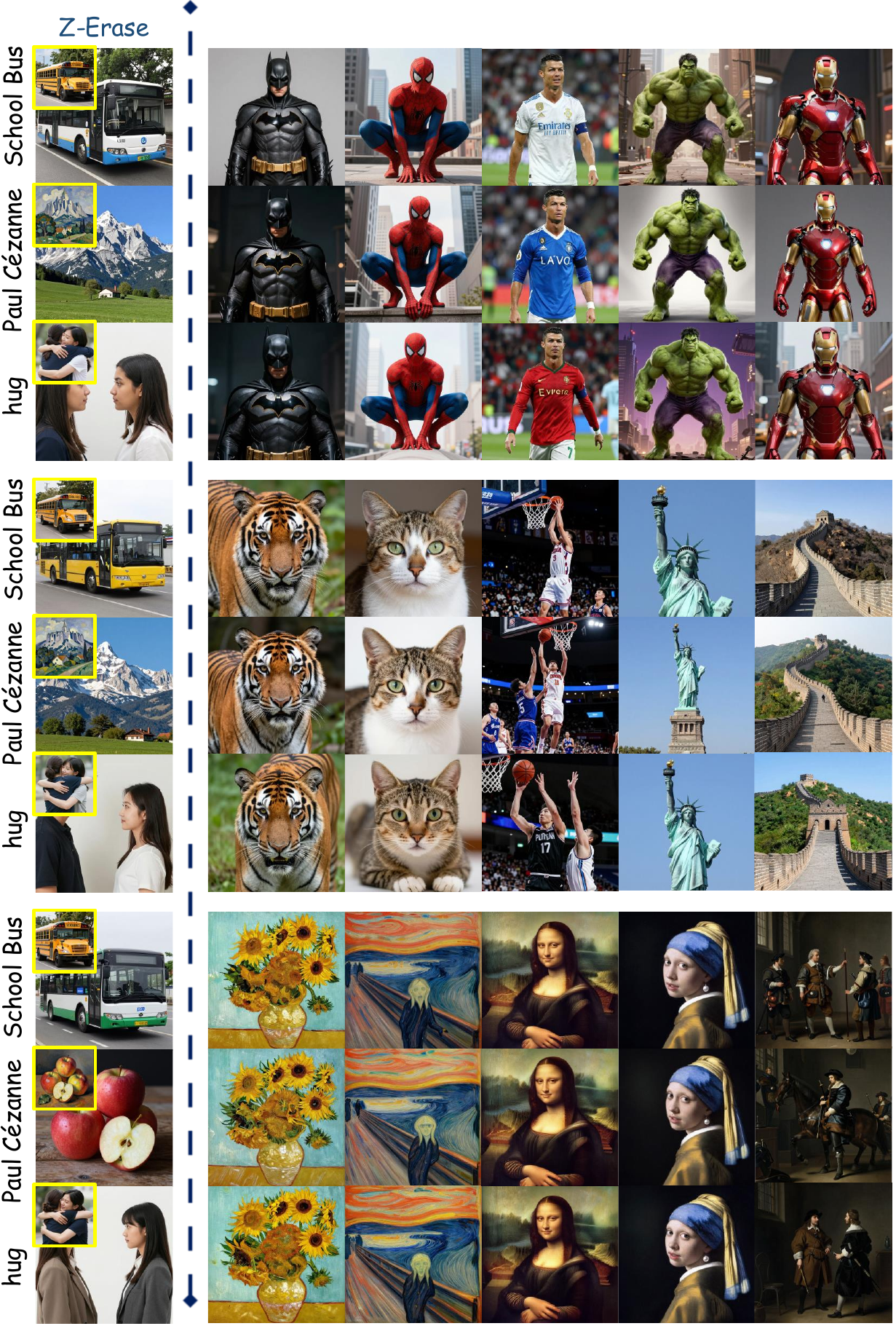}
    \caption{\textbf{Visualization on LoRA Disentanglement.} The left side of the blue dashed line delineates the original image (\textcolor{darkyellow}{yellow} box) and the erased image by our Z-Erase. The right side illustrates the result on unrelated concepts upon incorporating the LoRA associated with the erased concept. Top rows: \textit{Celebrity}; Mid rows: \textit{Entity}; Last rows: \textit{Art}. }
    \label{fig:Visualization_LoRA_Disentanglement}
\end{figure}

\subsection{Additional Visualizations}

Here, we provide more visualizations compared with state-of-the-art erasure methods adapted by our \emph{Stream Disentangled Concept Erasure Framework}. As shown in \cref{fig:Comparison_with_mainstream}, our Z-Erase removes a wide range of concepts, from concrete to abstraction. Moreover, to assess the potential influence of integrating fine-tuned LoRAs into the original model, as depicted in \cref{fig:Visualization_LoRA_Disentanglement}, it can be observed that incorporating fine-tuned LoRAs for diverse concepts, \textit{i.e.} \textit{Celebrity}: \textbf{Batman, Spiderman, Cristiano Ronaldo, Hulk, Ironman}. \textit{Entity}: \textbf{Tiger, Cat, Basketball, Statue of Liberty, The Great Wall} and \textit{Art}: \textbf{Van Gogh, Edvard Munch, Leonardo da Vinci, Johannes Vermeer, Rembrandt}, does not adversely affect the original image synthesis capabilities. All above-mentioned concepts are depicted sequentially from left to right.

\section{Limitations}
Compared to naive fine-tuning, our adaptive modulation introduces a slight computational cost. This is due to the calculation of the preservation loss gradient ($\nabla \mathcal{L}_{pr}$) required for the constraint mechanism. Specifically, on a single H20 GPU, standard naive fine-tuning takes approximately 20 minutes per concept. In comparison, Z-Erase requires about 35 minutes to converge. We believe this extra time is a worthy trade-off for the significant stability and quality improvements achieved.

\section{Ethics Statement}

The rapid evolution of generative AI has brought us to the era of \emph{Single-Stream Diffusion Transformers}, a unified architecture that promises unprecedented generation quality and efficiency. However, this unification of text and image processing also introduces new risks. The deep entanglement of parameters makes these foundational models harder to control, posing new challenges for safe deployment.

Z-Erase is developed in response to this emerging challenge. Our work is guided by the following ethical principles:
\begin{enumerate}[leftmargin=*]
    \item \textbf{Proactive safety alignment}: We believe safety should not be an afterthought. As architectures evolve from U-Net to Single-Stream Transformers, safety mechanisms must evolve with them. Z-Erase ensures that the capability to remove harmful content (NSFW, violence) remains effective in this new paradigm.
    \item \textbf{Copyright protection}: Generative models often inadvertently memorize copyrighted materials. By enabling precise concept erasure, we provide model creators and community users with a tool to respect intellectual property and remove artistic styles or characters upon request.
    \item \textbf{Safety without compromise}: A truly safe foundation model should not sacrifice its capabilities. Unlike blunt erasure methods that degrade image quality or irrelevant concepts, Z-Erase is designed to strictly preserve the model's original powerful performance on unrelated tasks. We aim to build both secure and highly capable models, ensuring the next generation of T2I models serves society effectively.
\end{enumerate}

In summary, Z-Erase represents a step towards \emph{Responsible AI}. We hope this work inspires further research into the interpretability and controllability of single-stream architectures, ensuring that the next generation of foundational models serves society safely and ethically.

\end{document}